\definecolor{darkred}{RGB}{150,0,0}
\definecolor{darkgreen}{RGB}{0,150,0}
\definecolor{darkblue}{RGB}{0,0,150}
\newtheorem{assumption}{Assumption}
\newtheorem{theorem}{Theorem}[section]
\newtheorem{lemma}[theorem]{Lemma}
\newtheorem{Proposition}[theorem]{Proposition}
\newtheorem{remark}{Remark}[section]
\theoremstyle{definition}
\newtheorem{definition}{Definition}[section]
\newcommand{\norm}[1]{\left\lVert#1\right\rVert}
\title{Stage-wise Conservative Linear Bandits}
\author{
  Ahmadreza Moradipari, Christos Thrampoulidis, Mahnoosh Alizadeh   \\
  Department of Electrical and Computer Enginnering\\
  University of California, Santa Barbara\\
  \texttt{ahmadreza$\_$moradipari@ucsb.edu} \\
  }
\begin{document}

\maketitle

\begin{abstract}
We study stage-wise conservative linear stochastic bandits: an instance of bandit optimization, which accounts for (unknown) ``safety constraints" that appear in applications such as online advertising and medical trials. At each stage, the learner must choose actions that not only maximize cumulative reward across the entire time horizon, but further satisfy a linear baseline constraint that takes the form of a \emph{lower bound} on the instantaneous reward. For this problem, we present two novel algorithms, \textit{stage-wise conservative linear Thompson Sampling} (SCLTS) and \textit{stage-wise conservative linear UCB} (SCLUCB), that respect the baseline constraints and enjoy probabilistic regret bounds of order $\mathcal{O}(\sqrt{T} \log^{3/2}T)$ and $\mathcal{O}(\sqrt{T} \log T)$, respectively. Notably, the proposed algorithms can be adjusted with only minor modifications to tackle different problem variations, such as, constraints with bandit-feedback, or an unknown sequence of baseline actions.  We discuss these and other improvements over the state-of-the art. For instance, compared to existing solutions, we show that SCLTS plays the (non-optimal) baseline action at most $\mathcal{O}(\log{T})$ times (compared to $\mathcal{O}(\sqrt{T})$). Finally, we make connections to another studied form of ``safety constraints" that takes the form of an \emph{upper bound} on the instantaneous reward. While this incurs additional complexity to the learning process as the optimal action is not guaranteed to belong to the ``safe set'' at each round, we show that SCLUCB can properly adjust in this setting via a simple modification.

\end{abstract}

\section{Introduction}\label{sec:introduction}

With the growing range of applications of bandit algorithms for safety critical real-world systems, the demand for safe learning is receiving increasing attention \cite{9125942}.  In this paper, we investigate the effect of stage-wise safety  constraints on the linear stochastic bandit problem. Inspired by the earlier work of \cite{vanroy,wu2016conservative}, the type of safety constraint we consider in this paper was first introduced by \cite{khezeli2019safe}. As with the classic linear stochastic bandit problem, the learner wishes to choose a sequence of actions $x_t$ that maximize the expected reward over the horizon. However, here the  learner is also given a baseline policy that suggests an action with a guaranteed level of expected reward at each stage of the algorithm. This could be based on historical data, e.g., historical ad placement or medical treatment policies with known success rates. The safety constraint imposed on the learner requires her to ensure that  the expected reward of her chosen action at every single round be no less than a predetermined fraction of the expected reward of the action suggested by baseline policy. 
{An example that might benefit from the design of stage-wise conservative learning algorithms arises in recommender systems, where the recommender might wish to avoid   recommendations that are extremely disliked by the users at any single round.
Our proposed stage-wise conservative constraints ensures that at no round would the recommendation system cause severe dissatisfaction for the user, and the reward of action employed by the learning algorithm, if not better, should be close to that of baseline policy. Another example is in clinical trials where the effects of different therapies on patients' health are initially unknown. We can consider the baseline policy to be treatments that have been historically employed, with known effectiveness. The proposed stage-wise conservative constraint guarantees that at each stage, the learning algorithm suggests an action (a therapy) that achieves the expected reward  close to that of the baseline treatment, and as such, this experimentation does not cause harm to {\it any single patient's health}.}
To tackle this problem, \cite{khezeli2019safe} proposed a greedy algorithm called SEGE. They use the decomposition of the regret  first proposed in \cite{vanroy}, and show an upper bound of order $\mathcal{O}(\sqrt{T})$ over the number of times that the learning algorithm plays the baseline actions, overall resulting in an expected regret of $\mathcal{O}(\sqrt{T} \log T)$. For this problem, we present two algorithms, SCLTS and SCLUCB, and we provide regret bounds of order $\mathcal{O}(\sqrt{T} \log^{3/2}T)$ and $\mathcal{O}(\sqrt{T} \log T)$, respectively. As it is explained in details in Section \ref{sec:regret_analysis}, we improve the result of \cite{khezeli2019safe}, i.e., we show our proposed algorithms play the (non-optimal) baseline actions at most $\mathcal{O}(\log{T})$ times, while also relaxing a number of  assumptions made in \cite{khezeli2019safe}.  Moreover, we show that our proposed algorithms are adaptable with minor modifications to other safety-constrained variations of this problem. This includes the case where the constraint has a different unknown parameter than the reward function with bandit feedback (Section  \ref{sec:cons_bandit_feedback}), as well as   the setting where the reward of baseline action is unknown to the learner in advance (Section \ref{unkown_baseline_reward}).

\subsection{Conservative Stochastic Linear bandit (LB) Problem with Stage-wise Constraints}\label{setting}
\textbf{Linear Bandit.} The learner is given a convex and compact set of actions $\mathcal{X} \subset \mathbb{R}^d$. At each round $t$, she chooses an action $x_t$ and observes a random reward \begin{align}
    y_t = \langle x_t, \theta_{\star}\rangle   + \xi_t, \label{reward_signal}
\end{align}
where $\theta_{\star} \in \mathbb{R}^d$ is  \textit{unknown} but fixed reward parameter and $\xi_t$ is  zero-mean additive noise. We let $r_t$  be the expected reward of action $x_t$ at round $t$, i.e., $r_t := \mathbb{E}[y_t] = \langle x_t, \theta_{\star}\rangle $.   

\textbf{Baseline actions and stage-wise constraint.} We assume that the learner is given a baseline policy such that  selecting the baseline action $x_{b_t}$ at round $t$, she would receive an expected reward $r_{b_t} := \langle x_{b_t}, \theta_{\star} \rangle$. We assume that the learner knows the expected reward of the actions chosen by the baseline policy.  
We further assume that the learner's action selection rule is subject to a stage-wise conservative constraint of the form\footnote{In Section \ref{sec:cons_bandit_feedback}, we show that our results  also extend to constraints of the form $ \langle x_t, \mu_{\star} \rangle \geq (1-\alpha) q_{b_t},$ where $\mu_{\star}$ is an additional unknown parameter. In this case, we assume the learner receives additional bandit feedback on the constraint after each round.} \begin{align}
r_t =  \langle x_t, \theta_{\star}\rangle  \geq (1-\alpha) r_{b_t}, \label{cons:safety}
\end{align} that needs to be satisfied at each round $t$. In particular, constraint \eqref{cons:safety}
 guarantees that at each round $t$, the expected reward of the action chosen by the learner stays above the predefined fraction  $1-\alpha \in (0,1)$ of the baseline policy. { The parameter $\alpha$, controlling the conservatism level of the learning process, is assumed known to the learner similar to \cite{vanroy,wu2016conservative}}. At each round $t$, an action is called \textit{safe} if its expected reward is above the predetermined fraction of the baseline policy, i.e.,  $(1-\alpha) r_{b_t}$. 
 
 \begin{remark}
 It is reasonable to assume that the leaner has an accurate estimate of the  expected reward of the actions chosen by baseline policy \cite{vanroy}. However, in Section \ref{unkown_baseline_reward}, we relax this assumption, and propose an algorithm to the case where the  expected rewards of the actions chosen by baseline policy are unknown to the learner in advance. 
 \end{remark}

\textbf{Regret.} The \textit{cumulative pseudo-regret} of the learner  up to round $T$ is defined as $R(T) = \sum_{t=1}^T \langle x_{\star}, \theta_{\star} \rangle - \langle x_t, \theta_{\star} \rangle$, 
% \begin{align}
%     R(T) = \sum_{t=1}^T \langle x_{\star}, \theta_{\star} \rangle - \langle x_t, \theta_{\star} \rangle, \label{def:regret}
% \end{align} 
where $x_{\star}$ is the optimal safe action that maximizes the expected reward, \begin{align} 
x_{\star} = \arg\max_{x \in \mathcal{X}} \langle x, \theta_{\star} \rangle. 
\end{align} 
% We further assume that $x_{\star}$ is safe at all rounds. 
The learner's objective is to minimize the pseudo-regret, while respecting the stage-wise conservative constraint in \eqref{cons:safety}. For the rest of the paper, we use regret to refer to the pseudo-regret $R(T)$. 

\subsection{Previous work}

\textbf{Multi-armed Bandits.}
The multi-armed bandit (MAB) framework has been studied in sequential decision making problems under uncertainty. In particular, it captures the exploration-exploitation trade-off, where the learner needs to sequentially choose arms in order to maximize her reward over time while exploring to improve her estimate of the reward of each arm \cite{bubeck2016multi}. 
Two popular heuristics exist for MAB:  Following the \textit{optimism in face of uncertainty} (OFU) principle \cite{Auer, li2017provably, filippi2010parametric}, the so-called Upper Confidence Bound (UCB) based approaches  choose the best feasible action- environment pair according to their current confidence regions on the unknown parameter, and Thompson Sampling (TS) \cite{thompson1933likelihood,kaufmann2012thompson, russo2016information,moradipari2018learning}, which randomly samples the environment and plays the corresponding optimal action.

\textbf{Linear Stochastic Bandits.} 
There exists a rich literature on linear stochastic bandits. 
 Two well-known efficient algorithms for LB are Linear UCB (LUCB) and Linear Thompson Sampling (LTS). For LUCB, \cite{Dani08stochasticlinear,Tsitsiklis,abbasi2011improved} provided a regret guarantee of order $\mathcal{O}(\sqrt{T} \log T)$. For LTS, \cite{agrawal2013thompson, abeille2017linear} provided a regret bound of order $\mathcal{O}(\sqrt{T} \log^{3/2} T)$ in a frequentist setting, i.e., when the unknown parameter $\theta_\star$ is a fixed parameter. We need to note that none of the aforementioned heuristics can be directly adopted in the conservative setting. However, note that the regret guarantee provided by our extensions of LUCB and LTS for the safe setting matches those stated for the original setting. 
%(i.e., in the absence of the constraint)

\textbf{Conservativeness and Safety.} The baseline model adopted in this paper was first proposed in \cite{vanroy,wu2016conservative} in the case of {\it cumulative constraints} on the reward. In \cite{vanroy,wu2016conservative}, an action is considered feasible/safe at round $t$ as long as it keeps the cumulative reward up to round $t$ above a given fraction of a given baseline policy. This differs from our setting, which is focused on stage-wise constraints, where we want the expected reward of the {\it every single action}  to exceed a given fraction of the baseline reward at each time $t$. This is a tighter constraint than that of \cite{vanroy,wu2016conservative}.
The setting considered in this paper was first studied in \cite{khezeli2019safe}, which proposed  an algorithm  called SEGE to guarantee the satisfaction of the  safety constraint at each stage of the algorithm.  While our paper is motivated by \cite{khezeli2019safe}, there are a few key  differences:
1) We prove an upper bound of order $\mathcal{O}(\log T)$ for the number of times that the learning algorithm plays the conservative actions which is an order-wise improvement with respect to that of  \cite{khezeli2019safe}, which shows an upper bound of order $\mathcal{O}(\sqrt{T})$;  2) In our setting, the action set is assumed to be a general convex and compact set in $\mathbb{R}^d$. However, in \cite{khezeli2019safe}, the proof relies on the action set being a specific ellipsoid; 3) In Section \ref{unkown_baseline_reward}, we provide a regret guarantee for the learning algorithm for the case where the baseline reward is unknown. However, the results of \cite{khezeli2019safe} have not been  extended to this case; 4) In Section \ref{sec:cons_bandit_feedback}, we also modify our proposed algorithm and provide a regret guarantee for the case where the constraint has a different unknown parameter than the one in the reward function. However, this is not discussed in \cite{khezeli2019safe}. Another difference between the two works is on the type of performance guarantees. In \cite{khezeli2019safe}, the authors bound the \textit{expected} regret. Towards this goal, they manage to quantify the effect of the risk level $\delta$ on the regret and constraint satisfaction. However, it appears that the analysis in \cite{khezeli2019safe} is limited to ellipsoidal action sets. Instead, in this paper, we present a bound on the regret that holds with high (constant) probability (parameterized by $\delta$) over \textit{all} $T$ rounds of the algorithm. This type of results is very common in the bandit literature, e.g. \cite{abbasi2011improved,Dani08stochasticlinear}, and in the emerging safe-bandit literature \cite{vanroy,amani2019linear,sui2018stagewise}. 
% That being said, we note that our analysis shows that at each stage, the algorithm chooses the safe action with high probability, and we provide a tail bound on regret. However, in \cite{khezeli2019safe} they upper bound the expected regret, and discuss the effect of risk level, i.e., $\delta$, on  regret  as well as constraint satisfaction under special risk level. We need to note that similar discussion would be applicable to our algorithm with minor modifications. 

Another variant of safety w.r.t a baseline policy has also been studied in \cite{mansour2015bayesian, katariya2018conservative} in the multi-armed bandits framework. Moreover, there has been an increasing attention on studying the effect of safety constraints in the Gaussian process (GP) optimization literature.  For example, \cite{Krause,sui2018stagewise}  study the problem of \emph{nonlinear} bandit optimization with nonlinear constraints   using GPs (as non-parametric models). The algorithms in \cite{Krause,sui2018stagewise}  come with convergence guarantees but no regret bound.
Moreover, \cite{ostafew2016robust,7039601} study  safety-constrained optimization using GPs in robotics applications.  
A large body of work has considered safety in the context of model-predictive control, see, e.g., \cite{aswani2013provably, koller2018learning} and references therein. 
Focusing specifically on linear stochastic bandits, extension of  UCB-type algorithms to provide safety guarantees with provable regret bounds was considered recently in \cite{amani2019linear}. This work considers the effect of a linear constraint of the form $x^{\top} B \theta_\star \leq C,$ where $B$ and $C$ are respectively a known matrix and positive constant, and provides a problem dependent regret bound for a safety-constrained version of LUCB that depends on the location of the optimal action in the safe action set. Notice that this setting requires the linear function $x^{\top} B \theta_\star$ to remain below a threshold $C$, as opposed to our setting which considers a lower bound on the reward. We note that the algorithm and proof technique in  \cite{amani2019linear} does not extend  to our setting and would only work for inequalities of the given form; however, we  discuss how our algorithm can be modified to provide   a regret bound of order $\mathcal{O}(\sqrt{T} \log T)$ for the setting of \cite{amani2019linear} in Appendix \ref{SCLUCB2:COMPARISON_SECTION}. A TS variaent of this setting has been studied in \cite{9053865,moradipari2019safe}

\subsection{Model Assumptions}
 \textbf{Notation.}  The weighted $\ell_2$-norm with respect to a positive semi-definite matrix $V$ is denoted by $\|x\|_V = \sqrt{x^{\top} V x}$. The minimum of two numbers $a,b$ is denoted $a \wedge b$.
Let $\mathcal{F}_t = (\mathcal{F}_1, \sigma ( x_1,\xi_1, \dots, x_t,\xi_t))$  be the filtration ($\sigma$-algebra) that represents the information up to round $t$. 

\begin{assumption}\label{ass:sub-gaussian_noise}
For all $t$, $\xi_t$ is conditionally zero-mean R-sub-Gaussian noise variables, i.e., $\mathbb{E}[\xi_t | \mathcal{F}_{t-1}] = 0$, and $\mathbb{E}[e^{\lambda \xi_t} | \mathcal{F}_{t-1}] \leq \exp{(\frac{\lambda^2 R^2}{2})}, \forall \lambda \in \mathbb{R}^d$.
\end{assumption}

\begin{assumption}\label{ass:bounded_parameter}
There exists a positive constant $S$ such that $\|\theta_{\star} \|_2 \leq S$.
\end{assumption}

\begin{assumption}\label{ass:actionset}
The action set $\mathcal{X}$ is a compact and convex subset of $\mathbb{R}^d$ that contains the unit ball. We assume that $\|x\|_2 \leq L, \forall x \in \mathcal{X}$. Also, we assume $\langle x, \theta_{\star} \rangle \leq 1, \forall x \in \mathcal{X}$. 
\end{assumption}

% \begin{assumption}\label{assumption:value_of_optiomal_action_on_constraint}
% The optimal action $x_{\star}$ is safe at all rounds, i.e.,  \begin{align}
%     \langle x_\star , \theta_\star \rangle \geq r_{b_t}, ~ \forall t \geq 1.
% \end{align}
% \end{assumption}

Let $\kappa_{b_t} = \langle x_{\star}, \theta_{\star} \rangle - r_{b_t} $ be the difference  between expected reward of the optimal and baseline actions at round $t$. As in \cite{vanroy}, we assume the following.

\begin{assumption}\label{ass:lowerbound_reward}
There exist $0 \leq \kappa_{l} \leq \kappa_h$ and $0 < r_l  \leq r_h$ such that, at each round $t$ \begin{align}
    \kappa_l \leq \kappa_{b_t} \leq \kappa_h ~ \text{and} ~ r_l \leq r_{b_t}\leq r_h. \label{lowerbound_on_reward_of_baseline_reward}
\end{align}
\end{assumption}

{ We note that since these parameters are associated with the baseline policy, it can be reasonably assumed that they can be estimated accurately from data. This is because we think of the baseline policy as ``past strategy'', implemented before bandit-optimization, thus producing large amount of data.} The lower bound $0 <r_l \leq r_{b_t}$ on the baseline reward  ensures a minimum level of performance at each round. $\kappa_h$ and $r_h$ could be at most 1, due to  Assumption \ref{ass:actionset}. For simplicity, we assume  the lower bound $\kappa_l$ on the sub-optimality gap $\kappa_{b_t}$ is known. If not, we can always choose $\kappa_l = 0$ by optimality of $x_\star$. 

\section{Stage-wise Conservative Linear Thompson Sampling (SCLTS) Algorithm}

In this section we propose a TS variant algorithm in a frequentist setting referred to as \textit{Stage-wise Conservative Linear Thompson Sampling} (SCLTS) for the problem setting in Section \ref{setting}. Our adoption of TS  is due to its well-known computational efficiency over UCB-based algorithms,
since action selection via the latter involves solving optimization problems with bilinear objective functions,
whereas the former would lead to linear objectives. 
However, this choice does not fundamentally affect our approach. In fact, in Appendix \ref{SCLUCB}, we propose a Stage-wise Conservative Linear UCB (SCLUCB) algorithm, and we provide the regret guarantee for it. In particular, we show a regret of order $\mathcal{O}\left( d \sqrt{T} \log(\frac{T L^2}{\lambda \delta}) \right)$ for SCLUCB, which has the same order as the lower bound proposed for LB in \cite{Dani08stochasticlinear, Tsitsiklis}.

At each round $t$, given a regularized least-square (RLS) estimate of $\hat{\theta}_t$, SCLTS samples a perturbed parameter $\tilde{\theta}_t$ with an appropriate distributional property. Then, it searches for the  action that maximizes the expected reward considering the parameter $\tilde{\theta}_t$ as the true parameter while respecting the safety constraint \eqref{cons:safety}. If any such action exists, it is played under certain conditions; else, the algorithm resorts to playing a perturbed version of the baseline action that satisfies the safety constraint.
In order to guarantee constraint satisfaction (a.k.a safety of actions), the algorithm builds a confidence region $\mathcal{E}_t$ that contains the unknown parameter $\theta_\star$ with high probability. Then, it constructs an \textit{estimated safe} set $\mathcal{X}_t^s$ such that all actions $x_t \in \mathcal{X}_t^s$ satisfy the safety constraint for all $v \in \mathcal{E}_t$. The summary of the SCLTS presented in Algorithm \ref{alg:Safe-Linear-TS1}, and a detailed explanation follows.

\begin{algorithm} 

\caption{Stage-wise Conservative Linear Thompson Sampling (SCLTS) }\label{alg:Safe-Linear-TS1}

\textbf{Input:}  $\delta, T, \lambda, \rho_1$

Set $\delta' = \frac{\delta}{4T}$\\
\For{$t=1,\dots,T$ } { 
Sample $\eta_t \sim \mathcal{H}^{\text{TS}}$ \\

Compute RLS-estimate $\hat{\theta}_t$ and $V_t$ according to \eqref{RLS-estimate}

Set $\tilde{\theta}_t = \hat{\theta}_t + \beta_t V_t^{-1/2} \eta_t$\\

 Build the confidence region $\mathcal{E}_t(\delta')$ in \eqref{confidence_region_unkown_reward_parameter} 
% \\

Compute the estimated safe set $\mathcal{X}_t^s$ in \eqref{estimated_action_set} \\

 \textbf{if} the following optimization is feasible: $x(\Tilde{\theta}_t) = \text{argmax}_{x \in \mathcal{X}_t^s} \langle x , \Tilde{\theta}_t \rangle$, \textbf{then}
 
 Set $F=1$, \textbf{else} $F = 0$ \\

\textbf{if} $F = 1$ \textbf{and} $\lambda_{\text{min}} (V_t)\geq \left(\frac{2 L \beta_t}{\kappa_l + \alpha r_{b_l}}\right) ^2$,  \textbf{then}

    Play $x_t = x(\Tilde{\theta}_t)$

\textbf{else}

    Play $x_t = (1-\rho_1) x_{b_t} + \rho_1 \zeta_t$

 Observe reward $y_t$ in \eqref{reward_signal} \\ }\textbf{end for}
\SetAlgoLined
\end{algorithm}

\subsection{Algorithm description}
Let $x_1,\dots,x_t$ be the sequence of the actions and $r_1,\dots,r_t$ be their corresponding rewards. For any $\lambda>0$, we can obtain a regularized least-squares (RLS) estimate $\hat{\theta}_t$ of $\theta_{\star}$ as follows \begin{align}
    \hat{\theta}_t = V_t^{-1} \sum_{s=1}^{t-1} y_s x_s, ~ \text{where} ~ V_t = \lambda I + \sum_{s=1}^{t-1} x_s x_s^{\top}. \label{RLS-estimate}
\end{align}

Algorithm \ref{alg:Safe-Linear-TS1} construct a confidence region \begin{align}
   \mathcal{E}_t(\delta')= \mathcal{E}_t := \{\theta \in \mathbb{R}^d : \| \theta - \hat{\theta}_t \|_{V_t}  \leq \beta_t(\delta')  \}, \label{confidence_region_unkown_reward_parameter}
\end{align} 
where the ellipsoid radius $\beta_t$ is chosen according to the Proposition \ref{abbasi-ellipsoid} in \cite{abbasi2011improved} (restated below for completeness) in order to guarantee that $\theta_{\star} \in \mathcal{E}_t$ with high probability.

\begin{Proposition}\label{abbasi-ellipsoid}
( \cite{abbasi2011improved})
Let Assumptions \ref{ass:sub-gaussian_noise}, \ref{ass:bounded_parameter}, and \ref{ass:actionset} hold. For a fixed $\delta \in (0,1)$, and \begin{align}
    \beta_t(\delta) = R \sqrt{ d \log \left( \frac{1 + \frac{t L^2}{\lambda}}{\delta}  \right)   } + \sqrt{\lambda} S
\end{align} with probability at least $1-\delta$, it holds that $\theta_{\star} \in \mathcal{E}_t$.
\end{Proposition}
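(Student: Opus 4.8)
The plan is to bound $\|\theta_\star-\hat\theta_t\|_{V_t}$ directly by splitting the estimation error into a deterministic regularization (bias) term and a stochastic (noise) term, and then controlling the noise term with a self-normalized martingale tail bound. First I would substitute $y_s=\langle x_s,\theta_\star\rangle+\xi_s$ into the RLS formula \eqref{RLS-estimate}. Writing $S_t=\sum_{s=1}^{t-1}\xi_s x_s$ and using $\sum_{s=1}^{t-1}x_s x_s^\top = V_t-\lambda I$, this gives the exact identity $\hat\theta_t-\theta_\star = V_t^{-1}S_t-\lambda V_t^{-1}\theta_\star$. Taking the $V_t$-norm and applying the triangle inequality splits the error as $\|\hat\theta_t-\theta_\star\|_{V_t}\le \|S_t\|_{V_t^{-1}}+\lambda\|V_t^{-1}\theta_\star\|_{V_t}$, which will then be matched term-by-term to the two summands defining $\beta_t(\delta)$ in \eqref{confidence_region_unkown_reward_parameter}.

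The regularization term is immediate. Since $V_t\succeq\lambda I$, we have $\lambda\|V_t^{-1}\theta_\star\|_{V_t}=\lambda\|\theta_\star\|_{V_t^{-1}}\le \lambda\cdot\lambda^{-1/2}\|\theta_\star\|_2\le\sqrt{\lambda}\,S$ using Assumption \ref{ass:bounded_parameter}. This recovers exactly the additive $\sqrt{\lambda}S$ appearing in the statement, so no further work is needed there.

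The crux is the self-normalized term $\|S_t\|_{V_t^{-1}}$, and this is where I expect the main obstacle to lie: the bound must hold \emph{uniformly} over all rounds and over the random, adapted design $V_t$, so a fixed-time sub-Gaussian tail bound does not suffice. Here I would invoke the method of mixtures. For each fixed $\gamma\in\mathbb{R}^d$, the $R$-sub-Gaussianity of the noise (Assumption \ref{ass:sub-gaussian_noise}) makes $M_t^\gamma=\exp\!\big(\langle\gamma,S_t\rangle-\tfrac{R^2}{2}\|\gamma\|_{V_t-\lambda I}^2\big)$ a nonnegative supermartingale with $\mathbb{E}[M_t^\gamma]\le 1$, since the one-step increment has conditional expectation at most one. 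Integrating $M_t^\gamma$ against the Gaussian prior $\mathcal{N}\!\big(0,(R^2\lambda)^{-1}I\big)$ on $\gamma$ preserves the supermartingale property, and the resulting Gaussian integral can be evaluated in closed form, yielding
\[
\overline{M}_t=\Big(\tfrac{\det(\lambda I)}{\det V_t}\Big)^{1/2}\exp\!\Big(\tfrac{1}{2R^2}\|S_t\|_{V_t^{-1}}^2\Big).
\]
Applying Ville's maximal inequality to the supermartingale $\overline{M}_t$ gives $\overline{M}_t\le 1/\delta$ for all $t$ simultaneously with probability at least $1-\delta$; unfolding this inequality yields $\|S_t\|_{V_t^{-1}}^2\le 2R^2\log\!\big(\det(V_t)^{1/2}\det(\lambda I)^{-1/2}/\delta\big)$.

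Finally I would make the determinant bound explicit. By the AM--GM inequality applied to the eigenvalues of $V_t$, together with $\mathrm{trace}(V_t)=d\lambda+\sum_{s=1}^{t-1}\|x_s\|_2^2\le d\lambda+tL^2$ (using $\|x\|_2\le L$ from Assumption \ref{ass:actionset}), we get $\det(V_t)^{1/2}\det(\lambda I)^{-1/2}\le (1+tL^2/\lambda)^{d/2}$. Substituting this into the self-normalized bound and taking square roots gives $\|S_t\|_{V_t^{-1}}\le R\sqrt{d\log(1+tL^2/\lambda)+2\log(1/\delta)}$, which is upper bounded by $R\sqrt{d\log\!\big((1+tL^2/\lambda)/\delta\big)}$. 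Combining with the $\sqrt{\lambda}S$ term from the regularization step and recalling the definition of $\mathcal{E}_t$ in \eqref{confidence_region_unkown_reward_parameter} then shows $\theta_\star\in\mathcal{E}_t$ with probability at least $1-\delta$, as claimed. The genuinely difficult component remains the mixture-supermartingale argument, as it is precisely what converts a pointwise sub-Gaussian estimate into the uniform-in-time, self-normalized guarantee the algorithm relies upon.
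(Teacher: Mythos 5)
The paper gives no proof of this proposition --- it is imported directly from \cite{abbasi2011improved} --- and your argument is a faithful reconstruction of that reference's proof: the bias/noise decomposition of the RLS error, the method-of-mixtures supermartingale with Gaussian prior, Ville's maximal inequality, and the trace/AM--GM determinant bound are exactly the steps used there, and each is carried out correctly. The only cosmetic caveat is that the final absorption $d\log(1+tL^2/\lambda)+2\log(1/\delta)\le d\log\bigl((1+tL^2/\lambda)/\delta\bigr)$ requires $d\ge 2$, but that is an artifact of how the proposition is stated (here and in much of the literature citing the original theorem) rather than a flaw in your derivation.
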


\subsubsection{The estimated safe action set}\label{criteria-for-playing-optimistic-action}

Since $\theta_\star$ is unknown to the learner, she does not know whether an action $x \in \mathcal{X}$ is safe or not. Thus, she builds an estimated safe set such that each action $x_t \in \mathcal{X}_t^s$ satisfies the safety constraint for all $v \in \mathcal{E}_t$, i.e., \begin{align}
    \mathcal{X}_t^s  := & \{ x \in \mathcal{X} : \langle x, v \rangle \geq (1-\alpha) r_{b_t}, \forall v \in \mathcal{E}_t \}  = \{ x \in \mathcal{X} : \min_{v \in \mathcal{E}_t} \langle x, v \rangle \geq (1-\alpha) r_{b_t} \} \label{estimated_action_set} \\& = \{ x \in \mathcal{X} :\langle x, \hat{\theta}_t \rangle - \beta_t(\delta') \| x\|_{V_t^{-1}} \geq (1-\alpha) r_{b_t} \}. \label{def:convex-quadratic_estimatedactionset}
\end{align} Note that $\mathcal{X}_t^s$ is easy to compute since \eqref{def:convex-quadratic_estimatedactionset} involves a convex quadratic program. In order to guarantee safety, at each round $t$,  the learner chooses her actions only from this estimated safe set in order to maximize the reward given the sampled parameter $\tilde{\theta}_t$, i.e., \begin{align}
    x(\tilde{\theta}_t) = \arg\max_{x \in \mathcal{X}_t^s} \langle x, \tilde{\theta}_t \rangle,  \label{optimal-safe-action}
\end{align} 
where $\tilde{\theta}_t = \hat{\theta}_t + \beta_t V_t^{-1/2} \eta_t$, and $\eta_t$ is a random IID sample from a distribution $\mathcal{H}^{\text{TS}}$ that  satisfies  certain distributional properties (see \cite{abeille2017linear} or Defn. \ref{distributioanl_propeorty} in Appendix \ref{app:sec:proof_of_theorem_regret_of_sclts} for more details).   
The challenge with $\mathcal{X}_t^s$ is that it contains  actions which are safe with respect to all the parameters in $\mathcal{E}_t$, and not only $\theta_\star$. Hence, there may exist some rounds that $\mathcal{X}_t^s$ is empty. In order to face this problem, the algorithm proceed as follows. At round $t$, if the estimated action set $\mathcal{X}_t^s$ is not empty, SCLTS plays the safe action $x(\tilde{\theta}_t)$ in \eqref{optimal-safe-action} only if the minimum eigenvalue of the Gram matrix $V_t$ is greater than $k_t^1 = \left(\frac{2 L \beta_t}{\kappa_l + \alpha r_{b_l}}\right) ^2$, i.e., $\lambda_{\text{min}}(v_t) \geq k_t^1$, where $k_t^1$ is of order $\mathcal{O}(\log t)$. Otherwise, it plays the conservative action which is presented next.  We show in Appendix \ref{app:sec:proof_of_theorem_regret_of_sclts} that $\lambda_{\text{min}}(v_t) \geq k_t^1$ ensures that for the rounds that SCLTS plays the action $x(\tilde{\theta}_t)$ in \eqref{optimal-safe-action}, the optimal action $x_\star$ belongs to the estimated safe set $\mathcal{X}_t^s$, from which we can bound the regret of Term I in \eqref{regret:decompos}.

\subsubsection{Conservative actions}
In our setting, we assume that the learner is given a baseline policy that at each round $t$ suggests a baseline action $x_{b_t}$.  We employ the idea  proposed in \cite{khezeli2019safe}, which is merging the baseline actions with  random exploration actions under stage-wise safety constraint. In particular,  at each round $t$, SCLTS constructs a conservative action $x_t^{\text{cb}}$ as a convex combination of the baseline action $x_{b_t}$ and a random vector $\zeta_t$ as  follows: \begin{align}
    x_t^{\text{cb}} = (1-\rho_1) x_{b_t} + \rho_1 \zeta_t, \label{conservative-actions}
\end{align} where $\zeta_t$ is assumed to be a sequence of independent, zero-mean and bounded random vectors. Moreover, we assume that $\|\zeta_t \|_2 = 1$ almost surely  and $\sigma^2_{\zeta}=  \lambda_{\text{min}} (\text{Cov}(\zeta_t)) > 0$.  The parameters $\sigma_{\zeta}$ and $\rho_1$ control the exploration level of the conservative actions. In order to ensure that the conservative actions are safe, in Lemma \ref{upper_bound_rho},  we establish an upper bound on $\rho_1$ such that for all $\rho_1 \in (0,\bar{\rho})$, the conservative action $x_t^{\text{cb}} = (1-\rho_1) x_{b_t} + \rho_1 \zeta_t$ is guaranteed to be safe. 

\begin{lemma}\label{upper_bound_rho}
At each round $t$, given the fraction $\alpha$, for any $\rho \in (0,\bar{\rho})$, where $ \bar{\rho} =  \frac{\alpha r_l}{S+r_h}$, the conservative action $x_t^{\text{cb}}=(1-\rho) x_{b_t} + \rho \zeta_t$ is guaranteed to be safe almost surely. 
% \begin{align}
%     \bar{\rho} =  \frac{\alpha r_l}{S+r_h} ,
% \end{align}the conservative action $(1-\rho) x_{b_t} + \rho \zeta_t$ is guaranteed to be safe almost surely.
\end{lemma}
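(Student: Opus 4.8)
Recall from \eqref{cons:safety} that an action $x$ is \emph{safe} exactly when $\langle x, \theta_\star \rangle \geq (1-\alpha) r_{b_t}$, so the plan is simply to lower-bound the expected reward of $x_t^{\text{cb}}$ and show it never drops below this threshold for $\rho \in (0,\bar\rho)$. First I would expand the inner product using linearity and the definition of the baseline reward $r_{b_t} = \langle x_{b_t}, \theta_\star\rangle$:
\begin{align}
\langle x_t^{\text{cb}}, \theta_\star \rangle = (1-\rho)\langle x_{b_t}, \theta_\star \rangle + \rho \langle \zeta_t, \theta_\star \rangle = r_{b_t} - \rho\, r_{b_t} + \rho \langle \zeta_t, \theta_\star \rangle. \nonumber
\end{align}
Rearranging, the safety requirement $\langle x_t^{\text{cb}}, \theta_\star \rangle \geq (1-\alpha) r_{b_t}$ is equivalent to $\rho\big(\langle \zeta_t, \theta_\star \rangle - r_{b_t}\big) \geq -\alpha r_{b_t}$.

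The next step is to control the only random quantity, $\langle \zeta_t, \theta_\star \rangle$. Since we want an \emph{almost sure} guarantee rather than a statement in expectation, I would not appeal to the covariance $\sigma_\zeta^2$ but instead use a deterministic worst-case bound: by Cauchy--Schwarz together with $\|\zeta_t\|_2 = 1$ almost surely and Assumption \ref{ass:bounded_parameter} ($\|\theta_\star\|_2 \leq S$), we have $\langle \zeta_t, \theta_\star \rangle \geq -\|\zeta_t\|_2 \|\theta_\star\|_2 \geq -S$ almost surely. Combined with the upper bound $r_{b_t} \leq r_h$ from Assumption \ref{ass:lowerbound_reward}, and using $\rho > 0$, this yields the uniform lower bound
\begin{align}
\rho\big(\langle \zeta_t, \theta_\star \rangle - r_{b_t}\big) \geq -\rho\,(S + r_h) \quad \text{almost surely}. \nonumber
\end{align}

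Finally I would close the argument by making the right-hand side dominate the required threshold. It suffices that $-\rho(S+r_h) \geq -\alpha r_{b_t}$; invoking the other half of Assumption \ref{ass:lowerbound_reward}, namely $r_{b_t} \geq r_l$, it is enough to impose $\rho(S+r_h) \leq \alpha r_l$, i.e.\ $\rho \leq \frac{\alpha r_l}{S+r_h} = \bar\rho$. Hence for every $\rho \in (0,\bar\rho)$ the conservative action $x_t^{\text{cb}}$ satisfies \eqref{cons:safety} almost surely, which is the claim. The argument is elementary and I do not anticipate a genuine obstacle; the only subtlety worth flagging is that the worst case arises when $\zeta_t$ is anti-aligned with $\theta_\star$, so the bound on $\langle \zeta_t,\theta_\star\rangle$ must be the deterministic Cauchy--Schwarz one (valid pointwise) to secure safety with probability one, and the two-sided bounds on $r_{b_t}$ must be used in the correct directions (the lower bound $r_l$ to shrink the admissible $\rho$, the upper bound $r_h$ to inflate the perturbation penalty).
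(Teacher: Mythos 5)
Your proposal is correct and follows essentially the same route as the paper's proof: expand the inner product by linearity, bound the random term $\langle \zeta_t,\theta_\star\rangle$ deterministically via Cauchy--Schwarz with $\|\zeta_t\|_2=1$ and $\|\theta_\star\|_2\le S$, and then apply the bounds $r_l \le r_{b_t} \le r_h$ from Assumption \ref{ass:lowerbound_reward} in exactly the directions you describe to arrive at $\rho \le \frac{\alpha r_l}{S+r_h}$. The only cosmetic difference is that the paper first isolates the condition $\rho \le \frac{\alpha r_{b_t}}{S+r_{b_t}}$ and then lower-bounds its right-hand side, whereas you fold the two bounds in directly; the content is identical.
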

For the ease of notation, in the rest of this paper, we simply assume that $\rho_1 = \frac{r_l}{S+r_h} \alpha$.

At round $t$, SCLTS plays the conservative action $x_t^{\text{cb}}$ if the two conditions defined in Section \ref{criteria-for-playing-optimistic-action} do not hold, i.e., either the estimated safe set $\mathcal{X}_t^s$ is empty or $\lambda_{\text{min}}(V_t) < k_t^1$.

\section{Regret Analysis}\label{sec:regret_analysis}

In this section, we provide a tight regret bound for SCLTS.
In Proposition \ref{decompostion_regret}, we show that the regret of SCLTS can be decomposed into regret caused by choosing safe Thompson Sampling actions plus that of playing conservative actions.  Then,  we bound both terms separately. 
Let $N_{t-1}$ be the set of rounds $i < t$ at which SCLTS plays the action in \eqref{optimal-safe-action}. Similarly, $N_{t-1}^{c} = \{1,\dots,t-1\} - N_{t-1}$ is the set of rounds $j < t$ at which SCLTS plays the conservative actions.

\begin{Proposition}\label{decompostion_regret}
The regret of SCLTS can be decomposed into two terms as follows: \begin{align}
    R(T) \leq \underbrace{\sum_{t \in N_T} \left(\langle x_{\star}, \theta_{\star} \rangle - \langle x_t, \theta_{\star} \rangle \right)}_{\text{Term I}} +  \underbrace{|N_{T}^{{c}}| \left(\kappa_h + \rho_1 (r_h + S) \right)}_{\text{Term II}}\label{regret:decompos}
\end{align}
\end{Proposition}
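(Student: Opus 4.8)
The plan is to split the regret sum along the partition of the horizon into the two action-selection regimes, and then bound the conservative regime term by term.

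First I would note that by the definition of Algorithm~\ref{alg:Safe-Linear-TS1}, at each round $t$ the learner plays either the sampled-optimal safe action $x(\tilde{\theta}_t)$ from \eqref{optimal-safe-action} or the conservative action $x_t^{\text{cb}}$ from \eqref{conservative-actions}, so that $\{1,\dots,T\}$ is the disjoint union $N_T \cup N_T^c$. Hence the regret definition splits as $R(T) = \sum_{t \in N_T}\big(\langle x_\star,\theta_\star\rangle - \langle x_t,\theta_\star\rangle\big) + \sum_{t \in N_T^c}\big(\langle x_\star,\theta_\star\rangle - \langle x_t,\theta_\star\rangle\big)$. The first sum is precisely Term~I, so the entire task reduces to bounding the instantaneous regret on the conservative rounds $t \in N_T^c$.

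For such a round, I would substitute $x_t = x_t^{\text{cb}} = (1-\rho_1)x_{b_t} + \rho_1\zeta_t$ and expand the inner product. Using $\langle x_{b_t},\theta_\star\rangle = r_{b_t}$ and the definition $\kappa_{b_t} = \langle x_\star,\theta_\star\rangle - r_{b_t}$, a short computation collapses the instantaneous regret to $\kappa_{b_t} + \rho_1\big(r_{b_t} - \langle \zeta_t,\theta_\star\rangle\big)$. Bounding the three pieces separately then finishes the proof: Assumption~\ref{ass:lowerbound_reward} gives $\kappa_{b_t} \le \kappa_h$ and $r_{b_t} \le r_h$, while Cauchy--Schwarz together with $\|\zeta_t\|_2 = 1$ (a.s.) and $\|\theta_\star\|_2 \le S$ from Assumption~\ref{ass:bounded_parameter} yields $-\langle\zeta_t,\theta_\star\rangle \le S$. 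Combining these bounds gives the per-round estimate $\kappa_h + \rho_1(r_h + S)$, and summing over $t \in N_T^c$ produces Term~II.

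This argument is essentially bookkeeping and presents no substantial obstacle; the only point worth care is that the claimed inequality is a deterministic (pathwise) bound whereas $\zeta_t$ is a random vector. This is handled by observing that the norm constraint $\|\zeta_t\|_2 = 1$ holds almost surely, so the Cauchy--Schwarz estimate on $\langle\zeta_t,\theta_\star\rangle$, and therefore the whole Term~II bound, holds for every sample path rather than only in expectation.
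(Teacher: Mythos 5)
Your proof is correct and follows essentially the same route as the paper: split the regret over $N_T$ and $N_T^c$, expand the inner product for the conservative action, and bound the pieces via $\kappa_{b_t}\le\kappa_h$, $r_{b_t}\le r_h$, and $|\langle\zeta_t,\theta_\star\rangle|\le\|\zeta_t\|_2\|\theta_\star\|_2\le S$. Your explicit remark that the bound is pathwise because $\|\zeta_t\|_2=1$ holds almost surely is a small but welcome clarification the paper leaves implicit.
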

 The idea of bounding Term I is inspired by \cite{abeille2017linear}: we wish to show that LTS has a constant probability of being "optimistic", in spite of the need to be conservative. In Theorem \ref{thm:bounding_term_I_regret_of_safe_lts}, we provide an upper bound on the regret of Term I which is of order  $\mathcal{O}(d^{3/2} \log^{1/2}d ~ T^{1/2} \log^{3/2}T)$.

\begin{theorem}\label{thm:bounding_term_I_regret_of_safe_lts}
Let $\lambda , L \geq 1$. On event $\{ \theta_\star \in \mathcal{E}_t, \forall t \in [T] \}$, and under  Assumption \ref{ass:lowerbound_reward}, we can bound Term I in \eqref{regret:decompos} as: 
\begin{align}
    \text{Term I} &  \leq ( \beta_T(\delta') + \gamma_T(\delta') (1+\frac{4}{p}) ) \sqrt{2 T d \log{ (1 + \frac{T L^2}{\lambda})}}  + \frac{4 \gamma_T(\delta')}{p} \sqrt{\frac{8T L^2}{\lambda} \log{\frac{4}{\delta}}},
\end{align} where $\delta' = \frac{\delta}{6T}$, and $\gamma_t(\delta) = \beta_t(\delta') \left(  1 + \frac{2}{C} \right) \sqrt{c d \log{(\frac{c' d}{\delta})}}$ 
% \begin{align}
%       \gamma_t(\delta) = \beta_t(\delta') \left(  1 + \frac{2}{C} \right) \sqrt{c d \log{(\frac{c' d}{\delta})}}\,. \label{gamma}
%   \end{align}
\end{theorem}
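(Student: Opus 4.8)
The plan is to follow the frequentist Thompson-Sampling analysis of \cite{abeille2017linear}, adapted to account for the time-varying estimated safe set $\mathcal{X}_t^s$. Throughout I work on the event $\{\theta_\star\in\mathcal{E}_t,\forall t\}$ and only on rounds $t\in N_T$, where $x_t=x(\tilde\theta_t)=\arg\max_{x\in\mathcal{X}_t^s}\langle x,\tilde\theta_t\rangle$. The first task is to show that the optimal action $x_\star$ lies in $\mathcal{X}_t^s$ for every $t\in N_T$. By definition of $N_T$ the eigenvalue test $\lambda_{\min}(V_t)\ge k_t^1=(2L\beta_t/(\kappa_l+\alpha r_l))^2$ holds, so $\|x_\star\|_{V_t^{-1}}\le L/\sqrt{\lambda_{\min}(V_t)}\le(\kappa_l+\alpha r_l)/(2\beta_t)$. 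Combining this with $\langle x_\star,\theta_\star\rangle=r_{b_t}+\kappa_{b_t}\ge r_{b_t}+\kappa_l$ from Assumption~\ref{ass:lowerbound_reward} and the confidence bound $\langle x_\star,\hat\theta_t\rangle\ge\langle x_\star,\theta_\star\rangle-\beta_t\|x_\star\|_{V_t^{-1}}$, a short computation yields $\langle x_\star,\hat\theta_t\rangle-\beta_t\|x_\star\|_{V_t^{-1}}\ge(1-\alpha)r_{b_t}$, i.e.\ $x_\star\in\mathcal{X}_t^s$ by the characterization \eqref{def:convex-quadratic_estimatedactionset}. This is precisely what the eigenvalue condition in Algorithm~\ref{alg:Safe-Linear-TS1} is designed to guarantee.

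With $x_\star\in\mathcal{X}_t^s$ in hand I would set $J_t(\theta)=\max_{x\in\mathcal{X}_t^s}\langle x,\theta\rangle$ and decompose the per-round regret as $\langle x_\star,\theta_\star\rangle-\langle x_t,\theta_\star\rangle=\big(J_t(\theta_\star)-J_t(\tilde\theta_t)\big)+\big(\langle x_t,\tilde\theta_t\rangle-\langle x_t,\theta_\star\rangle\big)$, using $J_t(\theta_\star)=\langle x_\star,\theta_\star\rangle$ (valid by the previous step) and $J_t(\tilde\theta_t)=\langle x_t,\tilde\theta_t\rangle$. The second (RLS) term is controlled directly: $\langle x_t,\tilde\theta_t-\theta_\star\rangle\le\|x_t\|_{V_t^{-1}}\big(\|\tilde\theta_t-\hat\theta_t\|_{V_t}+\|\hat\theta_t-\theta_\star\|_{V_t}\big)\le(\gamma_t+\beta_t)\|x_t\|_{V_t^{-1}}$, where $\|\tilde\theta_t-\hat\theta_t\|_{V_t}\le\gamma_t$ holds with high probability by the distributional property of $\mathcal{H}^{\text{TS}}$ and $\|\hat\theta_t-\theta_\star\|_{V_t}\le\beta_t$ on the good event.

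For the first (TS) term I would invoke the optimism argument. Here $J_t$ is convex with subgradient $x(\theta)\in\partial J_t(\theta)$, and the anti-concentration of $\mathcal{H}^{\text{TS}}$ guarantees constant-probability optimism, $\Pr\!\big(J_t(\tilde\theta_t)\ge J_t(\theta_\star)\mid\mathcal{F}_t\big)\ge p$. From this one bounds the conditional expectation $\mathbb{E}[J_t(\theta_\star)-J_t(\tilde\theta_t)\mid\mathcal{F}_t]$ by $\tfrac{1}{p}$ times a gradient-weighted term of the form $\gamma_t\,\mathbb{E}[\|x(\tilde\theta_t)\|_{V_t^{-1}}\mid\mathcal{F}_t]$, exactly as in \cite{abeille2017linear}, the only modification being that $J_t$ is now the support function of $\mathcal{X}_t^s$ rather than of the full set $\mathcal{X}$. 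Summing over $t\in N_T$ and passing from conditional expectations to realized sums introduces a bounded martingale difference sequence; an Azuma--Hoeffding bound (using $\|x_t\|_{V_t^{-1}}\le L/\sqrt{\lambda}$ for each increment) produces the additive term $\tfrac{4\gamma_T}{p}\sqrt{\tfrac{8TL^2}{\lambda}\log\tfrac{4}{\delta}}$. Applying the elliptical potential lemma $\sum_t\|x_t\|_{V_t^{-1}}\le\sqrt{2Td\log(1+TL^2/\lambda)}$ together with the monotonicity $\beta_t\le\beta_T$, $\gamma_t\le\gamma_T$, and collecting the coefficients $\beta_T$ (RLS), $\gamma_T$ (RLS) and $\tfrac{4}{p}\gamma_T$ (TS) then yields the stated bound.

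I expect the main obstacle to be the optimism step: one must verify that constant-probability optimism \emph{survives} the conservatism, which is exactly why the first step (placing $x_\star$ inside the estimated safe set via the eigenvalue condition) is indispensable, since without $x_\star\in\mathcal{X}_t^s$ the identity $J_t(\theta_\star)=\langle x_\star,\theta_\star\rangle$ fails and the TS regret is no longer the right quantity. A secondary technical point is to re-check that the convexity and anti-concentration arguments of \cite{abeille2017linear} go through verbatim when the maximization is over the round-dependent safe set $\mathcal{X}_t^s$ instead of the fixed action set.
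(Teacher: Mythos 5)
Your proposal is correct and follows essentially the same route as the paper: the key new step in both is to use the eigenvalue condition $\lambda_{\min}(V_t)\ge k_t^1$ to place $x_\star$ inside the estimated safe set $\mathcal{X}_t^s$, from which constant-probability optimism over $\mathcal{X}_t^s$ follows via anti-concentration, and the remainder is the argument of Abeille--Lazaric. The paper defers that remainder to the cited reference, whereas you spell out the $J_t$ decomposition, the Azuma--Hoeffding step, and the elliptical potential lemma explicitly, but the substance is identical.
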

We note that the regret of Term I has the same bound as that of \cite{abeille2017linear} in spite of the additional safety constraints imposed on the problem. As the next step, in order to bound Term II in \eqref{regret:decompos}, we need to find an upper bound on the number of times $|N_T^c|$ that SCLTS plays the conservative actions up to time $T$. We prove an upper bound on $|N_T^c|$ in Theorem \ref{upperbound:numberofbaseline}.  

% We only provide a proof sketch for Theorem \ref{upperbound:numberofbaseline} due to brevity. The proof requires several lemmas that have been proved in the Appendix \ref{app:theorem_proof:knownbaseline}. 

\begin{theorem}\label{upperbound:numberofbaseline}
Let $\lambda , L \geq 1$. On event $\{ \theta_\star \in \mathcal{E}_t, \forall t \in [T] \}$, and under  Assumption \ref{ass:lowerbound_reward}, it holds that \begin{align}
         | N_{T}^c| \leq  \left( \frac{2L  \beta_{T} }{\rho_1 \sigma_{\zeta}( \kappa_l + \alpha r_l )} \right)^2 + \frac{2h_1^2}{\rho_1^4 \sigma_{\zeta}^4} \log(\frac{d}{\delta}) + \frac{2 Lh_1 \beta_T  \sqrt{8 \log(\frac{d}{\delta})}}{\rho_1^3 \sigma_\zeta^3 (\kappa_l + \alpha r_l )},
\end{align} where $h_1 = 2 \rho_1 (1-\rho_1) L + 2 \rho_1^2 $ and $\rho_1 = (\frac{r_l}{S+r_h}) \alpha$.
\end{theorem}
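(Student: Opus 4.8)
The plan is to reduce $|N_T^c|$ to the number of rounds at which $\lambda_{\min}(V_t)$ is small, and then to exploit that each conservative play injects near-isotropic exploration, forcing $\lambda_{\min}(V_t)$ to grow linearly in the number of such plays. \textbf{Step 1.} I would first establish that, on the event $\{\theta_\star\in\mathcal{E}_t\}$, a conservative action is played at round $t$ only if $\lambda_{\min}(V_t)<k_t^1$. By construction SCLTS plays conservatively exactly when the optimistic step fails, i.e. when $\mathcal{X}_t^s=\emptyset$ or the eigenvalue test fails; it therefore suffices to show that $\lambda_{\min}(V_t)\ge k_t^1$ already forces $x_\star\in\mathcal{X}_t^s$ (hence $\mathcal{X}_t^s\neq\emptyset$). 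Using $\theta_\star\in\mathcal{E}_t$ and Cauchy--Schwarz in the $V_t$-norm, $\langle x_\star,\hat{\theta}_t\rangle-\beta_t\|x_\star\|_{V_t^{-1}}\ge\langle x_\star,\theta_\star\rangle-2\beta_t\|x_\star\|_{V_t^{-1}}\ge r_{b_t}+\kappa_l-2\beta_t\|x_\star\|_{V_t^{-1}}$, where I used $\langle x_\star,\theta_\star\rangle=r_{b_t}+\kappa_{b_t}\ge r_{b_t}+\kappa_l$. Membership $x_\star\in\mathcal{X}_t^s$ then reduces to $\kappa_l+\alpha r_{b_t}\ge 2\beta_t\|x_\star\|_{V_t^{-1}}$, and since $\|x_\star\|_{V_t^{-1}}\le L/\sqrt{\lambda_{\min}(V_t)}$ and $r_{b_t}\ge r_l$, this holds whenever $\lambda_{\min}(V_t)\ge(2L\beta_t/(\kappa_l+\alpha r_l))^2=k_t^1$. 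As $\beta_t$ is nondecreasing, $k_t^1\le k_T^1$, so every conservative round obeys $\lambda_{\min}(V_t)<k_T^1$.

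\textbf{Step 2.} Next I would lower bound $\lambda_{\min}(V_t)$ in terms of $|N_{t-1}^c|$. Discarding the PSD contributions of the optimistic rounds, $\lambda_{\min}(V_t)\ge\lambda_{\min}\big(\sum_{s\in N_{t-1}^c}x_s^{\mathrm{cb}}(x_s^{\mathrm{cb}})^\top\big)$. Writing $x_s^{\mathrm{cb}}=(1-\rho_1)x_{b_s}+\rho_1\zeta_s$ with $\zeta_s$ zero-mean and independent of $\mathcal{F}_{s-1}$, the conditional mean of each rank-one term equals $(1-\rho_1)^2 x_{b_s}x_{b_s}^\top+\rho_1^2\,\mathrm{Cov}(\zeta_s)\succeq\rho_1^2\sigma_\zeta^2 I$, so the accumulated means contribute at least $|N_{t-1}^c|\,\rho_1^2\sigma_\zeta^2$ to $\lambda_{\min}$. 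The remaining fluctuation is the matrix-martingale difference $Z_s=x_s^{\mathrm{cb}}(x_s^{\mathrm{cb}})^\top-\mathbb{E}[x_s^{\mathrm{cb}}(x_s^{\mathrm{cb}})^\top\mid\mathcal{F}_{s-1}]=\rho_1(1-\rho_1)(x_{b_s}\zeta_s^\top+\zeta_s x_{b_s}^\top)+\rho_1^2(\zeta_s\zeta_s^\top-\mathbb{E}[\zeta_s\zeta_s^\top])$, which by the triangle inequality together with $\|x_{b_s}\|\le L$ and $\|\zeta_s\|=1$ satisfies $\|Z_s\|\le 2\rho_1(1-\rho_1)L+2\rho_1^2=h_1$.

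\textbf{Step 3.} I would then apply a matrix Azuma/Freedman inequality (e.g. Tropp) to the bounded increments $\{Z_s\}_{s\in N^c}$ to obtain, with probability at least $1-\delta$, $\lambda_{\min}\big(\sum_{s\in N_{t-1}^c}x_s^{\mathrm{cb}}(x_s^{\mathrm{cb}})^\top\big)\ge|N_{t-1}^c|\rho_1^2\sigma_\zeta^2-h_1\sqrt{8|N_{t-1}^c|\log(d/\delta)}$. Evaluating at the last conservative round and combining with the necessary condition $\lambda_{\min}(V_t)<k_T^1=(2L\beta_T/(\kappa_l+\alpha r_l))^2$ yields $|N_T^c|\rho_1^2\sigma_\zeta^2-h_1\sqrt{8|N_T^c|\log(d/\delta)}<k_T^1$. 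Viewing this as a quadratic inequality in $\sqrt{|N_T^c|}$ and solving it (via $\sqrt{p+q}\le\sqrt p+\sqrt q$) produces precisely the three terms of the statement: the leading term $k_T^1/(\rho_1^2\sigma_\zeta^2)$, the variance term of order $h_1^2\log(d/\delta)/(\rho_1^4\sigma_\zeta^4)$, and the cross term, with the explicit constants following from the routine algebra. The delicate point is Step 3: because $|N_T^c|$ is a random stopping count and the conservative rounds form a random (though $\mathcal{F}_{s-1}$-predictable) subsequence, a per-round union bound would cost a spurious $\log T$ factor. The clean remedy is to index the martingale by the exploration clock (the $j$-th conservative play), so that increments stay bounded by $h_1$ and the predictable quadratic variation is $|N_T^c|h_1^2$, and to invoke a Freedman-type matrix bound whose variance proxy adapts to the realized count; establishing concentration of $\lambda_{\min}$ itself, rather than of a fixed linear functional, is exactly what forces the use of a genuinely matrix concentration tool.
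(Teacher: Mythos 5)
Your proposal follows essentially the same route as the paper: Step 1 reproduces (in contrapositive form) the paper's argument that any conservative round forces $\lambda_{\min}(V_\tau) < k_\tau^1$, Step 2 is exactly the paper's Lemma \ref{lemma:lowerbounding_lambdamin-vt} (conditional mean $\succeq \rho_1^2\sigma_\zeta^2 I$ per conservative play plus a martingale fluctuation of norm at most $h_1$, controlled via matrix Azuma), and Step 3 matches the paper's use of Lemma \ref{algebra_for_upperbound} to solve the resulting quadratic inequality in $|N_T^c|$. Your closing remark about the randomness of the index set $N_T^c$ is a fair technical point that the paper's application of matrix Azuma does not explicitly address, but it does not change the substance of the argument.
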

\begin{remark}\label{remark:upperboundingSCLTS}
The upper bound on the number of times SCLTS plays the conservative actions up to time T provided in Theorem \ref{upperbound:numberofbaseline} has the order $\mathcal{O} \left(  \frac{L^2 d \log(\frac{T}{\delta})  \log(\frac{d}{\delta})    }{\alpha^4  (r_l^2 \wedge r_l^4)\kappa_l ( \sigma_{\zeta}^2 \wedge \sigma_{\zeta}^4) } \right)$.\end{remark}

{ The first idea of the proof is based on the intuition that if a baseline action is played at round $\tau$, then the algorithm does not yet have a good estimate of the unknown parameter $\theta_\star$ and the safe actions played thus far have not yet expanded properly in all directions.
Formally, this translates to small $\lambda_{\text{min}}(V_\tau)$ and the upper bound $O(\log \tau) \geq \lambda_{\text{min}}(V_\tau)$. The second key idea is to exploit the randomized nature of the conservative actions (cf. (11)) to lower bound $\lambda_{\text{min}}(V_\tau)$ by the number of times ($N_{\tau}^c$) that SCLTS plays the baseline actions up to that round (cf. Lemma \ref{lemma:lowerbounding_lambdamin-vt} in the Appendix). Putting these together leads to the advertised upper bound $O(\log T)$ on the total number of times ($N_{T}^c$) the algorithm plays the baseline actions.}

\subsection{Additional Side Constraint with Bandit Feedback}\label{sec:cons_bandit_feedback}

We also consider the setting where the constraint depends on an unknown parameter that is different than the one in reward function. In particular, we assume the constraint of the form \begin{align}
    \langle x_t, \mu_{\star} \rangle \geq (1-\alpha) q_{b_t},  \label{safety_cons_for_unkown_parameter}
\end{align} which needs to be satisfied by the action $x_t$ at every round $t$. In \eqref{safety_cons_for_unkown_parameter}, $\mu_\star$ is a fixed, but unknown and the positive constants $q_{b_t} = \langle x_{b_t}, \mu_\star \rangle$ are known to the learner. In Section \ref{unkown_baseline_reward}, we relax this assumption and we consider the case where the learner does not know the value of $q_{b_t}$. Let $\nu_{b_t} = \langle x_\star, \mu_\star \rangle - \langle x_{b_t}, \mu_\star \rangle$. Similar to Assumption \ref{ass:lowerbound_reward}, we assume there exist  constants $0 \leq \nu_l \leq \nu_h $ and $0< r_l \leq r_h$ such that  $ \nu_l \leq \nu_{b_t} \leq \nu_h $ and $ r_l \leq r_{b_t} \leq r_h$.

We assume that with playing an action $x_t$, the learner observes the following bandit feedback: \begin{align}
    w_t = \langle x_t, \mu_\star \rangle + \chi_t, \label{observ:bandi-feedback}
\end{align} where $\chi_t$ is assumed to be a zero-mean $R$-sub-Gaussian noise. In order to handle this case, we show how SCLTS should be modified, and we propose a new algorithm called SCLTS-BF. The details on SCLTS-BF are presented in Appendix \ref{SCLTS-BF:regret_proof}. In the following, we only mention the  difference of SCLTS-BF with SCLTS, and show an upper bound on its regret.  

The main difference is that SCLTS-BF constructs two confidence regions $\mathcal{E}_t$ in \eqref{confidence_region_unkown_reward_parameter} and $\mathcal{C}_t$ based on the bandit feedback such that $\theta_\star \in \mathcal{E}_t$ and $\mu_\star \in \mathcal{C}_t $ with high probability. Then, based on $\mathcal{C}_t$, it constructs the estimated safe decision set denoted  $\mathcal{P}_t^s = \{ x \in \mathcal{X} : \langle x, v \rangle \geq (1-\alpha) q_{b_t}, \forall v \in \mathcal{C}_t \}$. We  note that SCLTS-BF only plays the  actions from  $\mathcal{P}_t^s$  that are safe with respect to all the parameters in $\mathcal{C}_t$.

We report the details on proving the regret bound for SCLTS-BF in Appendix \ref{SCLTS-BF:regret_proof}. We use the decomposition in Proposition \ref{decompostion_regret}, and we upper bound Term I similar to the Theorem \ref{thm:bounding_term_I_regret_of_safe_lts}. Then, we show an upper bound of order $\mathcal{O} \left(  \frac{L^2 d \log(\frac{T}{\delta})  \log(\frac{d}{\delta})    }{\alpha^4  (q_l^2 \wedge q_l^4)\kappa_l ( \sigma_{\zeta}^2 \wedge \sigma_{\zeta}^4) } \right)$ over  the number of times that SCLTS-BF plays the conservative actions. 

% \begin{theorem}\label{thm:up_bound_SCLTS-bf_number_conservative}
% Let $\lambda , L \geq 1$. On event $\bigg \{ \{ \theta_\star \in \mathcal{E}_t, \forall t \in [T] \} \cap \{ \mu_\star \in \mathcal{C}_t, \forall t \in [T] \} \bigg\} $, and  Assumptions \ref{ass:lowerbound_reward}, we can upper bound the number of times SCLTS-BF plays the conservative actions, i.e., $|N_T^c|$ as: \begin{align}
%   |N_T^c|  \leq  \left( \frac{2L \beta_{T} }{\rho_2 \sigma_{\zeta}(\alpha q_l + \nu_l)} \right)^2 + \frac{2h_2^2}{\rho_2^4 \sigma_{\zeta}^4} \log(\frac{d}{\delta}) + \frac{2L h_2 \beta_T \sqrt{8 \log(\frac{d}{\delta})}}{\rho_2^3 \sigma_\zeta^3 (\alpha q_l + \nu_l)}
% \end{align} where $h_2 = 2 \rho_2 (1-\rho_2) L + 2 \rho_2^2 $ and $\rho_2 = (\frac{q_l}{S+q_h}) \alpha$.
% \end{theorem}

\section{Unknown Baseline Reward }\label{unkown_baseline_reward}

Inspired by \cite{vanroy}, which studies this problem in the presence of {\it safety constraints on the cumulative rewards}, we consider the case where the expected reward of the action chosen by baseline policy, i.e., $r_{b_t}$ is unknown to the learner. However, we assume that the learner knows the value of $r_l$ in \eqref{lowerbound_on_reward_of_baseline_reward}. We describe the required  modifications on SCLTS to handle this case, and present a new algorithm called SCLTS2. Then, we prove the regret bound for SCLTS2, which has the same order as SCLTS. 
% Therefore, the lack of information about the reward of the baseline policy does not cause any harm to our algorithm in terms of the order of the regret. 

% In the following, we rewrite the definition of $\mathcal{X}_t^s$ in \eqref{estimated_action_set} as: \begin{align}
%   \mathcal{X}_t^s =  \{ x \in \mathcal{X} :\min_{v \in \mathcal{E}_t} \langle x, v \rangle \geq (1-\alpha) r_{b_t} = (1-\alpha) \langle x_{b_t}, \theta_\star \rangle \} \label{rewriting_def_safe_estimated_action_set}
% \end{align}

Here, the learner does not know the value of $r_{b_t}$; however, she knows that the unknown parameter $\theta_{\star}$ falls in the confidence region $\mathcal{E}_t$ with high probability. Hence, we can upper bound the RHS of \eqref{cons:safety} with $\max_{ v \in \mathcal{E}_t} \langle x_{b_t}, v \rangle \geq r_{b_t}$. Therefore, any action $x$ that satisfies \begin{align}
    \min_{v \in \mathcal{E}_t} \langle x(\Tilde{\theta}_t), v \rangle \geq (1 - \alpha) \max_{ v \in \mathcal{E}_t} \langle x_{b_t}, v \rangle,
\end{align} is safe with high probability. In order to ensure safety, SCLTS2 only plays the safe actions from the  estimated safe actions set $ \mathcal{Z}_t^s  =  \{ x \in \mathcal{X} :\min_{v \in \mathcal{E}_t} \langle x, v \rangle \geq  (1 - \alpha) \max_{ v \in \mathcal{E}_t} \langle x_{b_t}, v \rangle\}$.  We report the details on SCLTS2 in Appendix \ref{app:sclts2:explantions}.  

Next, we provide an upper bound on the regret of  SCLTS2.  To do so, we first use the decomposition  in Proposition \ref{decompostion_regret}. The regret of Term I is similar to that of SCLTS (Theorem \ref{thm:bounding_term_I_regret_of_safe_lts}), and in Theorem \ref{upperbound:unkownbaseline}, we prove an upper bound on the number of time SCLTS2 plays the conservative actions. Note that similar steps can be generalized to the setting of additional side constraints with bandit feedback.

\begin{theorem}\label{upperbound:unkownbaseline}
Let $\lambda , L \geq 1$. On event $\{ \theta_\star \in \mathcal{E}_t, \forall t \in [T] \}$, and  under Assumption \ref{ass:lowerbound_reward}, we can upper bound the number of times SCLTS2 plays the conservative actions, i.e., $|N_T^c|$ as: \begin{align}
    | N_{T}^c| \leq  \left( \frac{2 L  \beta_{T} (2 - \alpha)}{\rho_3 \sigma_{\zeta}(\kappa_l + \alpha r_l  )} \right)^2   + \frac{2h_3^2}{\rho_3^4 \sigma_{ \zeta}^4} \log(\frac{d}{\delta}) + 
\frac{ 2L h_3 \beta_T (2-\alpha)  }{\rho_3^3 \sigma_{\zeta}^3 (\kappa_l+\alpha r_l )} \sqrt{8 \log(\frac{d}{\delta})},
\end{align} where $h_3 = 2 \rho_3 (1-\rho_3) L + 2 \rho_3^2 $ and $\rho_3 = (\frac{r_l}{S+1}) \alpha$.
\end{theorem}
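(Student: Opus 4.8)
The plan is to mirror the proof of Theorem~\ref{upperbound:numberofbaseline}, adapting its two ingredients---an upper bound on $\lambda_{\min}(V_\tau)$ that holds at every conservative round, and a high-probability lower bound on $\lambda_{\min}(V_\tau)$ generated by the randomized conservative plays---to the modified safe set $\mathcal{Z}_t^s=\{x\in\mathcal{X}:\min_{v\in\mathcal{E}_t}\langle x,v\rangle\ge(1-\alpha)\max_{v\in\mathcal{E}_t}\langle x_{b_t},v\rangle\}$. The only structural change is that SCLTS2 replaces the known baseline reward $r_{b_t}$ by the upper-confidence surrogate $\max_{v\in\mathcal{E}_t}\langle x_{b_t},v\rangle$, which inflates the effective threshold on $\lambda_{\min}(V_t)$ and is the source of the factor $(2-\alpha)$.

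First I would pin down the threshold. On the event $\{\theta_\star\in\mathcal{E}_t,\forall t\}$ I will show that $\lambda_{\min}(V_t)\ge k_t:=\left(\tfrac{2L\beta_t(2-\alpha)}{\kappa_l+\alpha r_l}\right)^2$ forces $x_\star\in\mathcal{Z}_t^s$, so the safe set is nonempty and SCLTS2 plays a safe (not conservative) action. This follows from $\min_{v\in\mathcal{E}_t}\langle x_\star,v\rangle\ge\langle x_\star,\theta_\star\rangle-2\beta_t\|x_\star\|_{V_t^{-1}}$ and $(1-\alpha)\max_{v\in\mathcal{E}_t}\langle x_{b_t},v\rangle\le(1-\alpha)\big(r_{b_t}+2\beta_t\|x_{b_t}\|_{V_t^{-1}}\big)$, combined with $\|x_\star\|_{V_t^{-1}},\|x_{b_t}\|_{V_t^{-1}}\le L/\sqrt{\lambda_{\min}(V_t)}$, $\kappa_{b_t}=\langle x_\star,\theta_\star\rangle-r_{b_t}\ge\kappa_l$, and $r_{b_t}\ge r_l$. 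The extra baseline confidence term $2\beta_t\|x_{b_t}\|_{V_t^{-1}}$, which is absent in SCLTS since there $r_{b_t}$ is known exactly, is precisely what adds the $(2-\alpha)$ weight. Contrapositively, every conservative round $\tau$ satisfies $\lambda_{\min}(V_\tau)<k_\tau\le\left(\tfrac{2L\beta_T(2-\alpha)}{\kappa_l+\alpha r_l}\right)^2$, using $\beta_\tau\le\beta_T$.

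Next I would lower bound $\lambda_{\min}(V_{\tau^\star})$ at the last conservative round $\tau^\star$, noting $|N_{\tau^\star}^c|=|N_T^c|$. Dropping the PSD contributions of the safe rounds and of $\lambda I$ gives $V_{\tau^\star}\succeq\sum_{s\in N_{\tau^\star}^c}x_s^{\text{cb}}(x_s^{\text{cb}})^\top$. Since $x_s^{\text{cb}}=(1-\rho_3)x_{b_s}+\rho_3\zeta_s$ with $\zeta_s$ zero-mean, the conditional expectation of each rank-one term equals $(1-\rho_3)^2x_{b_s}x_{b_s}^\top+\rho_3^2\,\text{Cov}(\zeta_s)\succeq\rho_3^2\sigma_\zeta^2 I$, so the summed conditional means have minimum eigenvalue at least $\rho_3^2\sigma_\zeta^2|N_T^c|$. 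Each martingale difference has operator norm at most $h_3=2\rho_3(1-\rho_3)L+2\rho_3^2$ (the cross term contributes $2\rho_3(1-\rho_3)L$, the centered $\zeta_s\zeta_s^\top$ term contributes $2\rho_3^2$), so the matrix Azuma bound of Lemma~\ref{lemma:lowerbounding_lambdamin-vt} yields, with probability $1-\delta$, $\lambda_{\min}(V_{\tau^\star})\ge\rho_3^2\sigma_\zeta^2|N_T^c|-h_3\sqrt{2|N_T^c|\log(d/\delta)}$.

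Finally I would intersect the two events and combine the bounds into $\rho_3^2\sigma_\zeta^2|N_T^c|-h_3\sqrt{2|N_T^c|\log(d/\delta)}<\left(\tfrac{2L\beta_T(2-\alpha)}{\kappa_l+\alpha r_l}\right)^2$, a quadratic inequality in $\sqrt{|N_T^c|}$ of the form $a n-b\sqrt{n}-c<0$ with $a=\rho_3^2\sigma_\zeta^2$, $b=h_3\sqrt{2\log(d/\delta)}$, $c=\left(\tfrac{2L\beta_T(2-\alpha)}{\kappa_l+\alpha r_l}\right)^2$. Solving for $\sqrt{n}$ and bounding $\sqrt{b^2+4ac}\le b+2\sqrt{ac}$ gives $n<\tfrac{c}{a}+\tfrac{b^2}{a^2}+\tfrac{2b\sqrt{c}}{a^{3/2}}$, which is exactly the three advertised terms. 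I expect the matrix-concentration step to be the main obstacle: the delicate part is verifying the operator-norm bound $h_3$ on the differences and invoking the matrix Azuma/Freedman inequality with the correct dimensional $\log(d/\delta)$ factor. By comparison, the recalibration $\rho_3=\tfrac{r_l}{S+1}\alpha$ (a restatement of Lemma~\ref{upper_bound_rho} that uses $r_{b_t}\le1$ in place of $r_h$ to keep the conservative actions safe when $r_{b_t}$ is unknown) and the derivation of $k_t$ are routine.
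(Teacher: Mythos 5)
Your proposal is correct and follows essentially the same route as the paper's proof: show that any conservative round forces $\lambda_{\text{min}}(V_\tau) < \left(\frac{2L\beta_\tau(2-\alpha)}{\kappa_l+\alpha r_l}\right)^2$ (the $(2-\alpha)$ factor arising exactly as you say, from the extra $2\beta_\tau\|x_{b_\tau}\|_{V_\tau^{-1}}$ confidence width on the unknown baseline reward), lower-bound $\lambda_{\text{min}}(V_\tau)$ in terms of $|N_\tau^c|$ via matrix Azuma applied to the randomized conservative actions (Lemma \ref{lemma:lowerbounding_lambdamin-vt}), and solve the resulting quadratic inequality in $\sqrt{|N_T^c|}$ (Lemma \ref{algebra_for_upperbound}). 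The only cosmetic difference is that you argue the contrapositive ($\lambda_{\text{min}}$ large $\Rightarrow x_\star\in\mathcal{Z}_t^s\Rightarrow$ feasible) while the paper starts from infeasibility at $x_\star$; these are logically equivalent, and your constants match the theorem statement.
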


 \begin{remark}
The regret of SCLTS2 has order of $\mathcal{O} \left(  \frac{L^2 d  \log(\frac{T}{\delta})  \log(\frac{d}{\delta}) (2-\alpha)^2   }{\alpha^4  (r_l^2 \wedge r_l^4)\kappa_l ( \sigma_{\zeta}^2 \wedge \sigma_{\zeta}^4) } \right)$, which has the same rate as that of SCLTS. Therefore, the lack of information about the reward function only hurt the regret with a constant $(2-\alpha)^2$. 
 \end{remark}

 \vspace{-0.25cm}
 \section{Numerical Results}
  \vspace{-0.14cm}
% In this section, we investigate the numerical performance of SCLTS and SCLUCB on synthetic data, and compare it with SEGE algorithm introduced by \cite{khezeli2019safe}. The details on the simulations are deferred to Appendix \ref{app:simulations_details}.

% In Figure \ref{fig:comparison:variance_clts_sege} (left), we plot the average cumulative regret  for  SCLTS and SCLUCB algorithms versus  SEGE algorithms  from \cite{khezeli2019safe}. The shaded regions show  standard deviation around the mean. Figure \ref{fig:comparison:variance_clts_sege} (left) shows the strong dependency of the performance of SEGE algorithm on the specific problem instance. However, the regret of SCLTS and SCLUCB algorithms do not vary significantly under different problem instances.
% Figure \ref{fig:comparison:variance_clts_sege} (middle) shows the average regret of SCLTS for different values of $\alpha$. It shows that for the smaller value of $\alpha$, SCLTS needs to play more conservative actions in order to satisfy safety constraint, and hence suffers a larger regret. 
% Figure \ref{fig:comparison:variance_clts_sege} (right) illustrates the expected reward of SCLTS algorithm in the first $3000$ rounds for $\alpha=0.2$. 

     \begin{figure*}
     \centering
          \includegraphics[width=0.3\textwidth]{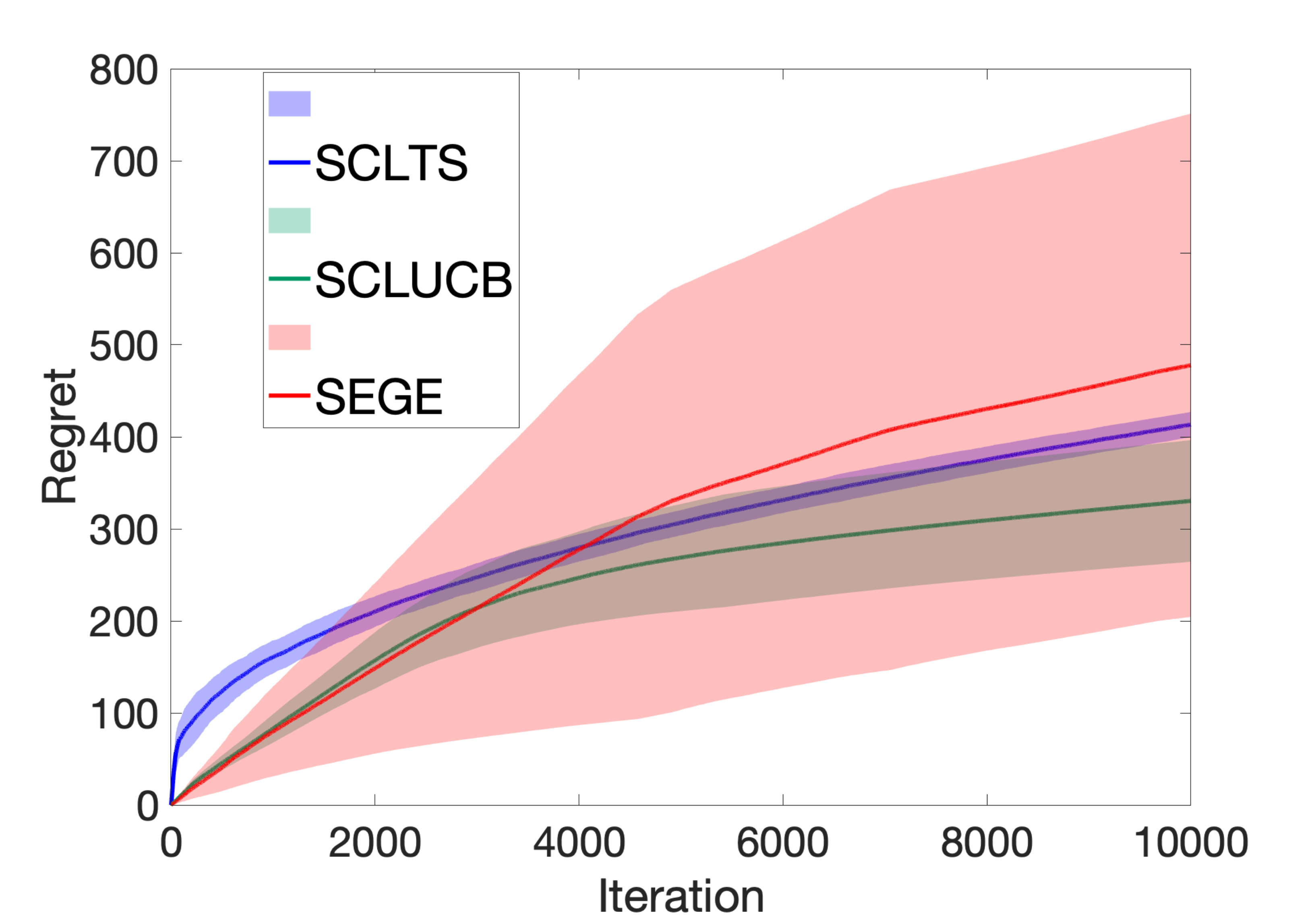} 
          \includegraphics[width=0.3\textwidth]{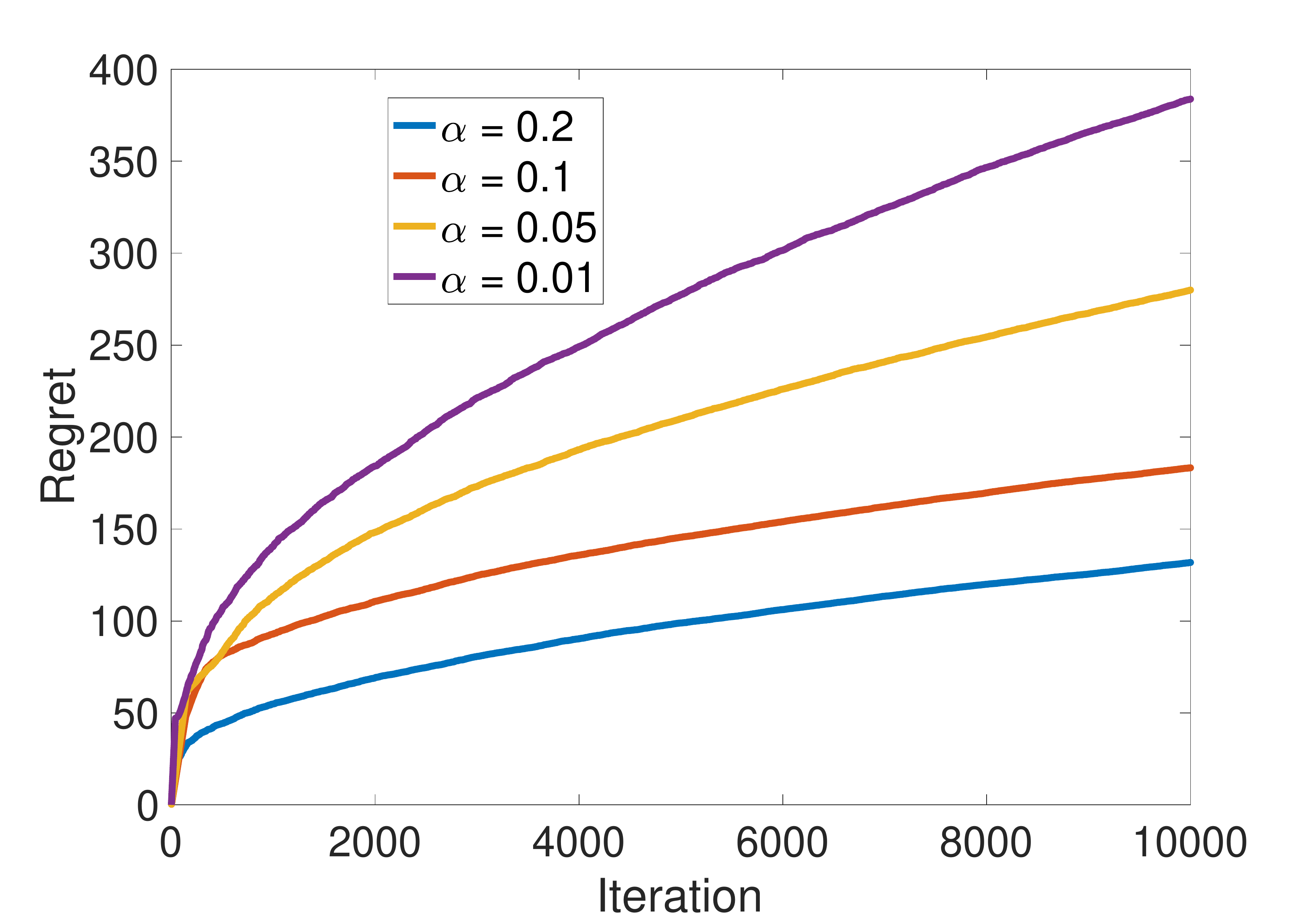}
          \includegraphics[width=0.3\textwidth]{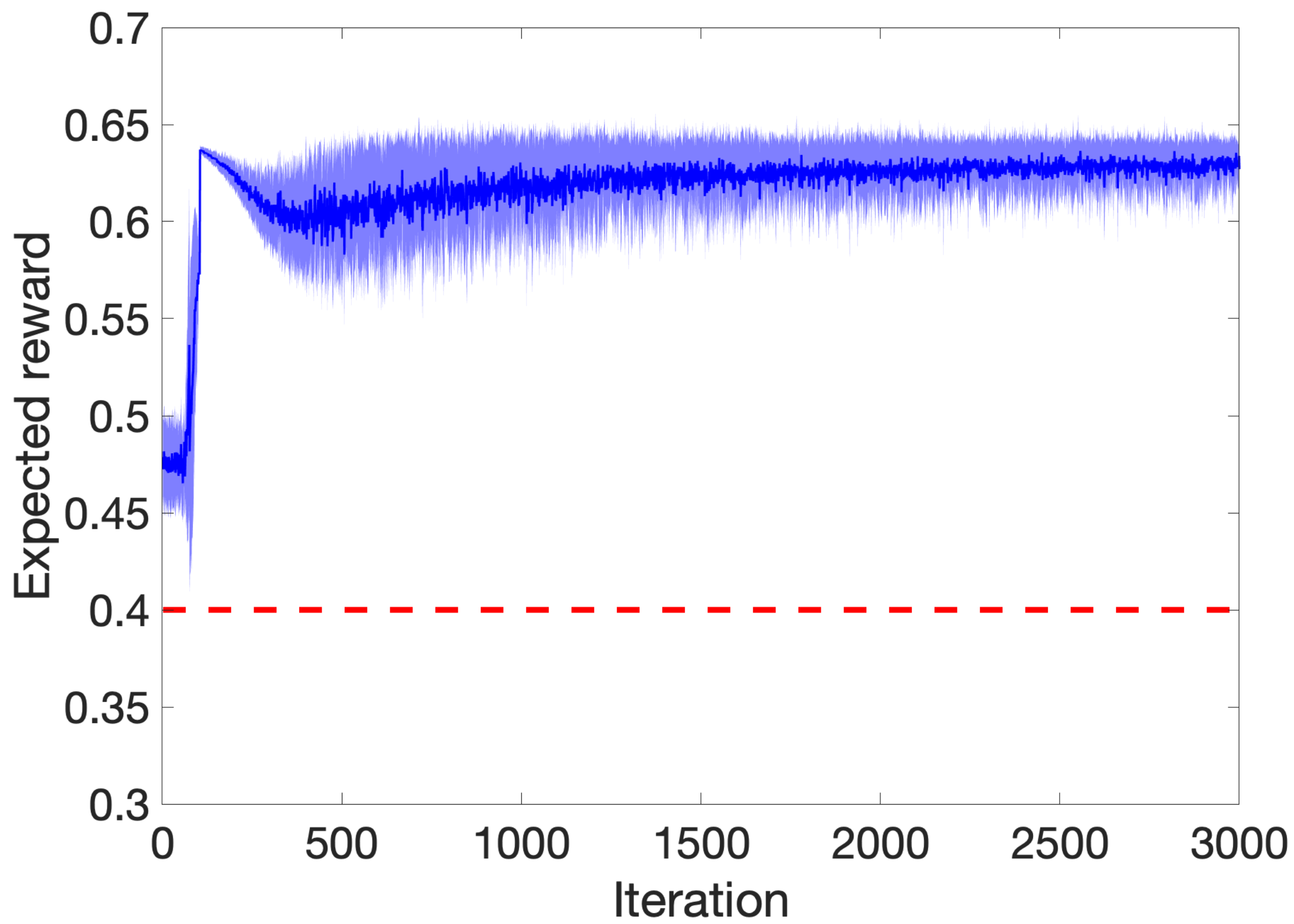}
         \caption{Left: comparison of the  cumulative regret of SCLTS  and SCLUCB  versus SEGE algorithm in \cite{khezeli2019safe}. Middle: average regret (over 100 runs) of SCLTS algorithm  for different values of $\alpha$. Right:  expected reward under SCLTS algorithm in the first $3000$ rounds for $\alpha = 0.2$. }\label{fig:comparison:variance_clts_sege}
   \end{figure*}
In this section,
we investigate the numerical performance of SCLTS and SCLUCB on synthetic data, and compare it with SEGE algorithm introduced by \cite{khezeli2019safe}. In all the implementations, we used the following parameters: $R=0.1, S=1, \lambda = 1, d=2$.  We consider the action set $\mathcal{X}$ to be a unit ball centered on the origin. The reward parameter $\theta_\star$ is drawn from $\mathcal{N}(0, I_4)$. We generate the sequence $\{\zeta_t \}_{t=1}^\infty$ to be IID random vectors that are uniformly distributed on the unit circle. The results are averaged over 100 realizations. 

In Figure \ref{fig:comparison:variance_clts_sege}(left), we plot the cumulative regret of the SCLTS algorithm   and SCLUCB  and SEGE algorithm   from \cite{khezeli2019safe} for $\alpha = 0.2$ over $100$ realizations. The shaded regions show  standard deviation around the mean. In view of the discussion in \cite{Dani08stochasticlinear} regarding computational issues of LUCB algorithms with confidence regions specified with $\ell_2$-norms, we implement a modified version of Safe-LUCB
which uses $\ell_1$-norms instead of $\ell_2$-norms. 
Figure \ref{fig:comparison:variance_clts_sege}(left)  shows that SEGE algorithm suffers a high variance of the regret over different problem instances which shows the strong dependency of the performance of SEGE algorithm on the specific problem instance. However, the regret of SCLTS and SCLUCB algorithms do not vary significantly under different problem instances, and has a low variance. Moreover, the regret of SEGE algorithm grows faster in the beginning steps, since it heavily relies on the baseline action in order to satisfy the safety constraint. However, the randomized nature of SCLTS leads to a natural exploration ability that is much faster in expanding the estimated safe set, and hence it plays the baseline actions less frequently than SEGE algorithm even in the initial exploration stages. 

In Figure \ref{fig:comparison:variance_clts_sege}(middle), we plot the average regret of SCLTS for different values of $\alpha$ over a horizon $T=10000$. Figure \ref{fig:comparison:variance_clts_sege}(middle) shows that,  SCLTS has a better performance (i.e., smaller regret) for the larger value of $\alpha$, since for the small value of $\alpha$, SCLTS needs to be more conservative in order to satisfy the safety constraint, and hence it plays more baseline actions. Moreover, Figure \ref{fig:comparison:variance_clts_sege}(right) illustrates the expected reward of SCLTS algorithm in the first $3000$ rounds. In this setting,  we assume there exists one baseline action $x_b = [0.6,0.5]$, which is available to the learner, $\theta_\star = [0.5,0.4]$ and the safety fraction $\alpha = 0.2$. Thus, the safety threshold is $(1-\alpha) \langle x_b, \theta_\star \rangle = 0.4$ (shown as a dashed red line), which  SCLTS respects in all rounds. In particular,  in initial rounds, SCTLS plays the conservative actions in order to respect the safety constraint, which as shown have an expected reward close to 0.475. Over time as the algorithm achieves a better estimate of the unknown parameter $\theta_\star$, it is able to play more optimistic actions and as such receives higher rewards. 
 \vspace{-0.3cm}
\section{Conclusion}
 \vspace{-0.15cm}
In this paper, we study the stage-wise conservative linear stochastic bandit problem. Specifically, we consider safety constraints that requires the action chosen by the learner at each individual stage to have an expected reward higher than a predefined fraction of the reward of a given baseline policy.  We propose extensions of Linear Thompson Sampling and Linear UCB  in order to minimize the regret of the learner while respecting safety constraint with high probability and provide regret guarantees for them. We also consider the setting of constraints with bandit feedback, where the safety constraint has a different unknown parameter than that of the reward function, and we propose the SCLTS-BF algorithm to handle this case. Third, we study the case where the rewards of the baseline actions are unknown to the learner. Lastly, our numerical experiments compare the performance of our algorithm to SEGE of \cite{khezeli2019safe} and showcase the value of the randomized nature of our exploration phase. In particular, we show that the randomized nature of SCLTS leads to a natural exploration ability that is  faster in expanding the estimated safe set, and hence SCLTS plays the baseline actions less frequently as theoretically shown. For future work, natural extension of the problem setting to generalized linear bandits, and possibly with generalized linear constrains might be of interest. 

 \vspace{-0.3cm}
\section{Acknowledgment}
 \vspace{-0.2cm}
This research is supported by NSF grant 1847096. C. Thrampoulidis was partially supported by the NSF under Grant Number 1934641.
 \vspace{-0.3cm}
\section{Broader Impact}
 \vspace{-0.2cm}
The main goal of this paper is to design and study novel “safe” learning algorithms for safety-critical systems with provable performance guarantees. 
% An example that might benefit from the design of stage-wise conservative learning algorithms arises in societal-scale infrastructure networks such as communication/transportation/data network infrastructure. We focus on the case where the reliability requirements of network operation at each round depends on the reward of the selected action. We consider baseline policy  to  suggest actions that acquire the largest reward known to date, and respect the reliability requirements at each round. In this case, the stage-wise conservative constraint modeled in this paper ensures that at each stage, the reward of action employed by learning algorithm if not better, should be arbitrary close to that of baseline policy, and hence the reliability requirement for network operation must not be violated by the learner.
An example arises in clinical trials where the effect of different therapies on patient's health is not known in advance. We select the baseline actions to be the therapies that have been historically chosen by medical practitioners, and the reward captures the effectiveness of the chosen therapy. The  stage-wise conservative constraint modeled in this paper ensures that at each round the learner should choose a therapy which results in an expected reward if not better, must be  close to the baseline policy. 
Another example arises in societal-scale infrastructure networks such as communication/power/transportation/data network infrastructure.  We focus on the case where the reliability requirements of network operation at each round depends on the reward of the selected  action and certain {\it baseline} actions are known to not violate system constraints and achieve certain levels of operational efficiency as they have been used widely in the past. In this case, the stage-wise conservative constraint modeled in this paper ensures that at each round, the reward of action employed by learning algorithm if not better, should be close to that of baseline policy in terms of network efficiency, and the reliability requirement for network operation must not be violated by the learner. Another example is  in recommender systems that at each round, we wish to avoid recommendations that are extremely disliked by the users. Our proposed stage-wise conservative constrains ensures that at no round would the recommendation system cause severe dissatisfaction for the users (consider perhaps how a really bad personal movie recommendation from a streaming platform would severely affect your view of the said platform).

\bibliographystyle{apalike}
% \bibliography{strings,refs}
\bibliography{bibfile.bib}

\newpage
\appendix

\section{Proof of Proposition \ref{decompostion_regret}}

From the definition of regret, we can write \begin{align}
    R(T) &= \sum_{t \in N_T} \left(\langle x_{\star}, \theta_{\star} \rangle - \langle x_t, \theta_{\star} \rangle \right) + \sum_{t \in N_T^c}  \left(\langle x_{\star}, \theta_{\star} \rangle - \langle (1-\rho_1)x_{b_t} - \rho_1 \zeta_t, \theta_{\star} \rangle \right) \nonumber \\& =  \sum_{t \in N_T} \left(\langle x_{\star}, \theta_{\star} \rangle - \langle x_t, \theta_{\star} \rangle \right) + \sum_{t \in N_T^c} \bigg(\langle x_{\star}, \theta_{\star} \rangle - \langle x_{b_t}, \theta_\star \rangle + \rho_1 \langle x_{b_t}, \theta_\star \rangle + \rho_1 \langle \zeta_t , \theta_\star \rangle \bigg)
    \nonumber\\& \leq \sum_{t \in N_T} \left(\langle x_{\star}, \theta_{\star} \rangle - \langle x_t, \theta_{\star} \rangle \right)+ |N_T^c| \left(\kappa_h + \rho_1 (r_h + S) \right). 
\end{align}

\section{Proof of Lemma \ref{upper_bound_rho}}

In order to ensure that the  conservative action $x_t = (1-\rho) x_{b_t} - \rho \zeta_t$ is safe, we need to show that it satisfies \eqref{cons:safety}. Hence, it suffices to  show that \begin{align}
    \langle (1-\rho) x_{b_t} - \rho \zeta_t, \theta_{\star} \rangle \geq (1-\alpha) r_{b_t}. \label{showing_lower_bound_for_conservative_actions}
\end{align} We can lower bound the LHS of \eqref{showing_lower_bound_for_conservative_actions} as follows: \begin{align}
    \langle (1-\rho) x_{b_t} - \rho \zeta_t, \theta_{\star} \rangle = r_{b_t} - \rho r_{b_t} - \rho \langle  \zeta_t, \theta_{\star} \rangle \geq r_{b_t} - \rho r_{b_t} - \rho S. \nonumber
\end{align} Recall that $\|\zeta_t \|_2 = 1$ almost surely, and due to Assumption \ref{ass:bounded_parameter}, we know that  $\|\theta_{\star}\|_2 \leq S$. Hence,  it suffices to show that \begin{align}
r_{b_t} - \rho r_{b_t} - \rho S \geq (1-\alpha) r_{b_t}, \nonumber
\end{align} or equivalently, 
 \begin{align}
 \rho r_{b_t} + \rho S \leq \alpha r_{b_t} \label{baseline_noise_bound}
\end{align}   From \eqref{baseline_noise_bound} we can write \begin{align}
    \rho \leq \frac{\alpha r_{b_t}}{S + r_{b_t}}. \label{real_upper_bound_on_rho}
\end{align} Therefore, for any $\rho $ satisfying \eqref{real_upper_bound_on_rho}, the conservative action $x_t = (1-\rho) x_{b_t} + \rho \zeta_t$ is guaranteed to be safe almost surely. Then, we lower bound the right hand side of \eqref{real_upper_bound_on_rho} using Assumption \ref{ass:lowerbound_reward}, and we establish the following upper bound on $\rho$, \begin{align}
    \rho \leq \frac{\alpha r_{l}}{S + r_h}. \label{final_upper_bound_on_rho}
\end{align} Therefore, for any $\rho \in (0,\bar{\rho})$, where 
    $\bar{\rho} = \frac{\alpha r_{l}}{S + r_{h}}$, the conservative actions are safe.

\section{Proof of Theorem \ref{thm:bounding_term_I_regret_of_safe_lts}} \label{app:sec:proof_of_theorem_regret_of_sclts}
In this section, we provide an upper bound on the regret of Term I in \eqref{regret:decompos}. We first rewrite   Term I as follows: \begin{align}
    \sum_{t \in N_T} \left(\langle x_{\star}, \theta_{\star} \rangle - \langle x_t, \theta_{\star} \rangle \right) \label{apndx:prf_thm_term_I}
\end{align}
Clearly, it would be beneficial to show that \eqref{apndx:prf_thm_term_I} is non-positive. However, as stated in \cite{abeille2017linear} (in the case of linear TS applied to the standard stochastic linear bandit problem with no safety constraints), this cannot be the case in general. Instead, to bound regret in the unconstrained case, \cite{abeille2017linear}  argues that it suffices to show that \eqref{apndx:prf_thm_term_I} is non-positive with a constant probability. But what happens in  the safety-constrained scenario? It turns out that once the above stated event happens with constant probability  (in our case, in the presence of safety constraints), the rest of the argument by \cite{abeille2017linear} remains unaltered. Therefore, our main contribution in the proof of Theorem \ref{thm:bounding_term_I_regret_of_safe_lts} is to show that \eqref{apndx:prf_thm_term_I} is non-positive with a constant probability in spite of the limitations on actions imposed because of the safety constraints. To do so, let \begin{align}
    \Theta_t^{\text{opt}} = \{ \theta \in \mathbb{R}^d : \langle x(\theta), \theta \rangle \geq \langle x_\star, \theta_\star \rangle \},
\end{align} be the so-called \textit{set of optimistic parameters}, where $x(\tilde{\theta}_t) = \arg\max_{x \in \mathcal{X}_t^s} \langle x, \tilde{\theta}_t \rangle$  is the optimal safe action for the sampled parameter $\tilde{\theta}_t$ chosen from the estimated safe action set $\mathcal{X}_t^s$.  LTS is considered optimistic at round $t$, if it samples the parameter $\tilde{\theta}_t$ from the set of optimistic parameters $\Theta_t^{\text{opt}}$ and plays the action $x(\tilde{\theta}_t)$.  In Lemma \ref{lemma:optimistic}, we show that SCLTS is optimistic with  constant probability despite the safety constraints. Before that, let us restate the distributional properties put forth in \cite{abeille2017linear} for the noise $\eta \sim \mathcal{H}^{\text{TS}}$ that are required to ensure the right balance of exploration and exploitation.
\begin{definition} ( Definition 1. in \cite{abeille2017linear}) \label{distributioanl_propeorty}
$\mathcal{H}^{\text{TS}}$ is a multivariate distribution on $\mathbb{R}^d$  absolutely continuous with respect to the Lebesgue measure which satisfies the following properties: \begin{itemize}
    \item (anti-concentration) there exists a strictly positive probability $p$ such that for any $u \in \mathbb{R}^d$ with $\| u \|_2 = 1$, \begin{align}
        \mathbb{P}_{\eta \sim \mathcal{H}^{\text{TS}} }\left(  \langle u, \eta     \rangle \geq 1 \right) \geq p.
    \end{align}
    \item (concentration) there exists positive constants $c,c'$ such that $\forall \delta \in (0,1)$ \begin{align}
        \mathbb{P}_{\eta \sim \mathcal{H}^{\text{TS}} }\left(  \|\eta\| \leq \sqrt{cd \log(\frac{c'd}{\delta})} \right) \geq 1-\delta.
    \end{align}
\end{itemize}
\end{definition}
\begin{lemma}\label{lemma:optimistic}
Let $\Theta_t^{\text{opt}} = \{ \theta \in \mathbb{R}^d : \langle x(\theta), \theta \rangle \geq \langle x_\star, \theta_\star \rangle \}$ be the set of the optimistic parameters. For round $t\in N_T$, SCLTS  samples the optimistic parameter $\tilde{\theta}_t \in \Theta_t^{\text{opt}}$ and plays the corresponding safe action  $x(\tilde{\theta}_t)$ frequently enough, i.e., \begin{align}
    \mathbb{P}(\tilde{\theta}_t \in \Theta_t^{\text{opt}} ) \geq p.
\end{align}
\end{lemma}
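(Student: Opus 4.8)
\section{Proof of Lemma \ref{lemma:optimistic}}

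The plan is to reduce the constrained optimism event to the unconstrained anti-concentration argument of \cite{abeille2017linear}, exploiting the crucial fact that on every round $t \in N_T$ the true optimal action $x_\star$ is itself a member of the estimated safe set $\mathcal{X}_t^s$. Once this is known, $x_\star$ is feasible in the maximization defining $x(\tilde{\theta}_t)$, and the safe maximizer automatically dominates $\langle x_\star, \tilde{\theta}_t\rangle$, so the presence of the safety constraint becomes invisible to the optimism computation.

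The key structural step I would establish first is that, on the event $\{\theta_\star \in \mathcal{E}_t\}$ together with the gating condition $\lambda_{\text{min}}(V_t) \geq k_t^1 = \big(2L\beta_t/(\kappa_l + \alpha r_l)\big)^2$ that characterizes rounds in $N_T$, one has $x_\star \in \mathcal{X}_t^s$. Using the convex-quadratic description \eqref{def:convex-quadratic_estimatedactionset}, it suffices to verify $\langle x_\star, \hat{\theta}_t\rangle - \beta_t\|x_\star\|_{V_t^{-1}} \geq (1-\alpha) r_{b_t}$. Since $\theta_\star \in \mathcal{E}_t$, Cauchy--Schwarz gives $\langle x_\star, \hat{\theta}_t\rangle \geq \langle x_\star, \theta_\star\rangle - \beta_t\|x_\star\|_{V_t^{-1}}$, so the left-hand side is at least $r_\star - 2\beta_t\|x_\star\|_{V_t^{-1}}$. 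Because $r_\star - (1-\alpha)r_{b_t} = \kappa_{b_t} + \alpha r_{b_t} \geq \kappa_l + \alpha r_l$ by Assumption \ref{ass:lowerbound_reward}, while $\|x_\star\|_{V_t^{-1}} \leq L/\sqrt{\lambda_{\text{min}}(V_t)}$ by Assumption \ref{ass:actionset}, the eigenvalue condition forces $2\beta_t\|x_\star\|_{V_t^{-1}} \leq \kappa_l + \alpha r_l$, which yields the claim.

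Given $x_\star \in \mathcal{X}_t^s$, feasibility implies $\langle x(\tilde{\theta}_t), \tilde{\theta}_t\rangle \geq \langle x_\star, \tilde{\theta}_t\rangle$, so it is enough to lower bound $\mathbb{P}(\langle x_\star, \tilde{\theta}_t\rangle \geq \langle x_\star, \theta_\star\rangle)$. Substituting $\tilde{\theta}_t = \hat{\theta}_t + \beta_t V_t^{-1/2}\eta_t$, this event reads $\beta_t\langle V_t^{-1/2}x_\star, \eta_t\rangle \geq \langle x_\star, \theta_\star - \hat{\theta}_t\rangle$; bounding the right-hand side by $\beta_t\|x_\star\|_{V_t^{-1}}$ on $\{\theta_\star \in \mathcal{E}_t\}$ and setting the unit vector $u = V_t^{-1/2}x_\star/\|V_t^{-1/2}x_\star\|_2$, the event is implied by $\langle u, \eta_t\rangle \geq 1$. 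Since $u$ is $\mathcal{F}_{t-1}$-measurable and $\eta_t$ is drawn fresh from $\mathcal{H}^{\text{TS}}$, the anti-concentration property in Definition \ref{distributioanl_propeorty} gives $\mathbb{P}(\langle u, \eta_t\rangle \geq 1) \geq p$, which completes the argument.

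The main obstacle is precisely the first step, establishing $x_\star \in \mathcal{X}_t^s$, since this is what distinguishes the constrained setting from \cite{abeille2017linear}. In the unconstrained case the maximizer ranges over all of $\mathcal{X}$ and $x_\star$ is trivially feasible; here $\mathcal{X}_t^s$ is a random, possibly much smaller and even empty set, and only the gating rule $\lambda_{\text{min}}(V_t) \geq k_t^1$, which shrinks the confidence half-width $\beta_t\|x_\star\|_{V_t^{-1}}$ below the safety margin $\kappa_l + \alpha r_l$, secures membership of $x_\star$. Once this is in place the remaining computation is a verbatim transfer of the anti-concentration argument, which explains why the regret of Term I ultimately matches the unconstrained LTS bound.
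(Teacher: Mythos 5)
Your proposal is correct and follows essentially the same route as the paper's own proof: first establish $x_\star \in \mathcal{X}_t^s$ on rounds $t \in N_T$ via the eigenvalue gating condition and $\theta_\star \in \mathcal{E}_t$, then reduce optimism to $\mathbb{P}(\langle x_\star, \tilde{\theta}_t\rangle \geq \langle x_\star, \theta_\star\rangle) \geq p$ and invoke the anti-concentration property of $\mathcal{H}^{\text{TS}}$ exactly as in \cite{abeille2017linear}. No gaps.
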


\begin{proof}
We need to show that for rounds $t \in N_T$ \begin{align}
    \mathbb{P} \left(  \langle x(\tilde{\theta}_t) , \tilde{\theta}_t \rangle \geq \langle x_\star, \theta_\star \rangle    \right)\geq p. \label{optimsic_ineqaulity}
\end{align}
First, we show that for rounds $t \in N_T$, $x_\star$ falls in the estimated safe set, i.e., $x_\star \in \mathcal{X}_t^s$. 
To do so, we need to show that 
\begin{align}
    \langle x_\star , \hat{\theta}_t \rangle - \beta_t \| x_\star \|_{V_t^{-1}}  \geq (1-\alpha) r_{b_t}, 
\end{align} using $\| \theta_\star - \hat{\theta}_t \|_{V_t} \leq \beta_t$,  it suffices that \begin{align}
     \langle x_\star, \theta_\star \rangle - 2 \beta_t \| x_\star\|_{V_t^{-1}} \geq (1-\alpha) r_{b_t}. \label{applying_shrunk_safe_set}
\end{align}
But we know  that $\| x_\star \|_{V_t^{-1}} \leq \frac{\|x_\star\|_2}{\sqrt{\lambda_{\text{min}}(V_t)}} \leq \frac{L}{\sqrt{\lambda_{\text{min}}(V_t)}}$, where we also used  Assumption \ref{ass:actionset} to bound $\| x_\star \|_2$. Hence, we can get
\begin{align}
  \langle x_\star, \theta_\star \rangle - 2 \beta_t \| x_\star\|_{V_t^{-1}} \geq    \langle x_\star, \theta_\star \rangle - \frac{2 \beta_t L}{\sqrt{\lambda_{\text{min}}(V_t)}}. \label{finding_lower_bound_on_shrunk_set}
\end{align}
By substituting \eqref{finding_lower_bound_on_shrunk_set} in \eqref{applying_shrunk_safe_set},  it suffices to show that \begin{align}
  \kappa_{b_t}  + \alpha r_{b_t} \geq \frac{2 \beta_t L}{\sqrt{\lambda_{\text{min}}(V_t)}}, \label{to_show_x*_in_xts}
\end{align} or equivalently, \begin{align}
    \lambda_{\text{min}}(V_t) \geq \left(\frac{2 L \beta_t}{\kappa_t + \alpha r_{b_t}}\right) ^2. \label{suffices_to_whow_x*_in_safe_set_estimated}
\end{align} To show \eqref{suffices_to_whow_x*_in_safe_set_estimated}, simply recall that $\lambda_{\text{min}}(V_t) \geq k_t^1$, where $k^1_t = \left(\frac{2 L \beta_t}{\kappa_l + \alpha r_l}\right) ^2$. 
Therefore, $x_\star \in \mathcal{X}_t^s$ for $t \in N_T$. Note that we are not interested in expanding the safe set in all possible directions. Instead, what aligns with the objective of minimizing regret, is expanding the safe set in the “correct” direction, that of $x_\star$. Therefore, $\lambda_{\text{min}}(V_t) \geq \mathcal{O}(\log t)$ provides enough  expansion of the safe set to  bound the Term I in \eqref{regret:decompos}.

The rest of the proof is similar as in \cite[Lemma 3]{abeille2017linear};  we include in here for completeness. 

For rounds $t \in N_T$, we know that \begin{align}
\langle x(\Tilde{\theta}_t), \Tilde{\theta}_t \rangle \geq \langle x_\star, \Tilde{\theta}_t \rangle, \nonumber
\end{align}  since $x(\tilde{\theta}_t) = \arg\max_{x \in \mathcal{X}_t^s} \langle x, \tilde{\theta}_t \rangle$ and we have already shown that $x_\star \in \mathcal{X}_t^s$. Therefore, it suffices to show that 
\begin{align}
    \mathbb{P}\left( \langle x_\star, \Tilde{\theta}_t \rangle \geq \langle x_\star , \theta_\star \rangle \right) \geq p.  \label{showing_optimistic_for_xstar}
\end{align} From the definition of $\tilde{\theta}_t$, we can rewrite \eqref{showing_optimistic_for_xstar} as \begin{align}
    \mathbb{P}\left( \langle x_\star, \hat{\theta}_t \rangle + \beta_t \langle x_\star, V_t^{-1/2} \eta_t \rangle \geq \langle x_\star , \theta_\star \rangle \right) \geq p, \nonumber
\end{align} or equivalently, \begin{align}
    \mathbb{P}\left( \beta_t \langle x_\star, V_t^{-1/2} \eta_t \rangle \geq \langle x_\star , \theta_\star -  \hat{\theta}_t \rangle \right) \geq p. \label{to_use_caucy_for_optimistic_for_xstrar}
\end{align} Then, we use Cauchy-Schwarz for the LHS of \eqref{to_use_caucy_for_optimistic_for_xstrar}, and given the fact that $\| \theta_\star -  \hat{\theta}_t\|_{V_t} \leq \beta_t$, we get \begin{align}
    \mathbb{P}\left( \langle x_\star, V_t^{-1/2} \eta_t \rangle \geq \| x_\star \|_{V_t^{-1/2}} \right) \geq p, \nonumber
\end{align} or equivalently, \begin{align}
    \mathbb{P}\left( \langle u_t, \eta_t \rangle \geq  1\right) \geq p, \label{anti-concen-from-abeile}
\end{align} where $u_t = \frac{x_\star V_t^{-1/2}}{\| x_\star \|_{V_t^{-1/2}}}$. Therefore, $\|u_t \|_2 = 1$ by construction. At last, we know that \eqref{anti-concen-from-abeile} is true thanks to the anti-concentration distributional property of the parameter $\eta_t$ in Definition \ref{distributioanl_propeorty}.
\end{proof}
As mentioned, after showing that SCLTS for rounds $t \in N_T $ samples from the set of optimistic parameters with a constant probability, the rest of the proof for bounding the regret of Term I is similar to that of \cite{abeille2017linear}. In particular, we conclude with the following bound 
\begin{align}
\text{Term I}:= & \sum_{t \in N_T} \left(\langle x_{\star}, \theta_{\star} \rangle - \langle x_t, \theta_{\star} \rangle \right) \nonumber \\& ( \beta_T(\delta') + \gamma_T(\delta') (1+\frac{4}{p}) ) \sqrt{2 |N_T| d \log{ (1 + \frac{|N_T| L^2}{\lambda})}}  + \frac{4 \gamma_T(\delta')}{p} \sqrt{\frac{8 |N_T| L^2}{\lambda} \log{\frac{4}{\delta}}},
\end{align} where $\delta' = \frac{\delta}{6 |N_T|}$, and, \begin{align}
      \gamma_t(\delta) = \beta_t(\delta') \left(  1 + \frac{2}{C} \right) \sqrt{c d \log{(\frac{c' d}{\delta})}}, 
  \end{align} and since $N_T \leq T$, the proof is completed.

\section{Proof of Theorem \ref{upperbound:numberofbaseline}}\label{proof_of_number_bound_theorem}
In this section,
we prove an upper bound of order $\mathcal{O}(\log T)$ on the number of times that SCLTS plays the conservative actions. 

Let $\tau$ be any round that the algorithm plays the conservative action, i.e., at round $\tau$, either $F=0$ or $\lambda_{\text{min}} (V_\tau) < k_\tau^1 = \left(\frac{2 L \beta_\tau}{\kappa_\tau + \alpha r_{b_\tau}}\right) ^2$.
By definition, if $F=0$, we have 
 \begin{align}
    \nexists x \in \mathcal{X} : \langle x , \hat{\theta}_\tau \rangle - \beta_\tau \| x \|_{V_\tau^{-1}}   \geq (1-\alpha) r_{b_\tau},
\end{align} and since we know that $x_\star \in \mathcal{X}$, and $\theta_\star \in \mathcal{E}_t$ with high probability, we can write  \begin{align}
 \langle x_\star , \theta_\star \rangle - 2\beta_\tau \| x_\star \|_{V_\tau^{-1}}  \leq \langle x_\star , \hat{\theta}_\tau \rangle - \beta_\tau \| x_\star \|_{V_\tau^{-1}}  < (1-\alpha) r_{b_\tau}. \label{boundingconstraint_value_IN_DIFFERENT_PARAMETER}
\end{align} From \eqref{boundingconstraint_value_IN_DIFFERENT_PARAMETER}, we can get \begin{align}
    \kappa_{b_\tau} + \alpha r_{b_\tau} < 2 \beta_\tau \| x_\star \|_{V_\tau^{-1}} \leq \frac{2 \beta_\tau L}{\sqrt{\lambda_{\text{min}}(V_\tau)}},
\end{align} and hence the following upper bound on minimum eigenvalue of the Gram matrix: \begin{align}
    \lambda_{\text{min}}(V_\tau) < \left(\frac{2 \beta_\tau L}{\kappa_{b_\tau} + \alpha r_{b_\tau}}\right)^2 \leq k^1_\tau. \label{eq_for_puttinginsidethepapaer}
\end{align}

Therefore, at any round $\tau$ that a conservative action is played, whether it is because $F=0$, or because we have $\{ \lambda_{\text{min}}(V_\tau) < k_\tau \}$, we can always conclude that   \begin{align}
\lambda_{\text{min}}(V_\tau) < k^1_\tau. \label{upperbounding_lambamin_when_conservarive-happens}
\end{align}

The remainder of the proof builds on two auxiliary lemmas. First,
 in  Lemma \ref{lemma:lowerbounding_lambdamin-vt}, we show that the minimum eigenvalue of the Gram matrix $V_t$ is lower bounded with the number of times SCLTS plays the conservative actions.

\begin{lemma}\label{lemma:lowerbounding_lambdamin-vt}
Under Assumptions \ref{ass:sub-gaussian_noise}, \ref{ass:bounded_parameter}, and \ref{ass:actionset}, it holds that \begin{align}
    \mathbb{P}(  \lambda_{\text{min}} (V_{t}) \leq t ) \leq d \exp{ \left( - \frac{(\rho_1^2 |N_t^c| \sigma_{\zeta}^2 - t)^2}{8 |N_t^c| h_1^2 } \right)},
\end{align} where $h_1 = 2 \rho_1 (1-\rho_1) L + 2 \rho_1^2 $ and $\rho_1 = (\frac{r_l}{S+r_h}) \alpha$.
\end{lemma}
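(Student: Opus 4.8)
The plan is to discard the positive–semidefinite contributions of the optimistic rounds and control $\lambda_{\text{min}}(V_t)$ purely through the randomized conservative actions. Since $V_t = \lambda I + \sum_{s \in N_{t-1}} x_s x_s^{\top} + \sum_{s \in N_{t-1}^c} x_s^{\text{cb}}(x_s^{\text{cb}})^{\top}$ and the first two terms are PSD, I would first use monotonicity of $\lambda_{\text{min}}$ to write $\lambda_{\text{min}}(V_t) \ge \lambda_{\text{min}}(S_t)$ with $S_t := \sum_{s \in N_{t-1}^c} x_s^{\text{cb}}(x_s^{\text{cb}})^{\top}$, so that $\{\lambda_{\text{min}}(V_t) \le t\} \subseteq \{\lambda_{\text{min}}(S_t) \le t\}$ and it suffices to bound the probability of the latter.

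Next I would expand each rank-one term via $x_s^{\text{cb}} = (1-\rho_1)x_{b_s} + \rho_1 \zeta_s$ and split it into its conditional mean and a martingale increment $Y_s := x_s^{\text{cb}}(x_s^{\text{cb}})^{\top} - \mathbb{E}[x_s^{\text{cb}}(x_s^{\text{cb}})^{\top} \mid \mathcal{F}_{s-1}]$. Because $x_{b_s}$ is $\mathcal{F}_{s-1}$-measurable and $\zeta_s$ is zero-mean, the cross terms vanish in conditional expectation, giving $\mathbb{E}[x_s^{\text{cb}}(x_s^{\text{cb}})^{\top} \mid \mathcal{F}_{s-1}] = (1-\rho_1)^2 x_{b_s} x_{b_s}^{\top} + \rho_1^2 \text{Cov}(\zeta_s) \succeq \rho_1^2 \sigma_\zeta^2 I$, where I used $\lambda_{\text{min}}(\text{Cov}(\zeta_s)) = \sigma_\zeta^2$. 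Summing over the conservative rounds yields the deterministic bound $\lambda_{\text{min}}\big(\sum_{s \in N_{t-1}^c} \mathbb{E}[\cdot \mid \mathcal{F}_{s-1}]\big) \ge |N_{t-1}^c| \rho_1^2 \sigma_\zeta^2$, and Weyl's inequality then gives $\lambda_{\text{min}}(S_t) \ge |N_{t-1}^c| \rho_1^2 \sigma_\zeta^2 - \lambda_{\text{max}}\big(-\sum_{s \in N_{t-1}^c} Y_s\big)$, reducing the claim to a deviation bound on a matrix-martingale sum.

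For the deviation term I would bound the increments in operator norm: the cross term contributes at most $2\rho_1(1-\rho_1)L$ (using $\|x_{b_s}\|_2 \le L$ and $\|\zeta_s\|_2 = 1$) and the centered piece $\rho_1^2(\zeta_s \zeta_s^{\top} - \text{Cov}(\zeta_s))$ at most $2\rho_1^2$, so $\|Y_s\|_{\text{op}} \le h_1 = 2\rho_1(1-\rho_1)L + 2\rho_1^2$. A key structural observation is that the event $\{s \in N_{t-1}^c\}$ is $\mathcal{F}_{s-1}$-measurable — both the feasibility flag $F$ and the test $\lambda_{\text{min}}(V_s) \ge k_s^1$ are determined by the past — so the increments $Y_s$, extended by zero off the conservative rounds, form a genuine matrix martingale-difference sequence with \emph{predictable} squared-norm bound summing to $|N_{t-1}^c| h_1^2$. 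Applying the matrix Azuma/Freedman inequality to $\{-Y_s\}$ gives $\mathbb{P}\big(\lambda_{\text{max}}(-\sum Y_s) \ge \varepsilon\big) \le d \exp\big(-\varepsilon^2/(8 |N_{t-1}^c| h_1^2)\big)$; setting $\varepsilon = \rho_1^2 |N_{t-1}^c| \sigma_\zeta^2 - t$ and combining with the previous step delivers the stated bound.

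I expect the main obstacle to be the correct handling of the adaptivity: because conservative rounds are chosen on the fly and $|N_{t-1}^c|$ is itself random, one must carefully verify that the conservative-play indicator is predictable before invoking matrix concentration, and must use a version of the matrix martingale inequality whose variance proxy may be taken predictable so that $|N_{t-1}^c| h_1^2$ (rather than the cruder $(t-1)h_1^2$) appears in the exponent. The remaining estimates — the $\succeq \rho_1^2 \sigma_\zeta^2 I$ lower bound on the conditional mean and the $h_1$ operator-norm bound on $Y_s$ — are then routine.
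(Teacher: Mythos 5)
Your proposal follows essentially the same route as the paper's proof: drop the PSD contributions of the non-conservative rounds, lower-bound the conditional mean of each conservative rank-one term by $\rho_1^2\sigma_\zeta^2 I$, apply Weyl's inequality, bound the centered increments in operator norm by $h_1$, and invoke the matrix Azuma inequality with variance proxy $|N_t^c|h_1^2$. Your explicit attention to the predictability of the conservative-round indicator is a point the paper glosses over, but it is a refinement of the same argument rather than a different one.
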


Using \eqref{upperbounding_lambamin_when_conservarive-happens} and applying Lemma \ref{lemma:lowerbounding_lambdamin-vt}, it can be checked that with probability $1-\delta$, \begin{align}
    \left( \frac{2L \beta_{\tau}  }{\kappa_l +\alpha r_l } \right)^2 > \rho_1^2 |N_{\tau}^c| \sigma_{\zeta}^2  - \sqrt{8 |N_{\tau}^c| h_1^2 \log(\frac{d}{\delta})}.
\end{align}

This gives an explicit inequality that must be satisfied by $\tau$. Solving with respect to $\tau$ leads to the desired. In particular, we apply simple Lemma \ref{algebra_for_upperbound} below. 

\begin{lemma}\label{algebra_for_upperbound}
For any $a,b,c > 0$, if $ax - \sqrt{bx} < c$, then the following holds for $x \geq 0$ \begin{align} 
  0 \leq x < \frac{2ac + b + \sqrt{b^2 + 4abc}}{2a^2}.
\end{align}
\end{lemma}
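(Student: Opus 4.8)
The plan is to reduce the hypothesis to a standard quadratic inequality via the substitution $u = \sqrt{x}$, which is the natural variable here since both $x$ and $\sqrt{x}$ appear. Writing $u = \sqrt{x} \geq 0$ and $x = u^2$, the assumption $ax - \sqrt{bx} < c$ becomes $a u^2 - \sqrt{b}\, u - c < 0$, a quadratic in $u$ that opens upward because $a > 0$. This recasts the problem as locating the interval on which an upward parabola is negative.

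First I would compute the roots of $a u^2 - \sqrt{b}\, u - c = 0$ by the quadratic formula, obtaining $u = \frac{\sqrt{b} \pm \sqrt{b + 4ac}}{2a}$. Since $a,b,c > 0$ we have $\sqrt{b + 4ac} > \sqrt{b}$, so the root with the minus sign is strictly negative and the root with the plus sign is strictly positive. An upward parabola is negative exactly between its two roots, so $a u^2 - \sqrt{b}\, u - c < 0$ is equivalent to $u$ lying strictly between the negative root and the positive root. Next I would impose the constraint $u = \sqrt{x} \geq 0$, which discards the negative root and leaves
\begin{align}
    0 \leq u < \frac{\sqrt{b} + \sqrt{b + 4ac}}{2a}. \nonumber
\end{align}
Finally, squaring this bound (valid because both sides are nonnegative) and expanding yields
\begin{align}
    x < \left( \frac{\sqrt{b} + \sqrt{b + 4ac}}{2a} \right)^2 = \frac{2b + 4ac + 2\sqrt{b^2 + 4abc}}{4a^2} = \frac{2ac + b + \sqrt{b^2 + 4abc}}{2a^2}, \nonumber
\end{align}
which is precisely the advertised upper bound, completing the argument.

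As for the main obstacle: the lemma is entirely elementary, so there is no substantive difficulty. The only point requiring genuine care is the sign bookkeeping, namely verifying that the smaller root is strictly negative so that the nonnegativity of $\sqrt{x}$ eliminates it and only the bound coming from the larger root survives, and checking that the concluding squaring step preserves the strict inequality because both quantities being squared are nonnegative.
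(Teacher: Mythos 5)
Your proof is correct: the substitution $u=\sqrt{x}$ turns the hypothesis into the upward-opening quadratic inequality $au^2-\sqrt{b}\,u-c<0$, the discriminant $b+4ac>0$ gives one negative and one positive root, nonnegativity of $u$ discards the former, and squaring the resulting bound reproduces $\frac{2ac+b+\sqrt{b^2+4abc}}{2a^2}$ exactly. The paper states this lemma without proof (calling it "simple"), so there is nothing to compare against; your argument is the natural elementary derivation and correctly fills that gap, including the sign bookkeeping and the preservation of strictness under squaring.
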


Using Lemma \ref{algebra_for_upperbound} results in the following upper bound on the $|N_{\tau}^c|$ \begin{align}
    | N_{\tau}^c| \leq  \left( \frac{2L  \beta_{\tau} }{\rho_1 \sigma_{\zeta}(\kappa_l + \alpha r_l )} \right)^2 + \frac{2h_1^2}{\rho_1 ^4 \sigma_{\zeta}^4} \log(\frac{d}{\delta}) + \frac{h_1  2L  \beta_{\tau} }{(\kappa_l +\alpha r_l)\rho_1^3 \sigma_{\zeta}^3} \sqrt{8 \log(\frac{d}{\delta})}.
\end{align} Therefore, we can upper bound $N_T^c$ with the following: \begin{align}
    | N_{T}^c| \leq  \left( \frac{2L   \beta_{T} }{\rho_1 \sigma_{\zeta}( \kappa_l +\alpha r_l )} \right)^2 + \frac{2h_1^2}{\rho_1^4 \sigma_{\zeta}^4} \log(\frac{d}{\delta}) + \frac{2L h_1 \beta_T \sqrt{8 \log(\frac{d}{\delta})}}{\rho_1^3 \sigma^3 (\kappa_l +\alpha r_l)},
\end{align} which has  order $\mathcal{O} \left(  \frac{L^2 d \log(\frac{T}{\delta})}{\alpha^2 r_l^2 (\kappa_l + \alpha r_l )^2 \sigma_{\zeta}^2 } + \bigg( \frac{L^2}{\alpha^2 r_l^2 \sigma_{\zeta}^4} + d^2  \bigg) \log(\frac{d}{\delta}) \right)$, as promised.

\subsection{ Proof of Lemma \ref{lemma:lowerbounding_lambdamin-vt}}

Our objective is to establish a lower bound on  $\lambda_{\text{min}} (V_{t})$ for all $t$. It holds that \begin{align}
    V_t &= \lambda I + \sum_{s=1}^t x_s x_s^{\top} \nonumber \\& \succeq \sum_{s \in N_t^c} \left( (1-\rho_1) x_{b_s} - \rho_1 \zeta_s \right) \left( (1-\rho_1) x_{b_s} - \rho_1 \zeta_s \right)^{\top} \nonumber\\&
    = \sum_{s \in N_t^c} \bigg( (1-\rho_1)^2 x_{b_s} x_{b_s}^{\top} - \rho_1 (1-\rho_1) x_{b_s} \zeta_s^{\top} - \rho_1 (1-\rho_1) \zeta_s x_{b_s}^{\top} + \rho_1^2 \zeta_s \zeta_s^{\top} \bigg) \nonumber\\& 
    \succeq \sum_{s \in N_t^c} \bigg(  - \rho_1 (1-\rho_1) x_{b_s} \zeta_s^{\top} - \rho_1 (1-\rho_1) \zeta_s x_{b_s}^{\top} + \rho_1^2 \zeta_s \zeta_s^{\top} \bigg) \nonumber\\& 
    = \sum_{s \in N_t^c} \bigg(  \rho_1^2 \mathbb{E} [\zeta_s \zeta_s^{\top}]    -  \rho_1 (1-\rho_1) x_{b_s} \zeta_s^{\top} -  \rho_1(1-\rho_1) \zeta_s x_{b_s}^{\top} + \rho_1^2 \zeta_s \zeta_s^{\top} - \rho_1^2 \mathbb{E} [\zeta_s \zeta_s^{\top}] \bigg) \nonumber\\&
    \succeq \rho_1^2 \sigma_{\zeta}^2 |N_t^c|  I + \sum_{s \in N_t^c} U_s,
\end{align} where $U_s$ is defined as \begin{align}
    U_s = \bigg(   -  \rho_1 (1-\rho_1) x_{b_s} \zeta_s^{\top} -  \rho_1 (1-\rho_1) \zeta_s x_{b_s}^{\top} + \rho_1^2 \zeta_s \zeta_s^{\top} - \rho_1^2 \mathbb{E} [\zeta_s \zeta_s^{\top}] \bigg). \label{defnition_of:U_s}
\end{align}
Then, using Weyl's inequality, it follows that \begin{align}
    \lambda_{\text{min}} (V_{t}) \geq \rho_1^2 \sigma_{\zeta}^2 |N_t^c| - \lambda_{\text{max}} (\sum_{s \in N_t^c} U_s). \nonumber
\end{align}

Next, we apply the matrix Azuma inequality (see Theorem \ref{matrix_azuma_inequality}) to find an upper bound on $\lambda_{\text{max}} (\sum_{s \in N_t^c} U_s)$. 
For this, we first need to show that the sequence of matrices $U_s$ satisfies the conditions of Theorem \ref{matrix_azuma_inequality}. By definition of $U_s$ in \eqref{defnition_of:U_s}, it follows that $\mathbb{E}[U_s | \mathcal{F}_{s-1}] = 0$, and $U_s^{\top} = U_s$. Also, we construct the sequence of deterministic matrices $A_s$ such that $U_s^2 \preceq A_s^2$ as follows. We know that for any matrix $B$, $B^2 \leq \|B\|_2^2 I$, where $\| B\|_2$ is the maximum singular value of $B$, i.e.,\begin{align}
    \sigma_{\text{max}}(B) = \max_{\|u\|_1 = \|v\|_2 = 1} u^{\top} B v. \nonumber
\end{align} Thus, we first show the following bound on the maximum singular value of the matrix $U_s$ defined in \eqref{defnition_of:U_s}: \begin{align}
   \max_{\|u\|_1 = \|v\|_2 = 1} u^{\top} U_s v  & =    -  \rho_1 (1-\rho_1) (u^{\top} x_{b_s}) (v^{\top} \zeta_s)^{\top}  -  \rho_1 (1-\rho_1) (u^\top \zeta_s) (v^\top x_{b_s})^{\top} + \nonumber\\& \quad \quad \rho_1^2 (u^\top \zeta_s) (v^\top \zeta_s)^{\top} - \rho_1^2 \mathbb{E} \left[(u^\top \zeta_s) (v^\top \zeta_s)^{\top}\right] \nonumber\\& \leq  \rho_1 (1-\rho_1) \| x_{b_s} \|_2 \| \zeta_s\|_2 + \rho_1 (1-\rho_1) \| \zeta_s\|_2 \| x_{b_s} \|_2  + \rho_1^2 \|\zeta_s\|_2^2 + \rho_1^2 \mathbb{E} \left[\|\zeta_s\|_2^2 \right] \nonumber \\& \leq 2 \rho_1 (1-\rho_1) L + 2 \rho_1^2, \label{upperbounding_maximum_singular_vqalue_of_Us}
\end{align} where we have used Cauchy-Schwarz inequality and the last inequality comes from the fact that $\| \zeta_s\|_2 = 1$ almost surely, and $\| x_{b_s}\|_2 \leq L$ by Assumption \ref{ass:actionset}. 
From the derivations above, and choosing $A_s = h_1 I$, with $h_1 =2 \rho_1 (1-\rho_1) L + 2 \rho_1^2 $, it almost surely holds that $U_s^2 \preceq \sigma_{\text{max}}(U_s)^2 I  \preceq h_1^2 I = A_s^2$. Moreover, using triangular inequality, it holds that \begin{align}
    \| \sum_{s \in N_t^c} A_s^2 \| \leq \sum_{s \in N_t^c} \| A_s^2 \| \leq |N_t^c| h_1^2. \nonumber
\end{align}

Now we apply the the matrix Azuma inequality, to conclude that for any $ c  \geq 0 $,  \begin{align}
    \mathbb{P} \left(  \lambda_{\text{max}} (\sum_{s \in N_t^c} U_s)  \geq c    \right) \leq d \exp{\left( - \frac{c^2}{8|N_t^c|h_1^2} \right)}. \nonumber
\end{align}
Therefore, it holds that with probability $1-\delta$, $\lambda_{\text{max}} (\sum_{s \in N_t^c} U_s) \leq \sqrt{8 |N_t^c| h_1^2 \log(\frac{d}{\delta})}$, and hence with probability $1-\delta$, \begin{align}
    \lambda_{\text{min}} (V_{t}) \geq \rho^2 |N_t^c| \sigma_{\zeta}^2  - \sqrt{8 |N_t^c| h_1^2 \log(\frac{d}{\delta})}, \nonumber
\end{align} or equivalently, 
\begin{align}
    \mathbb{P}(  \lambda_{\text{min}} (V_{t}) \leq t ) \leq d \exp{ \left( - \frac{(\rho_1^2 |N_t^c| \sigma_{\zeta}^2 - t)^2}{8 |N_t^c| h_1^2 } \right)}, \nonumber
\end{align} where $h_1 = 2 \rho_1 (1-\rho_1) L + 2 \rho_1^2 $ and $\rho_1 = (\frac{r_l}{S+r_h}) \alpha$. This completed the proof of lemma.

 \subsection{Matrix Azuma Inequality
}
\begin{theorem}[Matrix Azuma Inequality, 
\cite{tropp2012user}] \label{matrix_azuma_inequality}

Consider a  sequence $\{Y_k \}$ of independent, random matrices adapted to the filtration $\{ \mathcal{F}_k \}$. Each $\{Y_k \}$ is a self-adjoint matrix such that $\mathbb{E}[Y_k \,| \, \mathcal{F}_{k-1}] =0 $. Consider a fixed matrix $A_k$ such that $Y_k^2 \preceq A_k^2$ holds almost surely. Then, for $t \geq 0$, it holds that \begin{align}
    \mathbb{P} \left( \lambda_{\text{max}} \left(\sum_{k=1}^s Y_k\right) \geq t  \right) \leq d \exp{\left( - \frac{t^2}{8 \| \sum_{k=1}^s A_k^2 \|}  \right)}.
\end{align}
\end{theorem}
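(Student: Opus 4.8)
The plan is to prove the bound through the \emph{matrix Laplace transform method}, the matrix analogue of the Chernoff argument. Writing $S_s = \sum_{k=1}^s Y_k$ and fixing $\theta > 0$, I would first note that the event $\{\lambda_{\max}(S_s) \geq t\}$ forces $e^{\theta \lambda_{\max}(S_s)} = \lambda_{\max}\big(e^{\theta S_s}\big) \geq e^{\theta t}$, and that the top eigenvalue of the positive-definite matrix $e^{\theta S_s}$ is dominated by its trace. Markov's inequality then gives, for every $\theta > 0$,
\[
\mathbb{P}\big(\lambda_{\max}(S_s) \geq t\big) \leq e^{-\theta t}\, \mathbb{E}\,\mathrm{tr}\, e^{\theta S_s}.
\]
The entire problem thus reduces to controlling the \emph{trace moment generating function} $\mathbb{E}\,\mathrm{tr}\, e^{\theta S_s}$ and optimizing over $\theta$ at the very end.

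The key structural step is to peel the summands of $S_s$ off one at a time using \emph{Lieb's concavity theorem}, which asserts that $H \mapsto \mathrm{tr}\exp(K + \log H)$ is concave on the positive-definite cone for any fixed self-adjoint $K$. Conditioning on $\mathcal{F}_{s-1}$ and applying Jensen's inequality to the last term yields
\[
\mathbb{E}_{s-1}\,\mathrm{tr}\exp\big(\theta S_{s-1} + \theta Y_s\big) \leq \mathrm{tr}\exp\big(\theta S_{s-1} + \log \mathbb{E}_{s-1} e^{\theta Y_s}\big).
\]
Since each $A_k$ is deterministic, I would bound the conditional cumulant generating function by a fixed matrix, $\log \mathbb{E}_{k-1} e^{\theta Y_k} \preceq g(\theta)\, A_k^2$, and then iterate the peeling via the tower property together with monotonicity of $\mathrm{tr}\exp$ in the Loewner order, arriving at
\[
\mathbb{E}\,\mathrm{tr}\, e^{\theta S_s} \leq \mathrm{tr}\exp\Big(g(\theta)\sum_{k=1}^s A_k^2\Big) \leq d\,\exp\Big(g(\theta)\,\big\|\textstyle\sum_{k=1}^s A_k^2\big\|\Big),
\]
where the final inequality uses $\mathrm{tr}\exp(M) \leq d\, e^{\lambda_{\max}(M)}$ and the fact that $\sum_k A_k^2 \succeq 0$, so its largest eigenvalue equals its operator norm $\sigma^2 := \|\sum_k A_k^2\|$.

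The heart of the argument, and what I expect to be the main obstacle, is establishing the conditional \emph{matrix Hoeffding} bound $\log \mathbb{E}_{k-1} e^{\theta Y_k} \preceq g(\theta) A_k^2$ purely from $\mathbb{E}_{k-1} Y_k = 0$ and the semidefinite hypothesis $Y_k^2 \preceq A_k^2$. The scalar trick (mean-zero $Z$ with $|Z|\le c$ gives $\mathbb{E}e^{\theta Z} \leq e^{\theta^2 c^2/2}$ via the chord bound on the convex exponential) does not transfer verbatim, because operator squaring is not monotone and $Y_k^2 \preceq A_k^2$ is strictly weaker than an operator-interval bound $-A_k \preceq Y_k \preceq A_k$. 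I would instead dominate $e^{\theta Y_k}$ by a $\cosh$-plus-linear surrogate through the functional calculus and annihilate the linear part using the conditional mean-zero property; bridging the gap between the semidefinite and the interval hypothesis is what costs a constant factor and delivers $g(\theta) = 2\theta^2$, the factor-four loss relative to the commuting case that is precisely responsible for the constant $1/8$ rather than $1/2$. Substituting $g(\theta) = 2\theta^2$ gives
\[
\mathbb{P}\big(\lambda_{\max}(S_s) \geq t\big) \leq d\,\exp\big(-\theta t + 2\theta^2 \sigma^2\big),
\]
and choosing $\theta = t/(4\sigma^2)$ minimizes the scalar exponent to $-t^2/(8\sigma^2)$, which is the claimed inequality. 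I expect every step except the constant in the matrix Hoeffding lemma to be routine, so the careful tracking of that constant, together with the invocation of Lieb's theorem as the one genuinely non-elementary ingredient, is where the real work lies.
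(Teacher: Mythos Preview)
The paper does not prove this theorem at all: it is stated as a quotation of a result from \cite{tropp2012user} and used as a black box in the proof of Lemma~\ref{lemma:lowerbounding_lambdamin-vt}. There is therefore no ``paper's own proof'' to compare against.

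That said, your outline is a faithful sketch of Tropp's argument: the matrix Laplace transform reduction, the iterative peeling via Lieb's concavity theorem, the conditional matrix-Hoeffding cgf bound $\log \mathbb{E}_{k-1} e^{\theta Y_k} \preceq 2\theta^2 A_k^2$ (with the constant $2$ arising from a symmetrization step rather than the scalar chord trick, since $Y_k^2 \preceq A_k^2$ is weaker than $-A_k \preceq Y_k \preceq A_k$), and the final optimization in $\theta$. Each step is correctly identified, and the constant $1/8$ emerges exactly as you describe. For the purposes of this paper nothing more than the citation is needed, but if you wished to include a proof, your plan would work.
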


\subsection{Numerical analysis}
In order to numerically verify our results in Theorem \ref{upperbound:numberofbaseline}, we plot the cumulative number of time that baseline actions played bt SCLTS until time $t$ for $t = 1,\dots,1000$ over $100$ realizations.  The solid line in Figure \ref{fig:numberofbaseline} depicts average over $100$ realizations and the shaded regions show standard deviation. The figure confirms the logarithmic trend predicted by theory. 

\begin{figure}
     \centering
          \includegraphics[width=0.5\textwidth]{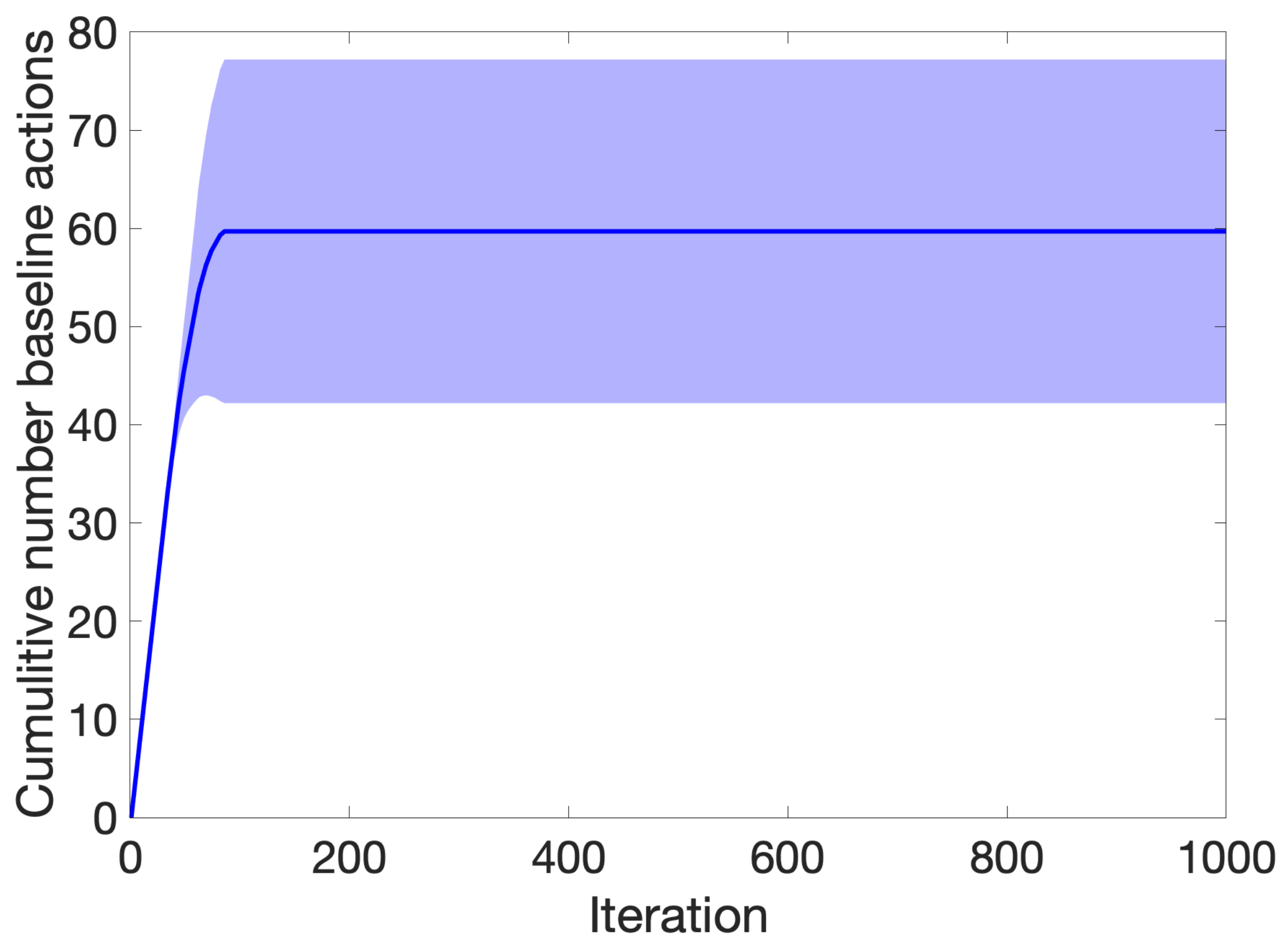}
         \caption{ Cumulative number of times that the baseline actions played by SCLTS up to time $t$, for $t=1\dots,1000$ over 100 realizations.  }\label{fig:numberofbaseline}
   \end{figure}

\section{Upper Bounding the Regret of SCLTS-BF}\label{SCLTS-BF:regret_proof}

In this section we provide the variation of our algorithm for the case of constraints with bandit feedback, which we refer to as SCLTS-BF in Algorithm \ref{alg:SCLTS-BF}. We then provide a regret bound for SCLTS-BF.  The summary of SCLTS-BF is presented in Algorithm \ref{alg:SCLTS-BF}.

 \begin{algorithm} 

\caption{SCLTS-BF}\label{alg:SCLTS-BF}

\textbf{Input:}  $\delta, T, \lambda, \rho$

Set $\delta' = \frac{\delta}{4T}$\\
\For{$t=1,\dots,T$ } { 
Sample $\eta_t \sim \mathcal{H}^{\text{TS}}$ \\

Compute RLS-estimate $\hat{\theta}_t$ and $V_t$ according to \eqref{RLS-estimate} and $\hat{\mu}_t$

Set $\tilde{\theta}_t = \hat{\theta}_t + \beta_t V_t^{-1/2} \eta_t$\\

 Build the confidence region $\mathcal{E}_t(\delta')$ in \eqref{app:sclts-bf:condifence_region_for_thetahat} and $\mathcal{C}_t(\delta')$ in \eqref{app:sclts-bf:conf_reg_on_mu_hat}
% \\

Compute the estimated safe set $\mathcal{P}_t^s = \{ x \in \mathcal{X} : \langle x, v \rangle \geq (1-\alpha) q_{b_t}, \forall v \in \mathcal{C}_t \} $  \\

 \textbf{if} the following optimization has a feasible solution: $x(\Tilde{\theta}_t) = \text{argmax}_{x \in \mathcal{P}_t^s} \langle x , \Tilde{\theta}_t \rangle$, \textbf{then}
 
 Set $F=1$, \textbf{else} $F = 0$ \\

\textbf{if} $F = 1$ \textbf{and} $\lambda_{\text{min}} (V_t)\geq \left( \frac{2 L \beta_t}{\nu_l + \alpha q_l}\right)^2$,  \textbf{then}

    Play $x_t = x(\Tilde{\theta}_t)$

\textbf{else}

    play $x_t =x_t^{\text{cb}}$ defined in \eqref{app:sclts-bf:conservative_ACVTIOPN}

 Observe reward $r_t$ \\ }\textbf{end for}
\SetAlgoLined
\end{algorithm}

In this setting, we assume that at each round $t$, with playing an action $x_t$, the learner observes the reward $ y_t = \langle x_t, \theta_{\star} \rangle + \xi_t$ and  the following bandit feedback: \begin{align}
    w_t = \langle x_t, \mu_\star \rangle + \chi_t, \label{app:sclts-bf:observ:bandi-feedback}
\end{align} where $\chi_t$ is assumed to be a zero-mean $R$-sub-Gaussian noise.

The main difference of SCLTS-BF with SCLTS is in the definition of the estimated safe action set. In particular, at each round $t$, SCLTS-BF constructs the following  confidence regions: \begin{align}
    & \mathcal{E}_t(\delta')  = \{ \theta \in \mathbb{R^d} : \norm{ \theta -\hat{\theta}_t}_{V_t} \leq \beta_t(\delta')   \}, \label{app:sclts-bf:condifence_region_for_thetahat}\\&
    \mathcal{C}_t(\delta')  = \{ v \in \mathbb{R^d} : \norm{ v-\hat{\mu}_t}_{V_t} \leq \beta_t(\delta')   \}, \label{app:sclts-bf:conf_reg_on_mu_hat}
\end{align} where $\hat{\mu}_t = V_t^{-1} \sum_{s=1}^{t-1} w_s x_s$ is the RLS-estimate of $\mu_\star$. The radius in \eqref{app:sclts-bf:condifence_region_for_thetahat} and \eqref{app:sclts-bf:conf_reg_on_mu_hat} is chosen according to Proposition \ref{abbasi-ellipsoid} such that $\theta_\star \in \mathcal{E}_t$ and $\mu_\star \in \mathcal{C}_t$ with high probability. In order to ensure safety at each round $t$, SCLTS-BF constructs the following estimated safe action set \begin{align}
    \mathcal{P}_t^s = \{ x \in \mathcal{X} : \langle x, v \rangle \geq (1-\alpha) q_{b_t}, \forall v \in \mathcal{C}_t \}.  \label{app:sclts-bfestimated_safe_set_for_bandit_feedback}
\end{align} The challenge with $\mathcal{P}_t^s$ is that it  contains all the actions that are safe with respect to all the parameters in $\mathcal{C}_t$. Thus, there may exist some rounds that $\mathcal{P}_t^s$ is empty. To handle this case, SCLTS-BF proceed as follows. At each round $t$, given the sampled parameter $\tilde{\theta}_t$, if the estimated safe action set $\mathcal{P}_t^s$ defined in \eqref{app:sclts-bfestimated_safe_set_for_bandit_feedback} is not empty, SCLTS-BF plays the safe action \begin{align}
    x(\tilde{\theta}_t) = \arg\max_{x \in \mathcal{P}_t^s} \langle x, \tilde{\theta}_t \rangle
\end{align} only if $\lambda_{\text{min}}(V_t) \geq k^2_t$, where $k^2_t = \left( \frac{2 L \beta_t}{\nu_l + \alpha q_l}\right)^2$. Otherwise, it plays the following conservative action  \begin{align}
    x_{t}^{\text{cb}} = (1-\rho_2) x_{b_t} + \rho_2 \zeta_t, \label{app:sclts-bf:conservative_ACVTIOPN}
\end{align} where $\rho_2 = \alpha (\frac{q_l}{S+q_h})$ in order to ensure that the conservative actions are safe.

Next,  we provide a regret guarantee for SCLTS-BF. First, we use the following decomposition of regret: \begin{align}
    R(T) & =  \sum_{t=1}^T \langle x_\star , \theta_\star \rangle - \langle x_t, \theta_\star \rangle  \nonumber \\&= \underbrace{ \sum_{t \in N_T} \bigg( \langle x_\star , \theta_\star \rangle - \langle x_t, \theta_\star \rangle \bigg)}_{\text{Term I}} + \underbrace{\sum_{t \in N_T^c} \bigg( \langle x_\star , \theta_\star \rangle - \langle (1-\rho) x_{b_t} - \rho \zeta_t, \theta_\star \rangle \bigg)}_{\text{Term II}},
\end{align}
 where $N_t^c$ is the set of rounds $i < t$ that SCLTS-BF plays the conservative actions, and $N_t = \{ 1,\dots,t\} - N_t^c$. In the following, we upper bound both Term I and Term II, separately. 

\textbf{Bounding Term I.} Bounding Term I follows the same steps as that of Theorem \ref{thm:bounding_term_I_regret_of_safe_lts}. Here, we  show that for SCLTS-BF, at rounds $t \in N_T$, the optimal action $x_\star$ belongs to the estimated safe safe, i.e., $x_\star \in \mathcal{P}_t^s$. Then, we conclude that regret of Term I similar to Theorem \ref{thm:bounding_term_I_regret_of_safe_lts} has the order of $\mathcal{O}(d^{3/2} \log^{1/2}d ~ T^{1/2} \log^{3/2}T)$.

At rounds $ t \in N_T$, we know \begin{align}
 \lambda_{\text{min}}(V_t) \geq k_t^2  \geq \left( \frac{2 L \beta_t}{\nu_{b_t} + \alpha q_{b_t}} \right)^2. \label{value_choosing_k2} 
\end{align}
Then, in order to show that $x_\star \in \mathcal{X}_t^s$, we need to show \begin{align}
    \langle x_\star , \hat{\mu}_t \rangle - \beta_t \| x_\star \|_{V_t^{-1}} \geq  \langle x_\star, \mu_\star \rangle - 2 \beta_t \| x_\star\|_{V_t^{-1}} \geq (1-\alpha) q_{b_t}.
\end{align}

First inequality comes from the fact that $\| \mu_\star - \hat{\mu}_t \|_{V_t} \leq \beta_t$. Therefore, it suffices to show the second inequality holds. We use the fact that $\| x_\star \|_{V_t^{-1}} \leq \frac{\|x_\star\|_2}{\sqrt{\lambda_{\text{min}}(V_t)}} \leq \frac{L}{\sqrt{\lambda_{\text{min}}(V_t)}}$, where  we use Assumption \ref{ass:actionset} to bound $\|x_\star\|_2$. Hence, we have \begin{align}
  \langle x_\star, \mu_\star \rangle - 2 \beta_t \| x_\star\|_{V_t^{-1}} \geq    \langle x_\star, \mu_\star \rangle - \frac{2 \beta_t L}{\sqrt{\lambda_{\text{min}}(V_t)}}.
\end{align}  Then, it suffices to show that \begin{align}
  \nu_{b_t}  + \alpha q_{b_t} \geq \frac{2 \beta_t L}{\sqrt{\lambda_{\text{min}}(V_t)}}, \label{showing_x*_in_pts}
\end{align}
 From \eqref{value_choosing_k2}, we know that \eqref{showing_x*_in_pts} holds, and hence, $x_\star \in \mathcal{P}_t^s$.  Therefore, we can use the result of Theorem \ref{thm:bounding_term_I_regret_of_safe_lts}, and obtain the desired regret bound. 

\textbf{Bounding Term II.}  First, we provide the formal statement of the theorem.

\begin{theorem}\label{thm:up_bound_SCLTS-bf_number_conservative}
Let $\lambda , L \geq 1$. On event $\bigg \{ \{ \theta_\star \in \mathcal{E}_t, \forall t \in [T] \} \cap \{ \mu_\star \in \mathcal{C}_t, \forall t \in [T] \} \bigg\} $, and  Assumptions \ref{ass:lowerbound_reward}, we can upper bound the number of times SCLTS-BF plays the conservative actions, i.e., $|N_T^c|$ as: \begin{align}
  |N_T^c|  \leq  \left( \frac{2L \beta_{T} }{\rho_2 \sigma_{\zeta}(\alpha q_l + \nu_l)} \right)^2 + \frac{2h_2^2}{\rho_2^4 \sigma_{\zeta}^4} \log(\frac{d}{\delta}) + \frac{2L h_2 \beta_T \sqrt{8 \log(\frac{d}{\delta})}}{\rho_2^3 \sigma_\zeta^3 (\alpha q_l + \nu_l)}
\end{align} where $h_2 = 2 \rho_2 (1-\rho_2) L + 2 \rho_2^2 $ and $\rho_2 = (\frac{q_l}{S+q_h}) \alpha$.
\end{theorem}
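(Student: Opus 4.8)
The plan is to mirror the proof of Theorem \ref{upperbound:numberofbaseline} essentially verbatim, replacing the reward parameter $\theta_\star$ and its confidence set $\mathcal{E}_t$ with the constraint parameter $\mu_\star$ and $\mathcal{C}_t$ throughout the feasibility analysis, while keeping the randomized-exploration lower bound structurally unchanged (now driven by $\rho_2$ and $h_2$ in place of $\rho_1$ and $h_1$). First I would characterize the rounds $\tau \in N_T^c$ on which a conservative action is played: by construction of SCLTS-BF, at any such round either $F=0$ (so $\mathcal{P}_\tau^s$ is empty) or $\lambda_{\min}(V_\tau) < k_\tau^2 = \left(2L\beta_\tau/(\nu_l+\alpha q_l)\right)^2$.

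The key deterministic step is to show that the first branch also forces a small minimum eigenvalue. Conditioning on $\mu_\star \in \mathcal{C}_\tau$ and using the convex-quadratic representation $\mathcal{P}_\tau^s = \{x \in \mathcal{X} : \langle x,\hat{\mu}_\tau\rangle - \beta_\tau\|x\|_{V_\tau^{-1}} \geq (1-\alpha)q_{b_\tau}\}$ (the direct analog of \eqref{def:convex-quadratic_estimatedactionset}), emptiness implies in particular $x_\star \notin \mathcal{P}_\tau^s$, i.e. $\langle x_\star,\hat{\mu}_\tau\rangle - \beta_\tau\|x_\star\|_{V_\tau^{-1}} < (1-\alpha)q_{b_\tau}$. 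Applying $\|\mu_\star - \hat{\mu}_\tau\|_{V_\tau} \leq \beta_\tau$ together with the identity $\langle x_\star, \mu_\star\rangle = \nu_{b_\tau} + q_{b_\tau}$ gives $\nu_{b_\tau} + \alpha q_{b_\tau} < 2\beta_\tau\|x_\star\|_{V_\tau^{-1}} \leq 2L\beta_\tau/\sqrt{\lambda_{\min}(V_\tau)}$, which rearranges to $\lambda_{\min}(V_\tau) < \left(2L\beta_\tau/(\nu_{b_\tau}+\alpha q_{b_\tau})\right)^2 \leq k_\tau^2$, where the last step uses $\nu_l \leq \nu_{b_\tau}$ and $q_l \leq q_{b_\tau}$ from Assumption \ref{ass:lowerbound_reward}. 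Hence on either branch we conclude $\lambda_{\min}(V_\tau) < k_\tau^2$.

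Next I would transfer Lemma \ref{lemma:lowerbounding_lambdamin-vt}. Its proof relies only on the randomized structure $x_t^{\text{cb}} = (1-\rho_2)x_{b_t} + \rho_2\zeta_t$, the properties $\|\zeta_t\|_2 = 1$ almost surely, $\sigma_\zeta^2 \leq \lambda_{\min}(\mathrm{Cov}(\zeta_t))$, and $\|x_{b_t}\|_2 \leq L$, none of which involves either confidence set. Substituting $\rho_1 \to \rho_2$ and $h_1 \to h_2 = 2\rho_2(1-\rho_2)L + 2\rho_2^2$ into the Weyl/matrix-Azuma argument yields, with probability $1-\delta$,
\[
\lambda_{\min}(V_\tau) \geq \rho_2^2\sigma_\zeta^2\,|N_\tau^c| - \sqrt{8\,|N_\tau^c|\,h_2^2\log(d/\delta)}.
\]

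Finally I would combine the two bounds. On the event $\{\mu_\star \in \mathcal{C}_t,\ \forall t\}$ intersected with the Azuma event, the upper bound $\lambda_{\min}(V_\tau) < k_\tau^2$ and the displayed lower bound together force $\rho_2^2\sigma_\zeta^2|N_\tau^c| - \sqrt{8h_2^2\log(d/\delta)\,|N_\tau^c|} < \left(2L\beta_\tau/(\nu_l+\alpha q_l)\right)^2$. Applying Lemma \ref{algebra_for_upperbound} with $a = \rho_2^2\sigma_\zeta^2$, $b = 8h_2^2\log(d/\delta)$, $c = \left(2L\beta_\tau/(\nu_l+\alpha q_l)\right)^2$, and $x = |N_\tau^c|$, and using $\beta_\tau \leq \beta_T$, produces exactly the advertised bound. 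The only genuinely new ingredient relative to Theorem \ref{upperbound:numberofbaseline} is the feasibility argument built on $\mathcal{C}_t$ and the identity $\langle x_\star,\mu_\star\rangle = \nu_{b_t}+q_{b_t}$; everything downstream is mechanical substitution. The main point deserving care is verifying that the infeasibility-to-eigenvalue implication needs only the constraint event $\mu_\star \in \mathcal{C}_t$ (the reward event $\theta_\star \in \mathcal{E}_t$ enters solely in the Term I analysis), so that the intersection event in the theorem statement is exactly what is required and no additional union bound is lost.
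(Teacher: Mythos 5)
Your proposal matches the paper's own proof essentially step for step: the same case split (either $\mathcal{P}_\tau^s$ empty or $\lambda_{\min}(V_\tau) < k_\tau^2$), the same use of $x_\star \in \mathcal{X}$ together with $\mu_\star \in \mathcal{C}_\tau$ to convert infeasibility into the bound $\nu_{b_\tau} + \alpha q_{b_\tau} < 2L\beta_\tau/\sqrt{\lambda_{\min}(V_\tau)}$, and the same substitution of $\rho_2, h_2$ into Lemma \ref{lemma:lowerbounding_lambdamin-vt} followed by Lemma \ref{algebra_for_upperbound}. Your observation that only the event $\{\mu_\star \in \mathcal{C}_t\}$ is needed for this part is correct and consistent with how the paper uses the intersection event.
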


In order to prove Theorem \ref{thm:up_bound_SCLTS-bf_number_conservative}, we proceed as follows: 
\begin{align}
 \sum_{t \in N_T^c} \bigg( \langle x_\star , \theta_\star \rangle - \langle (1-\rho_2) x_{b_t} - \rho_2 \zeta_t, \theta_\star \rangle \bigg)  & =    \sum_{t \in N_T^c} \langle x_\star , \theta_\star \rangle - \langle x_{b_t}, \theta_\star \rangle + \rho_2(\langle x_{b_t} + \zeta_t , \theta_\star \rangle) \nonumber \\& \leq \sum_{t \in N_T^c} \nu_h + \rho_2 (q_{b_t} + S) \leq |N_T^c| (\nu_h+ \alpha q_l),
\end{align}where $q_h \geq q_{b_t}\geq q_l >0 $ and $\nu_h \geq \nu_{b_t} \geq \nu_l $ for all $t$.  Therefore, in order to bound Term II, it suffices to upper bound $|N_T^c|$ which is the number of rounds that SCLTS-BF plays the conservative actions up to round T. In order to do so, we proceed as follows: 

Let $\tau$ be any round that the algorithm plays the conservative action. 

If $F=0$, i.e., 
 \begin{align}
    \nexists x \in \mathcal{X} : \langle x , \hat{\mu}_\tau \rangle - \beta_\tau \| x \|_{V_\tau^{-1}}   \geq (1-\alpha) q_{b_\tau},
\end{align} and since we know that $x_\star \in \mathcal{X}$, and $\mu_\star \in \mathcal{C}_t$ with high probability, we can write  \begin{align}
 \langle x_\star , \mu_\star \rangle - 2\beta_\tau \| x_\star \|_{V_\tau^{-1}}  \leq \langle x_\star , \hat{\mu}_\tau \rangle - \beta_\tau \| x_\star \|_{V_\tau^{-1}}  < (1-\alpha) q_{b_\tau}. \label{bounding_constraint_value_IN_DIFFERENT_PARAMETER_for_SCLTS-bf}
\end{align} Using   \eqref{bounding_constraint_value_IN_DIFFERENT_PARAMETER_for_SCLTS-bf}, we can get \begin{align}
    \nu_{b_\tau} + \alpha q_{b_\tau} < 2 \beta_\tau \| x_\star \|_{V_\tau^{-1}} \leq \frac{2 \beta_\tau L}{\sqrt{\lambda_{\text{min}}(V_\tau)}},
\end{align} and hence the following upper bound on minimum eigenvalue of the Gram matrix: \begin{align}
    \lambda_{\text{min}}(V_\tau) < \left(\frac{2 \beta_\tau L}{\nu_{b_\tau} + \alpha q_{b_\tau}}\right)^2 \leq \left(\frac{2 \beta_\tau L}{\nu_l + \alpha q_l}\right)^2 =  k_\tau
\end{align}

Therefore, we show that in the cases where either the event $\{ \nexists x \in \mathcal{X} : \langle x , \hat{\mu}_\tau \rangle - \beta_\tau \| x \|_{V_\tau^{-1}}   \geq (1-\alpha) q_{b_\tau}\}$  or the event $\{ \lambda_{\text{min}}(V_\tau) < k_\tau^2 \}$ happen, we can conclude that at round $\tau$ \begin{align}
\lambda_{\text{min}}(V_\tau) < k_\tau^2. \label{getting_the_bound_for_number_in_unkwon_parameter}
\end{align}

From Lemma \ref{lemma:lowerbounding_lambdamin-vt}, we know that the minimum eigenvalue of the Gram matrix, i.e., $\lambda_{\text{min}}(V_t)$ is lower bounded with the number of times that SCLTS-BF plays the conservative actions, i.e., $|N_T^c|$. Therefore, using \eqref{getting_the_bound_for_number_in_unkwon_parameter}, we can get 
\begin{align}
  |N_T^c|  \leq  \left( \frac{2L \beta_{T} }{\rho_2 \sigma_{\zeta}(\alpha q_l + \nu_l)} \right)^2 + \frac{2h_2^2}{\rho_2^4 \sigma_{\zeta}^4} \log(\frac{d}{\delta}) + \frac{2L h_2 \beta_T \sqrt{2 \log(\frac{d}{\delta})}}{\rho_2^3 \sigma^3 (\alpha q_l + \nu_l)}
\end{align} where $h_2 = 2 \rho_2 (1-\rho_2) L + 2 \rho_2^2 $ and $\rho_2 = \alpha (\frac{q_l}{S+q_h})$.

\section{Proof of Theorem \ref{upperbound:unkownbaseline}}\label{app:sclts2:explantions}

In this section, we first present the SCLTS2 algorithm, for the case where the learner does not know the reward of the actions suggested by baseline policy in advance, i.e., $r_{b_t}$. The summary of SCLTS2 is presented in Algorithm \ref{alg:sclts2}.

The algorithm relies on the fact that we can find an upper bound over the value of $r_{b_t}$, using the fact that $\theta_\star \in \mathcal{E}_t$, i.e.,: \begin{align}
    \max_{ v \in \mathcal{E}_t} \langle x_{b_t}, v \rangle \geq \langle x_{b_t}, \theta_\star \rangle = r_{b_t}. \label{app:sclts2:upperboundin_the_unkown_reward}
\end{align} Then, we can write the safety constraint as follows:  \begin{align}
 \min_{v \in \mathcal{E}_t} \langle x(\Tilde{\theta}_t), v \rangle \geq (1 - \alpha) \max_{ v \in \mathcal{E}_t} \langle x_{b_t}, v \rangle. \label{app:sclts2:safety_consertvative_constrain_for unkown_reward}
\end{align} It is easy to show that safety constraint \eqref{cons:safety} holds when \eqref{app:sclts2:safety_consertvative_constrain_for unkown_reward} is true. Therefore, if we choose actions that satisfy \eqref{app:sclts2:safety_consertvative_constrain_for unkown_reward}, we can ensure that they are safe with respect to the safety constrain in \eqref{cons:safety}. 

Then we propose the estimated safe action set $\mathcal{Z}_t^s$ as: \begin{align}
    \mathcal{Z}_t^s & =  \{ x \in \mathcal{X} :\min_{v \in \mathcal{E}_t} \langle x, v \rangle \geq  (1 - \alpha) \max_{ v \in \mathcal{E}_t} \langle x_{b_t}, v \rangle\},
\end{align} which contains actions that are safe with respect to all the parameter in $\mathcal{E}_t$. At each round $t$, SCLTS2 plays the safe action $x(\tilde{\theta}_t)$ from  $\mathcal{Z}_t^s$ that maximizes the expected reward given the sampled parameter $\tilde{\theta}_t$, i.e., \begin{align}
    x(\Tilde{\theta}_t) = \arg\max_{x \in \mathcal{Z}_t^s} \langle x , \Tilde{\theta}_t \rangle
\end{align}
only if $\lambda_{\text{min}} (V_t) \geq k_t^3$, where $k_t^3 =\left( \frac{2L \beta_t (2-\alpha)}{\kappa_{l} + \alpha r_{l}} \right)^2 $. Otherwise it plays the conservative action $x_{b_t}^{\text{cb}}$ as:\begin{align}
   x_{t}^{\text{cb}} = (1- \rho_3)x_{b_t} + \rho_3 \zeta_t,\label{app:sclts2:conservative_actiosn}
\end{align} where $\rho_3=\alpha (\frac{r_l}{S+1}) $ such that the conservative action $x_{t}^{\text{cb}}$ is safe, where we use Assumption \ref{ass:actionset} for upper bounding the reward, i.e., $r_{b_t} \leq 1$.
\begin{algorithm} 

\caption{SCLTS2 }\label{alg:sclts2}

\textbf{Input:}  $\delta, T, \lambda, \rho$

Set $\delta' = \frac{\delta}{4T}$\\
\For{$t=1,\dots,T$ } { 
Sample $\eta_t \sim \mathcal{H}^{\text{TS}}$ \\

Compute RLS-estimate $\hat{\theta}_t$ and $V_t$ according to \eqref{RLS-estimate}

Set $\tilde{\theta}_t = \hat{\theta}_t + \beta_t V_t^{-1/2} \eta_t$\\

 Build the confidence region $\mathcal{E}_t(\delta')$ in \eqref{confidence_region_unkown_reward_parameter} 
% \\

Compute the estimated safe set $ \mathcal{Z}_t^s  =  \{ x \in \mathcal{X} :\min_{v \in \mathcal{E}_t} \langle x, v \rangle \geq  (1 - \alpha) \max_{ v \in \mathcal{E}_t} \langle x_{b_t}, v \rangle\}$ \\

 \textbf{if} the following optimization is feasible: $x(\Tilde{\theta}_t) = \arg\max_{x \in \mathcal{Z}_t^s} \langle x , \Tilde{\theta}_t \rangle$, \textbf{then}
 
 Set $F=1$, \textbf{else} $F = 0$ \\

\textbf{if} $F = 1$ \textbf{and} $\lambda_{\text{min}} (V_t)\geq \left( \frac{2L \beta_t (2-\alpha)}{\kappa_{l} + \alpha r_{l}} \right)^2$,  \textbf{then}

    Play $x_t = x(\Tilde{\theta}_t)$

\textbf{else}

    play $x_t =x_t^{\text{cb}}$ defined in \eqref{app:sclts2:conservative_actiosn}

 Observe reward $y_t$ \\ }\textbf{end for}
\SetAlgoLined
\end{algorithm}

In order to bound the regret of SCLTS2, we first use the decomposition defined  in Proposition \ref{decompostion_regret}. The regret of Term I is similar to that of SCLTS (i.e., Theorem \ref{thm:bounding_term_I_regret_of_safe_lts}). Hence, it suffices to  upper bound  the number of time SCLTS2 plays the conservative actions, i.e., $|N_T^c|$. 

In order to bound $|N_T^c|$, we proceed as follows. 
Let $\tau$ be the round that SCLTS2 plays a conservative action. If $F=0$, i.e., \begin{align}
    \nexists x \in \mathcal{X} : \min_{v \in \mathcal{C}_{\tau}} \langle x, v \rangle \geq (1 - \alpha) \max_{ v \in \mathcal{C}_{\tau}} \langle x_{b_{\tau}}, v \rangle.   \label{safetyforunkownreward}
\end{align}

Using the fact that $x_\star \in \mathcal{X}$, we can write 
\begin{align}
    \langle x_\star, \hat{\theta}_{\tau} \rangle - \beta_{\tau} \| x_\star \|_{V_{\tau}^{-1}} < (1-\alpha) \left( \langle x_{b_\tau}, \hat{\theta}_{\tau} \rangle   + \beta_{\tau} \| x_{b_{\tau}} \|_{V_{\tau}^{-1}}  \right). \label{unkownsafety:elliposid_equality}
\end{align}Then, since $\|\theta_{\star} - \hat{\theta}_t \|_{V_t} \leq \beta_t$,  we can upper bound the RHS and lower bound the LHS of \eqref{unkownsafety:elliposid_equality}, and get
\begin{align}
     \langle x_\star, {\theta}_{\star} \rangle - 2\beta_{\tau} \| x_\star \|_{V_{\tau}^{-1}} < (1-\alpha) \left( \langle x_{b_\tau}, {\theta}_{\star} \rangle   + 2 \beta_{\tau} \| x_{b_{\tau}} \|_{V_{\tau}^{-1}}  \right), \label{unkownsafety:shrunk_version}
\end{align} or equivalently,
\begin{align}
    \kappa_{b_\tau} + \alpha r_{b_\tau} <  2\beta_{\tau} \| x_\star \|_{V_{\tau}^{-1}} +  2 (1-\alpha) \beta_{\tau} \| x_{b_{\tau}} \|_{V_{\tau}^{-1}}.  \label{finding_an_upper_bound_for_unkown_reward_case}
\end{align} Then we can use the fact that $ \| x_\star \|_{V_{\tau}^{-1}} \leq \frac{L}{\sqrt{\lambda_{\text{min}}(V_\tau)}}$ and $ \| x_{b_{\tau}} \|_{V_{\tau}^{-1}} \leq \frac{L}{\sqrt{\lambda_{\text{min}}(V_\tau)}}$, where we use Assumption \ref{ass:actionset} for upper bounding $\|x_\star\|_2$. Thus, we upper bound the RHS of \eqref{finding_an_upper_bound_for_unkown_reward_case} as follows: \begin{align}
     \kappa_{b_\tau} + \alpha r_{b_\tau} < 2\beta_{\tau} \frac{L}{\sqrt{\lambda_{\text{min}}(V_\tau)}} +  2 (1-\alpha) \beta_{\tau}\frac{L}{\sqrt{\lambda_{\text{min}}(V_\tau)}},
\end{align}
and hence, we can get the following upper bound ${\lambda_{\text{min}}(V_\tau)}$ as follows: \begin{align}
  {\lambda_{\text{min}}(V_\tau)} < \left( \frac{2L \beta_T (2-\alpha)}{\kappa_{b_\tau} + \alpha r_{b_\tau}} \right)^2 \leq \left( \frac{2L \beta_T (2-\alpha)}{\kappa_{l} + \alpha r_{l}} \right)^2 = k_\tau^3 .  \label{the_upper_bound_for_unkown_case}
\end{align}

Therefore, we show that whether the event $F=0$ happens or ${\lambda_{\text{min}}(V_t)} < k_t^3$, we can achieve the upper bound provided in \eqref{the_upper_bound_for_unkown_case}.  
Then, using the result of Lemma \ref{lemma:lowerbounding_lambdamin-vt}, where we show that ${\lambda_{\text{min}}(V_t)}$ is lower bounded with the number of times the algorithm plays the conservative actions,  we  obtain the following upper bound on the $|N_{\tau}^c|$ 
\begin{align}
| N_{\tau}^c| \leq  \left( \frac{2 L  \beta_{\tau} (2 - \alpha)}{\rho_3 \sigma_{\zeta}(\kappa_l + \alpha r_l  )} \right)^2   + \frac{2h_3^2}{\rho_3^4 \sigma_{ \zeta}^4} \log(\frac{d}{\delta}) + 
\frac{ 2L h_3 \beta_\tau (2-\alpha)  }{\rho_3^3 \sigma_{\zeta}^3 (\kappa_l+\alpha r_l )} \sqrt{2 \log(\frac{d}{\delta})},
\end{align} where $h_3 = 2 \rho_3 (1-\rho_3) L + 2 \rho_3^2$ and $\rho_3 = \alpha (\frac{r_l}{S+1})$.

\section{Stage-wise Conservative Linear UCB (SCLUCB) Algorithm}\label{SCLUCB}
In this section we propose a UCB-based safe stochastic linear bandit algorithm called Stage-wise Conservative Linear-UCB (SCLUCB), which is a safe counterpart of LUCB for the  stage-wise conservative bandit setting. In particular,  at each round $t$, given the RLS-estimate $\hat{\theta}_t$ of $\theta_\star$, SCLUCB constructs the confidence region $\mathcal{E}_t$ as follows: \begin{align}
\mathcal{E}_t(\delta) = \{ \theta \in \mathbb{R}^d: \| \theta - \hat{\theta}_t \|_{V_t} \leq \beta_t(\delta) \}. \label{sclucb:confidence_region}
\end{align} The radius $\beta_t(\delta)$ is chosen as in Proposition \ref{abbasi-ellipsoid} such that $\theta_\star \in \mathcal{E}_t(\delta)$ with probability $1 - \delta$. Then, similar to SCLTS, it builds the estimated safe set $\mathcal{X}_t^s$ such that it includes actions that are safe with respect to all the parameter in $\mathcal{E}_t$, i.e., \begin{align}
    \mathcal{X}_t^s = \{x \in \mathcal{X} : \langle x, v \rangle \geq (1-\alpha) r_{b_t}, \forall v \in \mathcal{E}_t \}. \label{sclucb:estimated_safe_actoion_set}
\end{align}
Similar to SCLTS, the challenge with $\mathcal{X}_t^s$ is that there may exist some rounds that $\mathcal{X}_t^s$ is empty. In order to face this problem, SCLUCB proceed as follows.   In order to guarantee safety, at each round $t$, if $\mathcal{X}_t^s$ is not empty, SCLUCB plays the action $\bar{x}_t$ as \begin{align}
   ( \bar{x}_t, \bar{\theta}_t )= \max_{x \in \mathcal{X}_t^s} \max_{v \in \mathcal{E}_t} \langle x , v \rangle \label{sclucb:action_selection_rule}
\end{align}
only if $\lambda_{\text{min}}(V_t) \geq \left(\frac{2 L \beta_t}{\kappa_l + \alpha r_{b_l}}\right) ^2$, otherwise it plays the conservative action  $ x_t^{\text{cb}}$ defined in \eqref{conservative-actions}. The summary of SCLUCB is presented in Algorithm \eqref{alg:SCLUCB}.

\begin{algorithm} 

\caption{Stage-wise Conservative Linear UCB (SCLUCB) }\label{alg:SCLUCB}

\textbf{Input:}  $\delta, T, \lambda, \rho$

\For{$t=1,\dots,T$ } {

Compute RLS-estimate $\hat{\theta}_t$ and $V_t$ according to \eqref{RLS-estimate}

 Build the confidence region $\mathcal{E}_t(\delta)$ in \eqref{sclucb:confidence_region} 
% \\

Compute the estimated safe set $\mathcal{X}_t^s$ in \eqref{sclucb:estimated_safe_actoion_set} \\

 \textbf{if} the following optimization is feasible: $\bar{x}_t = \arg\max_{x \in \mathcal{X}_t^s} \max_{v \in \mathcal{E}_t} \langle x , v \rangle$, \textbf{then}
 
 Set $F=1$, \textbf{else} $F = 0$ \\

\textbf{if} $F = 1$ \textbf{and} $\lambda_{\text{min}} (V_t)\geq \left(\frac{2 L \beta_t}{\kappa_l + \alpha r_{b_l}}\right) ^2$,  \textbf{then}

    Play $x_t =\bar{x}_t$

\textbf{else}

    play $x_t =x_t^{\text{cb}}$ defined in \eqref{conservative-actions}

 Observe reward $y_t$ \\ }\textbf{end for}
\SetAlgoLined
\end{algorithm}

Next, we provide the regret guarantee for SCLUCB. Recall, $N_{t-1}$ be the set of rounds $i < t$ at which SCLUCB plays the action in \eqref{optimal-safe-action}. Similarly, $N_{t-1}^{c} = \{1,\dots,t-1\} - N_{t-1}$ is the set of rounds $j < t$ at which SCLUCB plays the conservative actions. 

\begin{Proposition}\label{SCLUCB:decompostion_regret}
The regret of SCLUCB can be decomposed into two terms as follows: \begin{align}
    R(T) \leq \underbrace{\sum_{t \in N_T} \left(\langle x_{\star}, \theta_{\star} \rangle - \langle x_t, \theta_{\star} \rangle \right)}_{\text{Term I}} +  \underbrace{|N_{T}^{{c}}| \left(\kappa_h + \rho_1 (r_h + S) \right)}_{\text{Term II}}\label{SCLUCB:regret:decompos}
\end{align}
\end{Proposition}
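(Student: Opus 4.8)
The plan is to follow the same argument as in the proof of Proposition \ref{decompostion_regret}, since SCLUCB and SCLTS employ \emph{identical} conservative actions $x_t^{\text{cb}} = (1-\rho_1) x_{b_t} + \rho_1 \zeta_t$ and differ only in how the non-conservative action on rounds $t \in N_T$ is selected (a UCB bilinear maximization instead of a TS sample). Crucially, the regret decomposition itself never references the selection rule on $N_T$: it merely splits the cumulative regret over the disjoint index sets $N_T$ and $N_T^c$ and then bounds the conservative contribution round-by-round. Hence the decomposition transfers essentially verbatim, and the only conceptual point worth stressing is that the per-round bound on a conservative action is agnostic to the optimistic exploration mechanism.

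Concretely, first I would write the cumulative regret as
\begin{align}
R(T) = \sum_{t \in N_T} \big(\langle x_\star, \theta_\star\rangle - \langle x_t, \theta_\star\rangle\big) + \sum_{t \in N_T^c} \big(\langle x_\star, \theta_\star\rangle - \langle (1-\rho_1) x_{b_t} + \rho_1 \zeta_t, \theta_\star\rangle\big),
\end{align}
using that on rounds $t \in N_T^c$ the played action is exactly the conservative action. The first sum is Term I and is left untouched at this stage; its eventual bound is handled separately through the same optimism-with-constant-probability argument used for Theorem \ref{thm:bounding_term_I_regret_of_safe_lts}.

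For the second sum I would expand the inner product and regroup as
\begin{align}
\langle x_\star, \theta_\star\rangle - \langle x_{b_t}, \theta_\star\rangle + \rho_1 \langle x_{b_t}, \theta_\star\rangle - \rho_1 \langle \zeta_t, \theta_\star\rangle,
\end{align}
then bound each piece in turn: $\langle x_\star, \theta_\star\rangle - \langle x_{b_t}, \theta_\star\rangle = \kappa_{b_t} \leq \kappa_h$ by Assumption \ref{ass:lowerbound_reward}; $\rho_1 \langle x_{b_t}, \theta_\star\rangle = \rho_1 r_{b_t} \leq \rho_1 r_h$ again by Assumption \ref{ass:lowerbound_reward}; and $-\rho_1 \langle \zeta_t, \theta_\star\rangle \leq \rho_1 \|\zeta_t\|_2 \|\theta_\star\|_2 \leq \rho_1 S$ by Cauchy--Schwarz together with $\|\zeta_t\|_2 = 1$ almost surely and Assumption \ref{ass:bounded_parameter}. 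Summing over $t \in N_T^c$ yields $|N_T^c|\big(\kappa_h + \rho_1(r_h + S)\big)$, which is exactly Term II.

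I do not anticipate a genuine obstacle here, as the statement is in effect a restatement of Proposition \ref{decompostion_regret} for the UCB variant. The only subtlety to flag explicitly is that the decomposition is structural: it partitions rounds into $N_T$ and $N_T^c$ and bounds each conservative round by a deterministic constant, so the derivation is unaffected by replacing the Thompson sample with the UCB maximizer on the rounds in $N_T$. Thus the identical chain of inequalities applies and the claimed bound follows.
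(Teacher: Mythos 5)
Your proposal is correct and follows essentially the same argument as the paper: the paper proves the identical decomposition for SCLTS (Proposition \ref{decompostion_regret}) by splitting over $N_T$ and $N_T^c$, expanding the conservative action's inner product, and bounding each piece via Assumptions \ref{ass:bounded_parameter} and \ref{ass:lowerbound_reward} together with $\|\zeta_t\|_2 = 1$, then states the SCLUCB version without a separate proof precisely because, as you observe, the decomposition is agnostic to how the non-conservative actions are chosen. Your sign conventions are in fact more consistent with the algorithm's definition of $x_t^{\text{cb}}$ than the paper's own display, which contains a minor sign typo.
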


In the following, we bound both terms, separately.

\textbf{Bounding Term I.} The first Term in \eqref{SCLUCB:regret:decompos} is the regret caused by playing the safe actions that maximize the reward given the true parameter is $\bar{\theta}_t$. The idea of bounding Term I is similar to \cite{abbasi2011improved}. We use the fact that for $t \in N_T$, $x_t = \bar{x}_t$, and start with the following decomposition of the instantaneous regret for $t \in N_T$ : \begin{align}
 \langle x_{\star}, \theta_{\star} \rangle - \langle x_t, \theta_{\star} \rangle = \underbrace{ \langle x_{\star}, \theta_{\star} \rangle - \langle \bar{x}_t, \bar{\theta}_t \rangle}_{\text{Term A}} + \underbrace{ \langle \bar{x}_t, \bar{\theta}_t \rangle - \langle \bar{x}_t, \theta_{\star} \rangle}_{\text{Term B}}
\end{align} 
\textbf{Bounding Term A.} Since for round $t \in N_t$, we require that $\lambda_{\text{min}}(V_t) \geq k_t^1$, where $k_t^1 = \left(\frac{2 L \beta_t}{\kappa_l + \alpha r_{b_l}}\right) ^2 $, we can conclude that $x_\star \in \mathcal{X}_t^s$. Therefore, due to \eqref{sclucb:action_selection_rule}, we have $\langle \bar{x}_t, \bar{\theta}_t \rangle \geq \langle x_{\star}, \theta_{\star} \rangle$, and hence Term A is not positive. 

\textbf{Bounding Term B.} In order to bound Term B, we use the following chain of inequalities: \begin{align}
    \text{Term B} & :=  \langle \bar{x}_t, \bar{\theta}_t \rangle - \langle \bar{x}_t, \theta_{\star} \rangle = \langle \bar{x}_t, \bar{\theta}_t \rangle -\langle \bar{x}_t, \hat{\theta}_t  \rangle + \langle \bar{x}_t, \hat{\theta}_t  \rangle - \langle \bar{x}_t, \theta_{\star} \rangle \nonumber \\& \leq \|\bar{x}_t \|_{V_t^{-1}} \| \bar{\theta}_t - \hat{\theta}_t \|_{V_t} + \| \bar{x}_t\|_{V_t^{-1}} \| \hat{\theta}_t - \theta_\star \|_{V_t} \nonumber\\& \leq 2 \beta_t \| \bar{x}_t\|_{V_t^{-1}},
\end{align} where the last inequality follows from Proposition \ref{abbasi-ellipsoid}. Recall, from Assumption \ref{ass:actionset}, we have the following trivial bound: \begin{align}
  \langle x_\star, \theta_{\star} \rangle - \langle \bar{x}_t, \theta_{\star} \rangle  \leq 2.  
\end{align}
Thus, we conclude the following \begin{align}
\text{Term B} \leq 2 \min(\beta_t \| \bar{x}_t\|_{V_t^{-1}}, 1).
\end{align}
Next, we state a direct application of Lemma 11 in \cite{abbasi2011improved}. 

\begin{lemma}\label{ucb_bound_of_abbasi}
For $\lambda > 0$, and under Assumptions \ref{ass:sub-gaussian_noise}, \ref{ass:bounded_parameter}, and \ref{ass:actionset}, we have \begin{align}
    \sum_{t=1}^T \min( \| \bar{x}_t\|^2_{V_t^{-1}}, 1) \leq 2d \log\left(  1 + \frac{T L^2}{\lambda d}\right)
\end{align}
\end{lemma}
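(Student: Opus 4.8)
The plan is to reproduce the classical \emph{elliptical potential} (``log-determinant'') argument of \cite{abbasi2011improved}, keeping in mind that the Gram matrix $V_t$ accumulates \emph{all} played actions, whereas the sum of interest concerns the optimistic actions $\bar{x}_t$ chosen on rounds $t \in N_T$. The first observation is that on those rounds $x_t = \bar{x}_t$, so each term $\min(\|\bar{x}_t\|_{V_t^{-1}}^2,1)$ equals $\min(\|x_t\|_{V_t^{-1}}^2,1)$ and can be upper bounded by the sum over all $T$ rounds of the actually played actions, since every summand is nonnegative.

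First I would exploit the rank-one update $V_{t+1} = V_t + x_t x_t^{\top}$. By the determinant formula for a rank-one perturbation, $\det(V_{t+1}) = \det(V_t)\,(1 + \|x_t\|_{V_t^{-1}}^2)$, so that $\sum_{t=1}^T \log(1 + \|x_t\|_{V_t^{-1}}^2)$ telescopes to $\log(\det(V_{T+1})/\det(V_1))$, with $\det(V_1) = \lambda^d$. Next I would pass from $\min(u,1)$ to $\log(1+u)$ via the elementary inequality $\min(u,1) \le 2\log(1+u)$, valid for all $u \ge 0$ (for $u \le 1$ the map $u \mapsto u - 2\log(1+u)$ is decreasing and vanishes at $0$; for $u > 1$ one has $1 \le 2\log 2 \le 2\log(1+u)$). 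Applying this term-by-term and invoking the telescoping identity gives
\begin{align}
\sum_{t=1}^T \min(\|x_t\|_{V_t^{-1}}^2,1) \le 2\log\frac{\det(V_{T+1})}{\lambda^d}. \nonumber
\end{align}

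Finally I would control $\det(V_{T+1})$ by AM--GM on its eigenvalues, $\det(V_{T+1}) \le (\mathrm{trace}(V_{T+1})/d)^d$, together with the trace bound $\mathrm{trace}(V_{T+1}) = \lambda d + \sum_{t=1}^T \|x_t\|_2^2 \le \lambda d + T L^2$, where $\|x_t\|_2 \le L$ follows from Assumption \ref{ass:actionset}. Substituting yields $2\log(\det(V_{T+1})/\lambda^d) \le 2d\log(1 + TL^2/(\lambda d))$, which is the claim.

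There is no genuine analytic obstacle here, since this is essentially a restatement of Lemma 11 of \cite{abbasi2011improved}; the only point requiring care is the bookkeeping flagged at the outset, namely that the sum over the optimistic actions on $N_T$ is dominated by the full determinant telescoping over all played actions (optimistic and conservative alike). This domination is legitimate precisely because the rank-one determinant update, and hence the telescoping, holds at every round irrespective of whether an optimistic or a conservative action was chosen, and because the extra rank-one contributions from conservative rounds only increase $\det(V_{T+1})$ while the trace bound $\lambda d + TL^2$ already accounts for them.
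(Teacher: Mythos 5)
Your proposal is correct and is precisely the standard elliptical-potential argument (matrix determinant lemma, telescoping, $\min(u,1)\le 2\log(1+u)$, and the AM--GM/trace bound), which is exactly the content of Lemma 11 of \cite{abbasi2011improved} that the paper invokes here without reproving. Your extra bookkeeping remark --- that the determinant telescopes over all played actions, conservative rounds included, so the sum over optimistic rounds is dominated --- is a sound and slightly more careful reading of the statement than the paper's bare citation.
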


Therefore, from Lemma \ref{ucb_bound_of_abbasi}, we can conclude the following bound on regret of Term B: \begin{align}
\sum_{t \in N_T} 2 \min(\beta_t \| \bar{x}_t\|_{V_t^{-1}}, 1) \leq 2 \beta_T \sqrt{ 2 d |N_T| \log(1 + \frac{|N_T| L^2}{\lambda d} )}.
\end{align}

Next, in Theorem \ref{SCLUCB:thm:boudning_regret_term_one}, we provide an upper bound on the regret of Term I which is of order $\mathcal{O}\left( d \sqrt{T} \log(\frac{T L^2}{\lambda \delta}) \right)$. 

\begin{theorem}\label{SCLUCB:thm:boudning_regret_term_one}
On event $\{\theta_\star \in \mathcal{E}_t\}$ for a fixed $\delta \in (0,1)$, with probability $1-\delta$, it holds that: \begin{align}
    \sum_{t \in N_T} \left(\langle x_{\star}, \theta_{\star} \rangle - \langle x_t, \theta_{\star} \rangle \right) \leq 2 \beta_T \sqrt{ 2 d T \log(1 + \frac{T L^2}{\lambda d} )}
\end{align} 
\end{theorem}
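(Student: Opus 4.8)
The plan is to follow the standard optimism-in-the-face-of-uncertainty regret analysis of LUCB from \cite{abbasi2011improved}, modified to account for the facts that (i) actions are restricted to the estimated safe set $\mathcal{X}_t^s$, and (ii) the sum runs only over the rounds $t \in N_T$ on which a safe (non-conservative) action is played. Throughout I would work on the event $\{\theta_\star \in \mathcal{E}_t, \forall t\}$, which holds with probability at least $1-\delta$ by Proposition \ref{abbasi-ellipsoid}, so the final statement is a high-probability bound. For a fixed $t \in N_T$ (where $x_t = \bar{x}_t$), I would use the instantaneous-regret decomposition already introduced above, writing $\langle x_\star, \theta_\star\rangle - \langle \bar{x}_t, \theta_\star\rangle = \text{Term A} + \text{Term B}$ with $\text{Term A} = \langle x_\star,\theta_\star\rangle - \langle \bar{x}_t, \bar{\theta}_t\rangle$ and $\text{Term B} = \langle \bar{x}_t, \bar{\theta}_t\rangle - \langle \bar{x}_t, \theta_\star\rangle$.

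For Term A I would invoke optimism. On rounds $t \in N_T$ the algorithm enforces $\lambda_{\text{min}}(V_t) \geq k_t^1 = \big(2L\beta_t/(\kappa_l + \alpha r_l)\big)^2$, which, by the same argument as in the proof of Lemma \ref{lemma:optimistic} (using $\|x_\star\|_{V_t^{-1}} \leq L/\sqrt{\lambda_{\text{min}}(V_t)}$ together with Assumption \ref{ass:actionset}), guarantees $x_\star \in \mathcal{X}_t^s$. Since $(\bar{x}_t, \bar{\theta}_t)$ jointly maximize $\langle x, v\rangle$ over $x \in \mathcal{X}_t^s$ and $v \in \mathcal{E}_t$, and since on our event the pair $(x_\star, \theta_\star)$ is feasible for this joint maximization, we obtain $\langle \bar{x}_t, \bar{\theta}_t\rangle \geq \langle x_\star, \theta_\star\rangle$, so $\text{Term A} \leq 0$.

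For Term B I would add and subtract $\langle \bar{x}_t, \hat{\theta}_t\rangle$, apply Cauchy--Schwarz in the $V_t$-weighted norm, and use that both $\bar{\theta}_t$ and $\theta_\star$ lie in $\mathcal{E}_t$, i.e.\ $\|\bar{\theta}_t - \hat{\theta}_t\|_{V_t} \leq \beta_t$ and $\|\hat{\theta}_t - \theta_\star\|_{V_t} \leq \beta_t$, to conclude $\text{Term B} \leq 2\beta_t\|\bar{x}_t\|_{V_t^{-1}}$; combined with the trivial bound $\langle x_\star,\theta_\star\rangle - \langle \bar{x}_t,\theta_\star\rangle \leq 2$ from Assumption \ref{ass:actionset}, this yields $\text{Term B} \leq 2\min(\beta_t\|\bar{x}_t\|_{V_t^{-1}}, 1)$. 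Summing the per-round bounds over $t \in N_T$, I would then use the monotonicity $\beta_t \leq \beta_T$, apply Cauchy--Schwarz across the sum, and invoke the elliptical-potential bound of Lemma \ref{ucb_bound_of_abbasi}, finally substituting $|N_T| \leq T$ to reach the advertised $\mathcal{O}\!\left(d\sqrt{T}\log(TL^2/(\lambda d))\right)$ rate.

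Most pieces are routine and already assembled in the text; the only point requiring genuine care is applying Lemma \ref{ucb_bound_of_abbasi} to the subset $N_T$ rather than to all of $[T]$. Here I would note that the Gram matrix $V_t$ accumulates \emph{every} played action (safe or conservative), so the telescoping underlying the elliptical-potential lemma is over the full sequence of played actions; restricting attention to $t \in N_T$, where the played action equals $\bar{x}_t$, only discards nonnegative terms, so the bound $\sum_{t \in N_T}\min(\|\bar{x}_t\|_{V_t^{-1}}^2, 1) \leq 2d\log(1 + TL^2/(\lambda d))$ still holds. This bookkeeping is where I expect the main (albeit mild) subtlety to lie; everything else follows the standard LUCB template.
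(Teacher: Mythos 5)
Your proposal is correct and follows essentially the same route as the paper: the identical Term A / Term B decomposition, optimism via $x_\star \in \mathcal{X}_t^s$ guaranteed by the minimum-eigenvalue condition, the Cauchy--Schwarz bound $\text{Term B} \leq 2\min(\beta_t\|\bar{x}_t\|_{V_t^{-1}},1)$, and the elliptical-potential lemma. Your closing remark about why Lemma \ref{ucb_bound_of_abbasi} remains valid when the sum is restricted to $N_T$ (the Gram matrix accumulates all played actions, so dropping the conservative rounds only discards nonnegative terms) is a point the paper passes over silently, and it is a worthwhile clarification.
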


\textbf{Bounding Term II.} In order to bound Term II in \eqref{SCLUCB:regret:decompos}, we need to find an upper bound on the number of times that SCLUCB plays the conservative actions up to time $T$, i.e., $|N_T^c|$. We prove an upper bound on $|N_T^c|$ in Theorem \ref{sclucb:upperbound:number_of_baseline} which has the order of $\mathcal{O} \left(  \frac{L^2 d \log(\frac{T}{\delta})  \log(\frac{d}{\delta})    }{\alpha^4  (r_l^2 \wedge r_l^4)\kappa_l ( \sigma_{\zeta}^2 \wedge \sigma_{\zeta}^4) } \right)$. 

\begin{theorem}\label{sclucb:upperbound:number_of_baseline}
Let $\lambda , L \geq 1$. On event $\{ \theta_\star \in \mathcal{E}_t, \forall t \in [T] \}$, and under Assumption \ref{ass:lowerbound_reward},  we can upper bound the number of times SCLUCB plays the conservative actions, i.e., $|N_T^c|$ as: \begin{align}
         | N_{T}^c| \leq  \left( \frac{2L  \beta_{T} }{\rho_1 \sigma_{\zeta}( \kappa_l + \alpha r_l )} \right)^2 + \frac{2h_1^2}{\rho_1^4 \sigma_{\zeta}^4} \log(\frac{d}{\delta}) + \frac{2 Lh_1 \beta_T  \sqrt{8 \log(\frac{d}{\delta})}}{\rho_1^3 \sigma^3 (\kappa_l + \alpha r_l )},
\end{align} where $h_1 = 2 \rho_1 (1-\rho_1) L + 2 \rho_1^2 $ and $\rho_1 = (\frac{r_l}{S+r_h}) \alpha$.
\end{theorem}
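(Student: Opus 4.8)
The plan is to mirror the argument used to establish Theorem \ref{upperbound:numberofbaseline}, since the claimed bound is identical and the mechanism triggering conservative actions in SCLUCB is structurally the same as in SCLTS. The crucial observation is that the decision to play a conservative action depends only on (i) emptiness of the estimated safe set $\mathcal{X}_t^s$ in \eqref{sclucb:estimated_safe_actoion_set}, and (ii) the eigenvalue test $\lambda_{\text{min}}(V_t) \geq k_t^1$ with $k_t^1 = \left(\frac{2L\beta_t}{\kappa_l + \alpha r_l}\right)^2$; both conditions are shared verbatim with SCLTS, and $\mathcal{X}_t^s$ is defined identically in \eqref{estimated_action_set} and \eqref{sclucb:estimated_safe_actoion_set}. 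Moreover, the conservative action employed is the same convex combination $x_t^{\text{cb}} = (1-\rho_1)x_{b_t} + \rho_1 \zeta_t$ as in \eqref{conservative-actions}. Since Lemma \ref{lemma:lowerbounding_lambdamin-vt} concerns only the contribution of the conservative actions to $V_t$, it applies here unchanged, and what differs between SCLUCB and SCLTS (the exploitation action in \eqref{sclucb:action_selection_rule} versus the sampled action in \eqref{optimal-safe-action}) never enters the analysis of $|N_T^c|$.

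First I would fix an arbitrary round $\tau \in N_T^c$ at which SCLUCB plays a conservative action and show that in both triggering cases one must have $\lambda_{\text{min}}(V_\tau) < k_\tau^1$. If the trigger is the eigenvalue test, this is immediate. If instead $F=0$, i.e.\ $\mathcal{X}_\tau^s = \emptyset$, then in particular $x_\star \notin \mathcal{X}_\tau^s$; since $x_\star \in \mathcal{X}$ by feasibility, the defining inequality of $\mathcal{X}_\tau^s$ fails for $x_\star$, giving $\langle x_\star, \hat{\theta}_\tau \rangle - \beta_\tau \|x_\star\|_{V_\tau^{-1}} < (1-\alpha) r_{b_\tau}$. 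On the event $\theta_\star \in \mathcal{E}_\tau$ we have $\|\theta_\star - \hat{\theta}_\tau\|_{V_\tau} \leq \beta_\tau$, so $\langle x_\star, \theta_\star \rangle - 2\beta_\tau \|x_\star\|_{V_\tau^{-1}} < (1-\alpha) r_{b_\tau}$. Rearranging with $\kappa_{b_\tau} = \langle x_\star, \theta_\star \rangle - r_{b_\tau}$ yields $\kappa_{b_\tau} + \alpha r_{b_\tau} < 2\beta_\tau \|x_\star\|_{V_\tau^{-1}} \leq \frac{2\beta_\tau L}{\sqrt{\lambda_{\text{min}}(V_\tau)}}$, where the last step uses $\|x_\star\|_2 \leq L$ from Assumption \ref{ass:actionset}. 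Hence $\lambda_{\text{min}}(V_\tau) < \left(\frac{2\beta_\tau L}{\kappa_{b_\tau} + \alpha r_{b_\tau}}\right)^2 \leq k_\tau^1$ by Assumption \ref{ass:lowerbound_reward}. Thus $\lambda_{\text{min}}(V_\tau) < k_\tau^1$ in either case, exactly as in \eqref{upperbounding_lambamin_when_conservarive-happens}.

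Next I would invoke Lemma \ref{lemma:lowerbounding_lambdamin-vt} to convert the upper bound on $\lambda_{\text{min}}(V_\tau)$ into a constraint on $|N_\tau^c|$. Combining the deterministic bound $\lambda_{\text{min}}(V_\tau) < \left(\frac{2L\beta_\tau}{\kappa_l + \alpha r_l}\right)^2$ with the high-probability lower bound $\lambda_{\text{min}}(V_\tau) \geq \rho_1^2 |N_\tau^c| \sigma_\zeta^2 - \sqrt{8|N_\tau^c| h_1^2 \log(\frac{d}{\delta})}$ (valid with probability $1-\delta$) gives $\rho_1^2 |N_\tau^c| \sigma_\zeta^2 - \sqrt{8|N_\tau^c| h_1^2 \log(\frac{d}{\delta})} < \left(\frac{2L\beta_\tau}{\kappa_l + \alpha r_l}\right)^2$. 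This is an inequality of the form $a x - \sqrt{b x} < c$ in the variable $x = |N_\tau^c|$, with $a = \rho_1^2 \sigma_\zeta^2$, $b = 8 h_1^2 \log(\frac{d}{\delta})$, and $c = \left(\frac{2L\beta_\tau}{\kappa_l + \alpha r_l}\right)^2$. Applying Lemma \ref{algebra_for_upperbound} and monotonically upper bounding $\beta_\tau \leq \beta_T$ produces the three-term bound in the statement.

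The work here is essentially bookkeeping, as the bound coincides with that of Theorem \ref{upperbound:numberofbaseline}. The only genuinely new verification relative to SCLTS is that emptiness of $\mathcal{X}_\tau^s$ forces the eigenvalue bound, and this is the one place where the UCB-specific selection rule \eqref{sclucb:action_selection_rule} could conceivably interfere. I expect the main point to confirm is therefore that the feasibility flag $F$ in SCLUCB is governed by emptiness of $\mathcal{X}_\tau^s$ alone, and that the joint maximization over $(x,v)$ in \eqref{sclucb:action_selection_rule} imposes no further restriction on when conservative actions are played; once this is settled, the remaining steps are word-for-word those of Theorem \ref{upperbound:numberofbaseline}.
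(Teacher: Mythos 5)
Your proposal is correct and follows exactly the route the paper takes: the paper itself proves Theorem \ref{sclucb:upperbound:number_of_baseline} by noting the argument is identical to that of Theorem \ref{upperbound:numberofbaseline}, which is precisely the reduction you carry out (same triggering conditions, same conservative action, hence Lemma \ref{lemma:lowerbounding_lambdamin-vt} and Lemma \ref{algebra_for_upperbound} apply unchanged). Your explicit verification that emptiness of $\mathcal{X}_\tau^s$ still forces $\lambda_{\text{min}}(V_\tau) < k_\tau^1$ regardless of the UCB selection rule is the one detail the paper leaves implicit, and you handle it correctly.
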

The proof is similar to that of Theorem \ref{upperbound:numberofbaseline}, and we omit its proof here.

\section{Comparison with Safe-LUCB}\label{SCLUCB2:COMPARISON_SECTION}
In this section, we extend our results to   an alternative safe bandit formulation proposed in \cite{amani2019linear}, where the algorithm   Safe-LUCB  was proposed. 
In order to do so, we first present the safety constraint in \cite{amani2019linear}, and then we show the required modification of SCLUCB to handle this case, which we refer to as SCLUCB2. Then, we provide a problem-dependent regret bound for SCLUSB2, and we show that it matches the problem dependent regret bound of Safe-LUCB in \cite{amani2019linear}. We need to note that in \cite{amani2019linear}, they also provide a general regret bound of order $\tilde{\mathcal{O}} ({T^{2/3}})$ for Safe-LUCB which we do not discuss in this paper. 

In \cite{amani2019linear}, it is assumed that  the learner is given a convex and compact decision set $\mathcal{D}_0$ which contains the origin, and with playing the action $x_t$, she observes the reward of $y_t = x_t^{\top} \theta_\star + \eta_t$, where $\theta_\star$ is the fixed unknown parameter, and $\eta_t$ is $R$-sub-Gaussian noise. Moreover, The learning environment is subject to the linear safety constraint \begin{align}
    x^{\top} B \theta_\star \leq C, \label{safe-lucb:safety_constraint}
\end{align} which needs to be satisfied at all rounds $t$ with high probability, and an action $x_t$ is called safe, if it satisfies \eqref{safe-lucb:safety_constraint}. In \eqref{safe-lucb:safety_constraint}, the matrix $B \in \mathbb{R}^{d\times d}$ and the positive constant $C$ are known to the learner. However, the learner does not receive any bandit feedback on   the value $ x^{\top} B \theta_\star$ and her information is  restricted to those she receives  from  the reward.

Given the above constraint, the learner is restricted to choose actions from the safe set $\mathcal{D}_0^s$ as: \begin{align}
   \mathcal{D}_0^s (\theta_\star) = \{ x \in \mathcal{D}_0:  x^{\top} B \theta_\star \leq C  \}. 
\end{align} Since $\theta_\star$ in unknown, the safe set $\mathcal{D}_0^s$ is unknown to the learner. Then, in \cite{amani2019linear}, they provide the problem-dependent regret bound (for the case where $\Delta := C - x^{\top} B \theta_\star > 0 $) of order $\mathcal{O}(\sqrt{T}\log T)$. In the following, we present the required modification of SCLUSB to handle this safe bandit formulation, and propose the new algorithm called SCLUCB2 that we prove a problem dependnt regret bound of order $\mathbb{O}(\sqrt{T} \log T)$. We need to note that \cite{amani2019linear} also provide a general regret bound of order $\tilde{\mathcal{O}} ({T^{2/3}})$ for the case where  $\Delta = 0$; however, we do not discuss this case in this paper.

At each round $t$, given the RLS-estimate $\hat{\theta}_t$ of $\theta_\star$, SLUCB2 builds the confidence region $\mathcal{E}_t$ as: \begin{align}
    \mathcal{E}_t = \{\theta \in \mathbb{R}^d: \| \theta - \hat{\theta}_t \|_{V_t} \leq \beta_t  \}, \label{SCLUCB2:confidence_region}
\end{align} and the radius $\beta_t$ is chosen according to Proposition \ref{abbasi-ellipsoid} such that $\theta_\star \in \mathcal{E}_t$ with high probability. The learner does not know the safe set $\mathcal{D}_0^s$; however, she knows that $\theta_\star \in \mathcal{E}_t$ with high probability. Hence, SLUCB2  constructs the estimated safe  set $\mathcal{D}_t^s$ such that it contains actions that are safe with respect to all the parameter in $\mathcal{E}_t$, i.e., \begin{align}
    \mathcal{D}_t^s &= \{x \in \mathcal{D}_0 :  x^{\top} B v \leq C, \forall v \in \mathcal{E}_t \} \nonumber \\& = \{x \in \mathcal{D}_0 :  \max_{v \in \mathcal{E}_t }x^{\top} B v \leq C \} \nonumber \\& = \{x \in \mathcal{D}_0 :  x^{\top} B \hat{\theta}_t + \beta_t \|B x\|_{V_t^{-1}} \leq C \}
    \label{SCLUCB2:safe_estimated_actions}
\end{align}

Clearly, action $x = [0]^d$ (origin) is a safe action since  $C>0$, and also $[0]^d \in \mathcal{D}_0$. Thus, $ [0]^d \in \mathcal{D}_t^s$. 
Since $x = [0]^d$ is a known safe action, we define the conservative action $x_0^{\text{c}}$ as: \begin{align}
    x_0^{\text{c}} = (1-\rho) [0]^d + \rho \zeta_t = \rho \zeta_t, 
\end{align}where $\zeta_t$ is a sequence of IID random vectors such that $\| \zeta_t \|_2 = 1$ almost surly, and $\sigma_\zeta = \lambda_{\text{min}}(\text{Cov}(\zeta_t)) > 0$. We choose the constant $\rho$ according to the Lemma \ref{sclucb:upperbounding_rho} in order to ensure that the conservative action $x_0^{\text{c}}$ is safe. 

\begin{lemma}\label{sclucb:upperbounding_rho}
At each round $t$, for any $\rho \in (0,\bar{\rho})$, where \begin{align}
    \bar{\rho} = \frac{C}{\|B\| S},
\end{align} the conservative action $x_0^{\text{c}} = \rho \zeta_t$ is guaranteed to be safe almost surly.  
\end{lemma}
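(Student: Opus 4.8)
The plan is to mirror the proof of Lemma \ref{upper_bound_rho}: directly verify that the conservative action $x_0^{\text{c}} = \rho \zeta_t$ satisfies the safety constraint \eqref{safe-lucb:safety_constraint} for every sufficiently small $\rho$, and then read off the largest admissible value of $\rho$. First I would substitute $x = \rho \zeta_t$ into the left-hand side of \eqref{safe-lucb:safety_constraint} and use linearity to write $(\rho \zeta_t)^{\top} B \theta_\star = \rho\, \zeta_t^{\top} B \theta_\star$. The goal is then to upper bound the scalar $\zeta_t^{\top} B \theta_\star$ by a deterministic quantity that holds almost surely.

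Next I would apply the Cauchy-Schwarz inequality followed by submultiplicativity of the operator norm to obtain $\zeta_t^{\top} B \theta_\star \leq \|\zeta_t\|_2 \, \|B\| \, \|\theta_\star\|_2$. Invoking the assumption $\|\zeta_t\|_2 = 1$ almost surely (which the exploration vectors satisfy by construction) together with Assumption \ref{ass:bounded_parameter}, namely $\|\theta_\star\|_2 \leq S$, this becomes $\zeta_t^{\top} B \theta_\star \leq \|B\| S$ almost surely. Consequently $(\rho \zeta_t)^{\top} B \theta_\star \leq \rho \|B\| S$ almost surely.

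Finally, requiring that this upper bound not exceed the threshold $C$ yields the scalar inequality $\rho \|B\| S \leq C$, i.e., $\rho \leq C / (\|B\| S) = \bar{\rho}$. Hence for any $\rho \in (0, \bar{\rho})$ we have $\rho \|B\| S < C$, so that $x_0^{\text{c}} = \rho \zeta_t$ satisfies \eqref{safe-lucb:safety_constraint} almost surely, which is precisely the claim.

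I do not anticipate a genuine obstacle here, since the argument reduces to a one-line norm estimate analogous to the one used in Lemma \ref{upper_bound_rho}. The only point requiring care is to ensure that $\|B\|$ denotes the spectral (operator) norm induced by the Euclidean vector norm, so that the step $\|B \theta_\star\|_2 \leq \|B\| \, \|\theta_\star\|_2$ is valid; under any other matrix norm the definition of $\bar{\rho}$ would need to be adjusted accordingly. A secondary remark is that, unlike the baseline constraint in \eqref{cons:safety}, the origin is trivially safe here because $C > 0$, which is what makes the pure scaling $\rho \zeta_t$ (rather than a convex combination with a baseline action) an admissible conservative action.
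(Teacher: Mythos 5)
Your proof is correct and is exactly the argument the paper intends: the paper omits an explicit proof of this lemma, but your one-line estimate $(\rho\zeta_t)^{\top}B\theta_\star \leq \rho\|\zeta_t\|_2\|B\|\|\theta_\star\|_2 \leq \rho\|B\|S < C$ for $\rho < \bar{\rho}$ is the direct analogue of the paper's proof of Lemma \ref{upper_bound_rho}, using $\|\zeta_t\|_2=1$ almost surely and Assumption \ref{ass:bounded_parameter}. Your remarks about the spectral norm and about the origin being trivially safe (since $C>0$) are both apt and consistent with the paper's setup.
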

We choose $\rho = \frac{C}{\|B\| S}$ for the rest of this section, and hence the conservative action is \begin{align}
    x_0^{\text{c}} = \frac{C}{\|B\| S} \zeta_t. \label{sclucb2:defn:conservative_action}
\end{align}

Let $\Delta = C - x_\star^{\top} B \theta_\star$. We consider the case where $\Delta > 0$.
At each $t$, in order to guarantee safety, SCLUCB2 only chooses its action from the estimated safe set $\mathcal{D}_t^s$. The challenge with $\mathcal{D}_t^s$ is that it includes actions that are safe with respect to all parameter in $\mathcal{E}_t$, and not only $\theta_\star$. Thus, there may exist some rounds that $\mathcal{D}_t^s$ is empty. At round $t$, if $\mathcal{D}_t^s$ is not empty, SCLUCB2  plays the  safe action \begin{align}
    \bar{x}_t = \arg\max_{x \in \mathcal{D}_t^s} \max_{v \in \mathcal{E}_t} \langle x , v \rangle \label{sclucb2":optimal_safeaction}
\end{align} only if $\lambda_{\text{min}} (V_t)\geq \left(\frac{2 L \beta_t \| B\|}{\Delta}\right) ^2$, otherwise it plays the conservative action $x_0^{\text{c}}$ in \eqref{sclucb2:defn:conservative_action}. The summary of SCLUCB2 is presented in Algorithm \ref{SCLUCB2}.

\begin{algorithm} 

\caption{SCLUCB2 }\label{SCLUCB2}

\textbf{Input:}  $\delta, T, \lambda, \rho$

\For{$t=1,\dots,T$ } {

Compute RLS-estimate $\hat{\theta}_t$ and $V_t$ according to \eqref{RLS-estimate}

 Build the confidence region $\mathcal{E}_t(\delta)$ in \eqref{SCLUCB2:confidence_region} 
% \\

Compute the estimated safe set $\mathcal{D}_t^s$ in \eqref{SCLUCB2:safe_estimated_actions} \\

 \textbf{if} the following optimization is feasible: $\bar{x}_t = \arg\max_{x \in \mathcal{D}_t^s} \max_{v \in \mathcal{E}_t} \langle x , v \rangle$, \textbf{then}
 
 Set $F=1$, \textbf{else} $F = 0$ \\

\textbf{if} $F = 1$ \textbf{and} $\lambda_{\text{min}} (V_t)\geq \left(\frac{2 L \beta_t \| B\|}{\Delta}\right) ^2$,  \textbf{then}

    Play $x_t =\bar{x}_t$

\textbf{else}

    play $x_t =x_0^{\text{c}}$ defined in \eqref{sclucb2:defn:conservative_action}

 Observe reward $y_t$ \\ }\textbf{end for}
\SetAlgoLined
\end{algorithm}

In the following we provide the regret guarantee for SCLUCB2. Let $N_{t-1}$ be the set of rounds $i < t$ at which SCLUCB2 plays the action in \eqref{sclucb2":optimal_safeaction}. Similarly, $N_{t-1}^{c} = \{1,\dots,t-1\} - N_{t-1}$ is the set of rounds $j < t$ at which SCLUCB2 plays the conservative action in \eqref{sclucb2:defn:conservative_action}. 

First, we use the following decomposition of the regret, then we bound each term separately.
\begin{Proposition}
The regret of SCLUCB2 can be decomposed to the following two terms:
\begin{align}
    R(T) & =  \sum_{t=1}^T \langle x_\star , \theta_\star \rangle - \langle x_t, \theta_\star \rangle  \nonumber \\&= { \sum_{t \in N_T} \bigg( \langle x_\star , \theta_\star \rangle - \langle x_t, \theta_\star \rangle \bigg)} + {\sum_{t \in N_T^c} \bigg( \langle x_\star , \theta_\star \rangle - \langle  x_0^{\text{c}}, \theta_\star \rangle \bigg)}, \nonumber \\& \leq \underbrace{ \sum_{t \in N_T} \bigg( \langle x_\star , \theta_\star \rangle - \langle x_t, \theta_\star \rangle \bigg)}_{\text{Term I}} + \underbrace{ 2 |N_t^c|}_{\text{Term II}}. \label{sclucb2:decomopregert}
\end{align}
\end{Proposition}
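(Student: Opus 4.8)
The plan is to follow verbatim the elementary regret-decomposition arguments already established for SCLTS (Proposition~\ref{decompostion_regret}) and SCLUCB (Proposition~\ref{SCLUCB:decompostion_regret}), now specialized to the conservative action $x_0^{\text{c}} = \frac{C}{\|B\|S}\zeta_t$ of SCLUCB2 defined in \eqref{sclucb2:defn:conservative_action}. First I would start from the definition $R(T)=\sum_{t=1}^T \langle x_\star,\theta_\star\rangle-\langle x_t,\theta_\star\rangle$ and partition the rounds $\{1,\dots,T\}$ into the disjoint index sets $N_T$, on which the optimistic safe action is played, and $N_T^c$, on which the conservative action is played. Splitting the sum accordingly and substituting $x_t=x_0^{\text{c}}$ on the conservative rounds yields the exact identity
\begin{align}
R(T)=\sum_{t\in N_T}\left(\langle x_\star,\theta_\star\rangle-\langle x_t,\theta_\star\rangle\right)+\sum_{t\in N_T^c}\left(\langle x_\star,\theta_\star\rangle-\langle x_0^{\text{c}},\theta_\star\rangle\right),
\end{align}
whose first summand is precisely Term~I.

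It then remains only to control the per-round contribution on the conservative rounds. Here I would invoke the same trivial instantaneous-regret bound used in the SCLUCB analysis: since $x_0^{\text{c}}$ is a feasible (indeed safe, by Lemma~\ref{sclucb:upperbounding_rho}) action and the expected rewards are bounded in absolute value over the decision set, the gap $\langle x_\star,\theta_\star\rangle-\langle x_0^{\text{c}},\theta_\star\rangle$ between the optimal reward and any feasible reward is at most $2$; this uses Assumption~\ref{ass:actionset} to bound $\langle x_\star,\theta_\star\rangle\le 1$ and Assumption~\ref{ass:bounded_parameter} together with $\|\zeta_t\|_2=1$ to lower-bound $\langle x_0^{\text{c}},\theta_\star\rangle$. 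Summing this constant bound over the conservative rounds gives $\sum_{t\in N_T^c}(\langle x_\star,\theta_\star\rangle-\langle x_0^{\text{c}},\theta_\star\rangle)\le 2|N_T^c|$, which is exactly Term~II, and the decomposition follows.

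Since both steps are purely algebraic, there is no genuine obstacle in this proposition itself; the only point requiring a moment's care is the justification of the constant $2$ on the conservative rounds, which is where the boundedness of the expected reward and of $\theta_\star$ enter. The substantive work is deferred to the two separate bounds this decomposition enables: bounding Term~I through the optimism-with-constant-probability argument (as in Theorem~\ref{SCLUCB:thm:boudning_regret_term_one}, after verifying $x_\star\in\mathcal{D}_t^s$ on the rounds $t\in N_T$), and bounding $|N_T^c|$ through the minimum-eigenvalue growth of the Gram matrix driven by the randomized exploration in $x_0^{\text{c}}$ (as in Lemma~\ref{lemma:lowerbounding_lambdamin-vt}).
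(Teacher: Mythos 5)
Your proposal is correct and follows essentially the same route as the paper, which states this proposition without a separate proof precisely because it is the verbatim analogue of Propositions~\ref{decompostion_regret} and~\ref{SCLUCB:decompostion_regret}: split the sum over $N_T$ and $N_T^c$, substitute $x_t = x_0^{\text{c}}$ on conservative rounds, and invoke the trivial instantaneous-regret bound of $2$ already used for Term~B in Appendix~\ref{SCLUCB}. The one small point both you and the paper leave implicit is that the constant $2$ requires $\langle x_0^{\text{c}},\theta_\star\rangle \geq -1$, which your route gives as $\rho\,\langle \zeta_t,\theta_\star\rangle \geq -\rho S = -C/\|B\|$ and therefore relies on the (unstated but harmless) normalization $C \leq \|B\|$, or equivalently on noting that $\|x_0^{\text{c}}\|_2 = \rho \leq 1$ places the conservative action inside the unit ball and hence inside the action set of Assumption~\ref{ass:actionset}.
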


\textbf{Bounding Term I.} In order to bound Term I, we proceed as follows. First, we show that at rounds $t \in N_T$, the optimal action $x_\star $ belongs to the estimated safe set $\mathcal{D}_t^s$, i.e., $x_\star \in \mathcal{D}_t^s$. To do so, we need to show that \begin{align}
   x_\star^{\top} B \hat{\theta}_t + \beta_t \|B x_\star \|_{V_t^{-1}} \leq C.\label{sclusb2:showing_x8_is_safe}
\end{align} Since $\| \theta_\star-\hat{\theta}_t\|_{V_t} \leq \beta_t$, it suffices to show that: \begin{align}
    x_\star^{\top} B \theta_\star + 2\beta_t \|B x_\star \|_{V_t^{-1}} \leq C, \label{sclusb2:suffces_to_show_for_x*_safe_set}  
\end{align} or equivalently \begin{align}
    2\beta_t \|Bx_\star \|_{V_t^{-1}} \leq \Delta, \label{SLCUSB2:toboundlambadmin}
\end{align} where $\Delta = C - x_\star^{\top} B \theta_\star $.
It is easy to see \eqref{sclusb2:showing_x8_is_safe} is true whenever \eqref{sclusb2:suffces_to_show_for_x*_safe_set} holds. Using Assumption \ref{ass:actionset}, we can get $\|Bx_\star\|_{V_t^{-1}} \leq \frac{\|B\| \|x_\star\|_2}{\sqrt{\lambda_{\text{min}}(V_t)}} \leq \frac{\|B\| L}{\sqrt{\lambda_{\text{min}}(V_t)}}$. Hence, from \eqref{SLCUSB2:toboundlambadmin}, it suffices to show that  \begin{align}
   \frac{ 2 \beta_t \|B\| L}{\sqrt{\lambda_{\text{min}}(V_t)}} \leq \Delta, 
\end{align}or equivalently \begin{align}
    \lambda_{\text{min}}(V_t) \geq \left(\frac{2 \beta_t \|B\| L}{\Delta}\right)^2
\end{align} that we know it is true for $t \in N_T$. Therefore, on event $\{\theta_\star \in \mathcal{E}_t \}$, $x_\star \in \mathcal{D}_t^s$. We can bound the regret of Term I in \eqref{sclucb2:decomopregert} similar to Theorem \ref{SCLUCB:thm:boudning_regret_term_one}, and get the regret of order $\mathcal{O}\left( d \sqrt{T} \log(\frac{T L^2}{\lambda \delta}) \right)$. 

\textbf{Bounding Term II.} We need to upper bound the number of times that SCLUCB2 plays the conservative action $x_0^{\text{c}}$, i.e., $|N_T^c|$. We prove an upper bound on $|N_T^c|$ in Theorem \ref{SCLUCB2:THM:boundon_number} which has the order of $\mathcal{O} \left(  \frac{L^2 S^2 \|B\|^2 d \log(\frac{T}{\delta})  \log(\frac{d}{\delta})    }{\Delta^2 ( C \wedge C^2) ( \sigma_{\zeta}^2 \wedge \sigma_{\zeta}^4) } \right)$. 

\begin{theorem}\label{SCLUCB2:THM:boundon_number}
Let $\lambda , L \geq 1$. On event $\{ \theta_\star \in \mathcal{E}_t, \forall t \in [T] \}$,   we can upper bound the number of times SCLUCB2 plays the conservative actions, i.e., $|N_T^c|$ as: \begin{align}
         | N_{T}^c| \leq  \left(\frac{2 L S \| B\|^2\beta_T }{C \Delta \sigma_\zeta }\right)^2 + \frac{32 \log(\frac{d}{\delta})}{\sigma_\zeta^4} + \frac{8 L S \| B\|^2 \beta_T \sqrt{2 \log(\frac{d}{\delta})}}{C \Delta \sigma_\zeta^3}.
\end{align}
\end{theorem}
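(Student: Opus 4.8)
The plan is to mirror the argument used for Theorem \ref{upperbound:numberofbaseline}, adapting it to the upper-bound constraint $x^\top B\theta_\star \le C$ and to the origin-anchored conservative action $x_0^{\text{c}} = \rho\zeta_t$ of \eqref{sclucb2:defn:conservative_action}. First I would fix an arbitrary round $\tau \in N_T^c$ at which SCLUCB2 plays $x_0^{\text{c}}$ and show that, regardless of which of the two triggering conditions fires, one always has the deterministic upper bound $\lambda_{\min}(V_\tau) < k_\tau := \left(2L\beta_\tau\|B\|/\Delta\right)^2$. When the eigenvalue test fails this is immediate; the substantive case is $F=0$, i.e. $\mathcal{D}_\tau^s = \emptyset$. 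Here I would use that $\Delta>0$ makes $x_\star$ strictly safe, so $x_\star\in\mathcal{D}_0^s$, and therefore emptiness of $\mathcal{D}_\tau^s$ forces $x_\star\notin\mathcal{D}_\tau^s$, i.e. $x_\star^\top B\hat\theta_\tau + \beta_\tau\|Bx_\star\|_{V_\tau^{-1}} > C$. Combining with $\|\theta_\star-\hat\theta_\tau\|_{V_\tau}\le\beta_\tau$ (Proposition \ref{abbasi-ellipsoid}) gives $x_\star^\top B\theta_\star + 2\beta_\tau\|Bx_\star\|_{V_\tau^{-1}} > C$, hence $2\beta_\tau\|Bx_\star\|_{V_\tau^{-1}} > \Delta$; bounding $\|Bx_\star\|_{V_\tau^{-1}}\le \|B\|L/\sqrt{\lambda_{\min}(V_\tau)}$ via Assumption \ref{ass:actionset} then yields $\lambda_{\min}(V_\tau)<k_\tau$. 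This is exactly the reduction already carried out (in the opposite direction) in the Bounding Term I computation \eqref{sclusb2:showing_x8_is_safe}--\eqref{SLCUSB2:toboundlambadmin}.

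Next I would produce a matching probabilistic lower bound on $\lambda_{\min}(V_\tau)$ in terms of $|N_\tau^c|$, which is the SCLUCB2 analogue of Lemma \ref{lemma:lowerbounding_lambdamin-vt}. Because the conservative action is now the pure random perturbation $\rho\zeta_t$ (no baseline component), the Gram matrix restricted to conservative rounds simplifies to $V_\tau \succeq \rho^2\sum_{s\in N_\tau^c}\zeta_s\zeta_s^\top = \rho^2\sigma_\zeta^2|N_\tau^c|\, I + \sum_{s\in N_\tau^c}U_s$ with $U_s := \rho^2\bigl(\zeta_s\zeta_s^\top - \mathbb{E}[\zeta_s\zeta_s^\top]\bigr)$. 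The $U_s$ are self-adjoint martingale differences satisfying $U_s^2 \preceq (2\rho^2)^2 I$ almost surely since $\|\zeta_s\|_2=1$, so applying the matrix Azuma inequality (Theorem \ref{matrix_azuma_inequality}) with $A_s = 2\rho^2 I$ together with Weyl's inequality gives, with probability $1-\delta$, the bound $\lambda_{\min}(V_\tau) \ge \rho^2\sigma_\zeta^2|N_\tau^c| - \sqrt{8|N_\tau^c|(2\rho^2)^2\log(d/\delta)}$. Note the relevant constant is $h' = 2\rho^2$, strictly simpler than the $h_1$ of Lemma \ref{lemma:lowerbounding_lambdamin-vt}, precisely because the origin is a known safe anchor.

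Finally I would chain the two displays, $\rho^2\sigma_\zeta^2|N_\tau^c| - \sqrt{8(2\rho^2)^2|N_\tau^c|\log(d/\delta)} < k_\tau$, and invoke Lemma \ref{algebra_for_upperbound} with $a=\rho^2\sigma_\zeta^2$, $b=32\rho^4\log(d/\delta)$, and $c=\left(2L\beta_\tau\|B\|/\Delta\right)^2$. Substituting $\rho = C/(\|B\|S)$ and bounding $\sqrt{b^2+4abc}\le b + 2\sqrt{abc}$ collapses the estimate into $c/a + b/a^2 + \sqrt{abc}/a^2$, which after simplification produces exactly the three advertised terms; monotonicity of $\beta_\tau$ then lets me replace $\beta_\tau$ by $\beta_T$ to bound $|N_T^c|$. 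I expect the only genuinely delicate step to be the $F=0$ reduction in the first paragraph, since that is where the upper-bound geometry departs from the lower-bound setting of SCLTS/SCLUCB: it hinges on $\Delta>0$ guaranteeing that $x_\star$ lies in the \emph{true} safe set $\mathcal{D}_0^s$, which is exactly the obstruction flagged in the abstract, namely that the optimal action need not belong to the estimated safe set at every round. The matrix-concentration and algebraic steps are otherwise routine adaptations of the earlier proofs.
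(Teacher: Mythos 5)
Your proposal follows essentially the same route as the paper's proof: the same reduction of both triggering conditions to $\lambda_{\min}(V_\tau) < \left(2L\|B\|\beta_\tau/\Delta\right)^2$, the same matrix-Azuma lower bound on $\lambda_{\min}(V_\tau)$ via $G_s = \rho^2(\zeta_s\zeta_s^\top - \mathbb{E}[\zeta_s\zeta_s^\top])$ with $A_s = 2\rho^2 I$, and the same application of Lemma \ref{algebra_for_upperbound}, yielding matching constants. The only cosmetic remark is that in the $F=0$ case the emptiness of $\mathcal{D}_\tau^s$ already gives $x_\star \notin \mathcal{D}_\tau^s$ because $x_\star \in \mathcal{D}_0$; the fact that $\Delta>0$ places $x_\star$ in the true safe set is not needed for that implication, only for the final bound to be finite.
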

\begin{proof}
Let $\tau$ be any round that the algorithm plays the conservative action, i.e., at round $\tau$, either $F = 0$ or $\lambda_{\text{min}}(V_\tau) < \left(\frac{2L \|B\| \beta_\tau}{\Delta} \right)^2$.

By definition, if $F=0$, we have 
\begin{align}
    \nexists x \in \mathcal{X} :  x^{\top} B \hat{\theta}_\tau  + \beta_\tau \|B x \|_{V_\tau^{-1}}   \leq C,
\end{align} and since we know that $x_\star \in \mathcal{X}$, and $\theta_\star \in \mathcal{E}_t$ with high probability, we can write  \begin{align}
 x_\star^{\top} B  \theta_\star + 2\beta_\tau \| B x_\star \|_{V_\tau^{-1}}  \geq \ x_\star^{\top} B \hat{\theta}_\tau + \beta_\tau \| B x_\star \|_{V_\tau^{-1}}  > C. \label{SCLUCB2:toboundthenumber:toshowlowerboundonlambdamin1}
\end{align} Then, using the LHS and RHS of \eqref{SCLUCB2:toboundthenumber:toshowlowerboundonlambdamin1}, we can get \begin{align}
 \frac{2 L \|B \|\beta_\tau}{\sqrt{\lambda_{\text{min}}(V_\tau)}} \geq 2 \beta_\tau \| x_\star \|_{V_\tau^{-1}} \geq \Delta,  \nonumber
\end{align} and hence the following upper bound on minimum eigenvalue of the Gram matrix: \begin{align}
    \lambda_{\text{min}}(V_\tau) < \left(\frac{2 L \|B\| \beta_\tau }{\Delta}\right)^2. \nonumber
\end{align}

Therefore, at any round $\tau$ that a conservative action is played, whether it is because $\{ F=0\}$ happens or beccause we have $\{ \lambda_{\text{min}}(V_\tau) < \left(\frac{2 L \|B\| \beta_\tau }{\Delta}\right)^2 \}$, we can always conclude that  \begin{align}
\lambda_{\text{min}}(V_\tau) < \left(\frac{2 L \|B\| \beta_\tau }{\Delta}\right)^2 \label{sclucb2:findingupperbound_on_lambdamin_forevernntc}
\end{align}

The remaining of the proof builds on two auxiliary lemmas. First, in  Lemma \ref{SCLUCB2:lemma:lowerbounding_lambdamin-vt_WITH_numberoftimesconservative}, we show that the minimum eigenvalue of the Gram matrix $V_t$ is lower bounded with the number of times SCLUCB2 plays the conservative actions.

\begin{lemma}\label{SCLUCB2:lemma:lowerbounding_lambdamin-vt_WITH_numberoftimesconservative}
On event $\{ \theta_\star \in \mathcal{E}_t \}$, it holds that \begin{align}
    \mathbb{P}(  \lambda_{\text{min}} (V_{t}) \leq t ) \leq d \exp{ \left( - \frac{(\rho^2 \sigma_{\zeta}^2 |N_t^c|  - t)^2}{32 \rho^4 |N_t^c| } \right)},
\end{align} where  $\rho = \frac{C}{\|B\| S}$.
\end{lemma}
Using \eqref{sclucb2:findingupperbound_on_lambdamin_forevernntc} and applying Lemma \ref{SCLUCB2:lemma:lowerbounding_lambdamin-vt_WITH_numberoftimesconservative}, it can checked that with probability $1-\delta$ \begin{align}
 \left(\frac{2 L \|B\| \beta_\tau }{\Delta}\right)^2   > \rho^2 \sigma_{\zeta}^2  |N_\tau^c|  - \sqrt{32 \rho^4. \nonumber |N_\tau^c|\log(\frac{d}{\delta})},
\end{align}
Then using Lemma \ref{algebra_for_upperbound}, we can conclude the following upper bound \begin{align}
    |N_\tau^c| \leq \left(\frac{2 L S \| B\|^2\beta_\tau }{C \Delta \sigma_\zeta }\right)^2 + \frac{32 \log(\frac{d}{\delta})}{\sigma_\zeta^4} + \frac{8 L S \| B\|^2 \beta_\tau \sqrt{2 \log(\frac{d}{\delta})}}{C \Delta \sigma_\zeta^3}. \nonumber
\end{align}
\end{proof}

\subsection{Proof of Lemma \ref{SCLUCB2:lemma:lowerbounding_lambdamin-vt_WITH_numberoftimesconservative}}

Our objective is to establish a lower bound on  $\lambda_{\text{min}} (V_{t})$ for all $t$. It holds that \begin{align}
    V_t &= \lambda I + \sum_{s=1}^t x_s x_s^{\top} \nonumber \\& \succeq \sum_{s \in N_t^c} \left(  \rho \zeta_s \right) \left( \rho \zeta_s \right)^{\top} \nonumber\\&
    = \sum_{s \in N_t^c} \bigg(  \rho^2 \mathbb{E} [\zeta_s \zeta_s^{\top}]    +   \rho^2 \zeta_s \zeta_s^{\top} - \rho^2 \mathbb{E} [\zeta_s \zeta_s^{\top}] \bigg) \nonumber\\&
    \succeq \rho^2 \sigma_{\zeta}^2 |N_t^c|  I + \sum_{s \in N_t^c} G_s,
\end{align} where $G_s$ is defined as \begin{align}
    G_s = \bigg(   \rho^2 \zeta_s \zeta_s^{\top} - \rho^2 \mathbb{E} [\zeta_s \zeta_s^{\top}] \bigg). \label{SCLUCB2:defnition_of:G_s}
\end{align}
Thus, using Weyl's inequality, it follows that \begin{align}
    \lambda_{\text{min}} (V_{t}) \geq \rho^2 \sigma_{\zeta}^2 |N_t^c| - \lambda_{\text{max}} (\sum_{s \in N_t^c} G_s). \nonumber
\end{align}

Next, we apply the matrix Azuma inequality (see Theorem \ref{matrix_azuma_inequality}) to find an upper bound on $\lambda_{\text{max}} (\sum_{s \in N_t^c} G_s)$. 
For this, we first need to show that the sequence of matrices $G_s$ satisfies the conditions of Theorem \ref{matrix_azuma_inequality}. By definition of $G_s$ in \eqref{SCLUCB2:defnition_of:G_s}, it follows that $\mathbb{E}[G_s | \mathcal{F}_{s-1}] = 0$, and $G_s^{\top} = G_s$. Also, we construct the sequence of deterministic matrices $A_s$ such that $G_s^2 \preceq A_s^2$ as follows. We know that for any matrix $K$,  $K^2 \leq \|K\|_2^2 I$, where $\| K\|_2$ is the maximum singular value of  $K$, i.e.,\begin{align}
    \sigma_{\text{max}}(K) = \max_{\|u\|_1 = \|v\|_2 = 1} u^{\top} K v. \nonumber
\end{align} Thus, we first show the following bound on the maximum singular value of the matrix $G_s$ defined in \eqref{SCLUCB2:defnition_of:G_s}:  \begin{align}
   \max_{\|u\|_1 = \|v\|_2 = 1} u^{\top} G_s v  & =   \rho^2 (u^\top \zeta_s) (v^\top \zeta_s)^{\top} - \rho^2 \mathbb{E} \left[(u^\top \zeta_s) (v^\top \zeta_s)^{\top}\right] \nonumber\\& \leq   \rho^2 \|\zeta_s\|_2^2 + \rho^2 \mathbb{E} \left[\|\zeta_s\|_2^2 \right] \nonumber \\& \leq  2 \rho^2, \nonumber
\end{align} where we have used Cauchy-Schwarz inequality and the last inequality comes from the fact that $\| \zeta_s\|_2 = 1$ almost surely.
From the derivations above, and choosing $A_s = 2 \rho^2 I$,  it almost surely holds that $G_s^2 \preceq \sigma_{\text{max}}(G_s)^2 I  \preceq 4 \rho^4 I = A_s^2$. Moreover, using triangular inequality, it holds that \begin{align}
    \| \sum_{s \in N_t^c} A_s^2 \| \leq \sum_{s \in N_t^c} \| A_s^2 \| \leq 4 \rho^4 |N_t^c|. \nonumber 
\end{align}

Now we can apply the matrix Azuma inequality, to conclude that for any $ c  \geq 0 $, \begin{align}
    \mathbb{P} \left(  \lambda_{\text{max}} (\sum_{s \in N_t^c} G_s)  \geq c    \right) \leq d \exp{\left( - \frac{c^2}{32 \rho^4|N_t^c|} \right)}. \nonumber
\end{align} Therefore, it holds that with probability $1-\delta$, $\lambda_{\text{max}} (\sum_{s \in N_t^c} G_s) \leq \sqrt{32\rho^4 |N_t^c| \log(\frac{d}{\delta})}$, and hence with probability $1-\delta$, \begin{align}
    \lambda_{\text{min}} (V_{t}) \geq \rho^2 \sigma_{\zeta}^2  |N_t^c|  - \sqrt{32 \rho^4 |N_t^c|\log(\frac{d}{\delta})}, \label{SCLUCB2:TRYT0lowerboundinglambdaofmin}
\end{align} or equivalently, 
\begin{align}
    \mathbb{P}(  \lambda_{\text{min}} (V_{t}) \leq t ) \leq d \exp{ \left( - \frac{(\rho^2 \sigma_{\zeta}^2 |N_t^c|  - t)^2}{32 \rho^4 |N_t^c| } \right)},
\end{align} where  $\rho = \frac{C}{\|B\| S}$. This completes the proof. 

\subsection{Simulation Results}
 
 In order to verify our results  on the regret bound of SCLUCB2, we plot the Figure \ref{fig:comparinfwithsafelucb} which plots the cumulative regret of the two algorithms averaged over $100$ realizations. Therefore, the regret of SCLUCB2 matches the proposed problem-dependent upper bound in \cite{amani2019linear}.
 
 \begin{figure}
     \centering
          \includegraphics[width=0.5\textwidth]{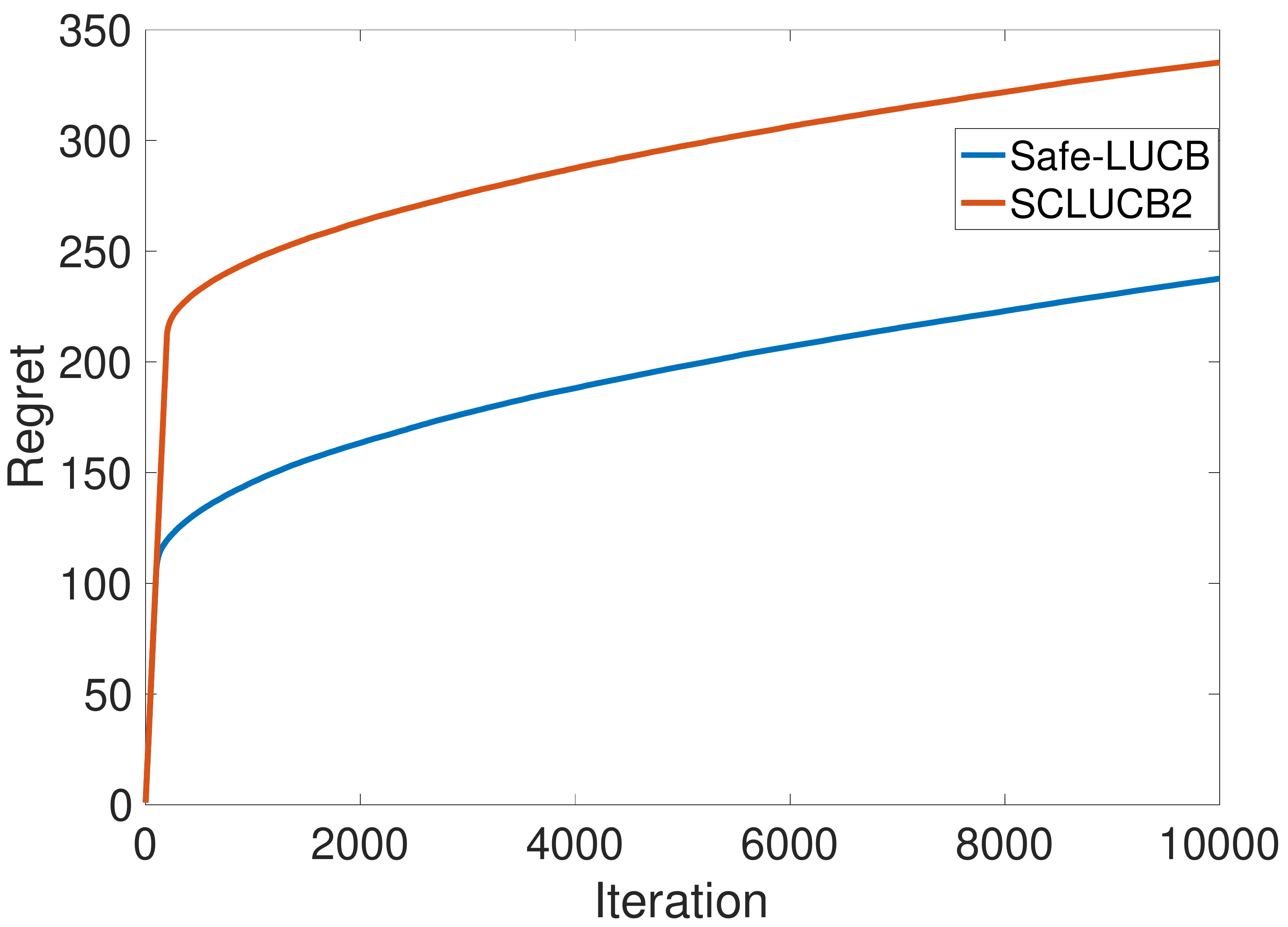}
         \caption{ Cumulative regret of SCLUCB2 versus Safe-LUCB in \cite{amani2019linear} averaged over $100$ realizations. }\label{fig:comparinfwithsafelucb}
   \end{figure}

\end{document}